\def\eqref#1{equation~\ref{#1}}
\def\1{\bm{1}}
\def\eps{{\epsilon}}
\def\rt{{\textnormal{t}}}
\def\rx{{\textnormal{x}}}
\def\ry{{\textnormal{y}}}
\def\rz{{\textnormal{z}}}
\DeclareMathAlphabet{\mathsfit}{\encodingdefault}{\sfdefault}{m}{sl}
\SetMathAlphabet{\mathsfit}{bold}{\encodingdefault}{\sfdefault}{bx}{n}
\def\sX{{\mathbb{X}}}
\def\sY{{\mathbb{Y}}}
\def\sZ{{\mathbb{Z}}}
\DeclareMathOperator{\diag}{diag}
\theoremstyle{plain}
\newtheorem{theorem}{Theorem}
\newtheorem{proposition}[theorem]{Proposition}
\newtheorem{lemma}[theorem]{Lemma}
\newtheorem{corollary}[theorem]{Corollary}
\newtheorem{remark}[theorem]{Remark}
\theoremstyle{definition}
\newcommand{\breakcell}[2][c]{%
  \begin{tabular}[#1]{@{}c@{}}#2\end{tabular}}
\definecolor{cardinal}{rgb}{0.77, 0.12, 0.23}
\newcommand{\yuji}[1]{\textcolor{black}{#1}}
\newcommand{\revision}[1]{\textcolor{black}{#1}}
\newcommand{\revisionnew}[1]{\textcolor{black}{#1}}
\icmltitlerunning{Improving Fair Training under Correlation Shifts}
\begin{document}

\twocolumn[
\icmltitle{Improving Fair Training under Correlation Shifts}

% It is OKAY to include author information, even for blind
% submissions: the style file will automatically remove it for you
% unless you've provided the [accepted] option to the icml2022
% package.

% List of affiliations: The first argument should be a (short)
% identifier you will use later to specify author affiliations
% Academic affiliations should list Department, University, City, Region, Country
% Industry affiliations should list Company, City, Region, Country

% You can specify symbols, otherwise they are numbered in order.
% Ideally, you should not use this facility. Affiliations will be numbered
% in order of appearance and this is the preferred way.
\icmlsetsymbol{equal}{*}

\begin{icmlauthorlist}
\icmlauthor{Yuji Roh}{kaist}
\icmlauthor{Kangwook Lee}{wisconsin}
\icmlauthor{Steven Euijong Whang}{kaist}
\icmlauthor{Changho Suh}{kaist}
% \icmlauthor{Firstname1 Lastname1}{equal,yyy}
% \icmlauthor{Firstname2 Lastname2}{equal,yyy,comp}
% \icmlauthor{Firstname3 Lastname3}{comp}
% \icmlauthor{Firstname4 Lastname4}{sch}
% \icmlauthor{Firstname5 Lastname5}{yyy}
% \icmlauthor{Firstname6 Lastname6}{sch,yyy,comp}
% \icmlauthor{Firstname7 Lastname7}{comp}
% %\icmlauthor{}{sch}
% \icmlauthor{Firstname8 Lastname8}{sch}
% \icmlauthor{Firstname8 Lastname8}{yyy,comp}
% %\icmlauthor{}{sch}
% %\icmlauthor{}{sch}
\end{icmlauthorlist}

\icmlaffiliation{kaist}{Department of Electrical Engineering, KAIST}
\icmlaffiliation{wisconsin}{Department of Electrical and Computer Engineering, University of Wisconsin-Madison}
% \icmlaffiliation{yyy}{Department of XXX, University of YYY, Location, Country}
% \icmlaffiliation{comp}{Company Name, Location, Country}
% \icmlaffiliation{sch}{School of ZZZ, Institute of WWW, Location, Country}

\icmlcorrespondingauthor{Steven E. Whang}{swhang@kaist.ac.kr}
% \icmlcorrespondingauthor{Firstname1 Lastname1}{first1.last1@xxx.edu}
% \icmlcorrespondingauthor{Firstname2 Lastname2}{first2.last2@www.uk}

% You may provide any keywords that you
% find helpful for describing your paper; these are used to populate
% the "keywords" metadata in the PDF but will not be shown in the document
\icmlkeywords{Machine Learning, ICML}

\vskip 0.3in
]

% this must go after the closing bracket ] following \twocolumn[ ...

% This command actually creates the footnote in the first column
% listing the affiliations and the copyright notice.
% The command takes one argument, which is text to display at the start of the footnote.
% The \icmlEqualContribution command is standard text for equal contribution.
% Remove it (just {}) if you do not need this facility.

\printAffiliationsAndNotice{}  % leave blank if no need to mention equal contribution
% \printAffiliationsAndNotice{\icmlEqualContribution} % otherwise use the standard text.

\begin{abstract}
Model fairness is an essential element for Trustworthy AI.
While many techniques for model fairness have been proposed, most of them assume that the training and deployment data distributions are identical, which is often not true in practice.
In particular, when the bias between labels and sensitive groups changes, the fairness of the trained model is directly influenced and can worsen.
We make two contributions for solving this problem. First, we analytically show that existing in-processing fair algorithms have fundamental limits in accuracy and \revision{group} fairness. We introduce the notion of \textit{correlation shifts}, which can \yuji{explicitly} capture the change of the above bias. 
Second, we propose a novel pre-processing step that samples the input data to reduce correlation shifts and thus enables the in-processing approaches to overcome their limitations. We formulate an optimization problem for adjusting the data ratio among labels and sensitive groups to reflect the shifted correlation.
A key \revision{benefit} of our approach lies in \textit{decoupling} the roles of pre- and in-processing approaches: correlation adjustment via pre-processing and unfairness mitigation on the processed data via in-processing.
Experiments show that our framework effectively improves existing in-processing fair algorithms w.r.t.\@ accuracy and fairness, both on synthetic and real datasets.

\end{abstract}

\section{Introduction}
\label{sec:intro}
% \vspace{-0.2cm}

% Model fairness is becoming indispensable in a wide range of AI applications. 
% Indeed, various fairness issues have been found in numerous applications including criminal justice, loan approval, and marketing.
% Many researchers thus aim to build fair machine learning models that do not discriminate against specific groups such as gender, race, or age~\citep{DBLP:conf/kdd/FeldmanFMSV15, DBLP:conf/nips/HardtPNS16}. Indeed, various fairness issues have been found in numerous applications including criminal justice, loan approval, and marketing.
% Many researchers thus aim to build fair machine learning models that do not discriminate against specific groups such as gender, race, or age~\citep{DBLP:conf/kdd/FeldmanFMSV15, DBLP:conf/nips/HardtPNS16}.
% There are three prominent fairness approaches: pre-processing, where training data is debiased; in-processing, where model training is tailored for fairness; and post-processing, where the trained model's output is modified to satisfy fairness -- see more related works discussed in Sec.~\ref{sec:relatedwork}.

Model fairness is becoming indispensable in many \revision{artificial intelligence (AI)} applications to prevent discrimination against specific groups such as gender, race, or age~\citep{DBLP:conf/kdd/FeldmanFMSV15, DBLP:conf/nips/HardtPNS16} \revision{or individuals~\citep{Dwork:2012:FTA:2090236.2090255}. In this work, we focus on group fairness, and}
there are three prominent \revision{group} fairness approaches: pre-processing, where training data is debiased; in-processing, where model training is tailored for fairness; and post-processing, where the trained model's output is modified to satisfy fairness -- see more related works discussed in Sec.~\ref{sec:relatedwork}.
% As a result, various fairness approaches have been proposed and can be categorized into 

While fairness in-processing approaches are commonly used to mitigate unfairness, most of them make the limiting assumption that the training and deployment data distributions are the same~\citep{DBLP:conf/aistats/ZafarVGG17, DBLP:conf/aies/ZhangLM18, roh2021fairbatch}.
However, the two distributions are usually different, especially in terms of data biases~\citep{NEURIPS2019_373e4c5d, maity2021does}.
For example, a recent work shows that the bias amounts likely differ between previously collected data and recently collected data~\citep{ding2021retiring}.
\yuji{Moreover, when the data bias changes, the fairness and accuracy of the trained model are now unpredictable at deployment, as the above assumption is broken.}
% Also, as many researchers are committed to mitigating social biases, the data collected in the future are expected to contain fewer biases than previously collected data~\citep{NEURIPS2019_373e4c5d}.
% Here, the data bias describes how much a label $\ry$ and sensitive group attribute $\rz$ are related. For example, in a loan approval application, if gender ($\rz$) has an impact on giving loans ($\ry$), the decision is considered to be biased by gender.

\yuji{In this work, we introduce the notion of \textit{correlation shifts} between the label $\ry$ and group attribute $\rz$ in the data to systematically address the data bias changes. Although several works have been recently proposed to investigate fair training on different types of distribution shifts, including covariate and concept shifts~\citep{singh2021fairness, mishler2022fair}, they usually do not explicitly consider bias changes between $\ry$ and $\rz$. In comparison, our correlation shift enables us to theoretically analyze how exactly bias changes affect fair training -- see how correlation shift compares with other types of distribution shifts in Sec.~\ref{sec:relatedwork}.}

% However, when the data bias changes, the fairness and accuracy of the trained model are now unpredictable at deployment time, as the key assumption above is now broken. 
% To theoretically analyze the effects of the bias change, we introduce the notion of \textit{correlation shifts} between the label $\ry$ and sensitive group attribute $\rz$ in the data. 
% Data bias implies how much $\ry$ and $\rz$ are closely related, and we use Pearson's correlation coefficient to capture it -- see details in Sec.~\ref{sec:correlationshift}.

% In this work, we make two contributions: (1) analyzing fundamental accuracy and fairness limits of in-processing approaches using the notion of correlation in the data and (2) designing a pre-processing step to boost the performances of in-processing approaches under the correlation shifts.

\yuji{For fair training under correlation shifts, we first 1) analyze the fundamental accuracy and fairness limits of in-processing approaches with the fixed distribution assumption using the notion of correlation in the data and then 2) design a novel pre-processing step to boost the performances of in-processing approaches under the correlation shifts.}
\yuji{We show} that existing in-processing fair algorithms are indeed limited by the training distribution and may perform poorly on the deployment distribution. % in terms of fairness and accuracy.
In particular, a high ($\ry$, $\rz$)-correlation results in a poor accuracy-fairness tradeoff for any fair training.
Therefore, as most in-processing fair algorithms assume identical training and deployment distributions, there is no guarantee their performances on the training data carry over to the deployment data. 
Based on the theoretical analysis, we propose a pre-processing step for reducing the shifted correlation by taking samples of ($\ry, \rz$)-classes. 
\revision{In a streaming setup, one can estimate a possible range of shifted correlations. 
Using this range,} we solve an optimization problem that finds the new data ratio among ($\ry, \rz$)-classes to adjust the correlation for the shift. 
% Optionally, we can also minimize the distance between the training and pre-processed data distributions. 
\revision{The new sampled data is then used as the input of any in-processing approach, giving it a better chance to perform well.}

\begin{figure}[t]
\centering
\vspace{-0.05cm}
\includegraphics[width=0.9\columnwidth,trim=0cm 0.3cm 0cm 0cm]{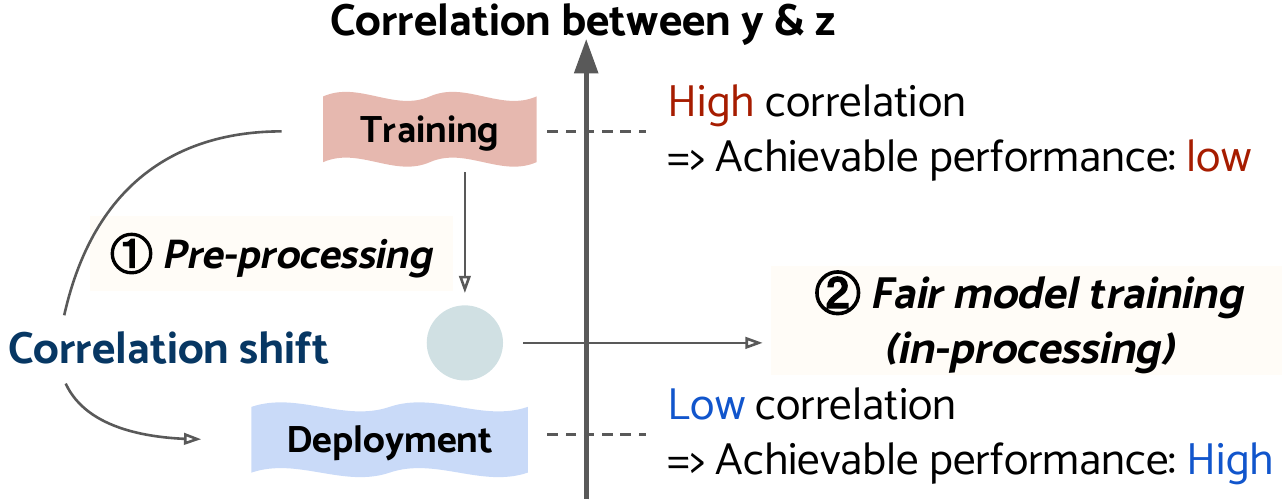}
\vspace{-0.3cm}
\caption{A high-level workflow under correlation shifts. The central axis represents the correlation between the label $\ry$ and sensitive group attribute $\rz$. The correlation of training data is usually higher than that of deployment data. In Sec.~\ref{sec:correlationshift}, we show that the correlation determines the achievable performance of fair training.
% (e.g., high correlation results in low achievable performance). 
Thus, we first run our pre-processing framework and then apply existing fair algorithms on the processed data to address the correlation shift and improve the performances of fair training.}
\vspace{-0.45cm}
\label{fig:overall}
\end{figure}

\textcolor{black}{A key advantage} of our framework is \textit{the decoupling of pre-processing and in-processing for unfairness mitigation} where the pre-processing adjusts the correlation while the in-processing performs the rest of the unfairness mitigation, as described in Figure~\ref{fig:overall}.
We note that our pre-processing aims to boost the performances of in-processing approaches based on our theoretical analysis, whereas existing pre-processing approaches for fairness simply remove the bias in the data and are not designed to explicitly benefit the in-processing approaches -- see Sec.~\ref{sec:accfair} for details. 
Our framework thus takes the best of both worlds of \textcolor{black}{pre- and in-processings} where (1) pre-processing solves the data problems, and (2) in-processing performs its best on the improved data.

In our experiments, we verify our theoretical results and demonstrate how our framework outperforms state-of-the-art pre-processing and in-processing baselines.
Experiments on both synthetic and real-world datasets show that our framework effectively improves the accuracy and fairness performances of the state-of-the-art in-processing approaches~\citep{DBLP:conf/aistats/ZafarVGG17, DBLP:conf/aies/ZhangLM18, roh2021fairbatch} under correlation shifts. Also, our framework performs better than two-step baselines that first run an existing pre-processing approach~\citep{DBLP:journals/kais/KamiranC11} and then an in-processing approach.
\revision{We also show how to reasonably estimate correlation shifts and that our framework is robust against incorrect estimates.}
% \textcolor{black}{Moreover, our framework is still beneficial when the range of the shifted correlations is wrongly specified.}

\vspace{-0.35cm}
\paragraph{Summary of Contributions} \textbf{(1)} We introduce the notion of correlation shifts, \revision{which is key to improving fair training against data bias changes.} \textbf{(2)} Using the notion of correlation, we theoretically show that existing in-processing fair algorithms are limited by the training distribution and may perform poorly on the deployment distribution. \textbf{(3)} \revision{Based on the theoretical analysis,} we propose a novel pre-processing step to boost the performances of fair in-processing approaches. \textbf{(4)} We demonstrate that our framework effectively improves the performances of the state-of-the-art fair algorithms under correlation shifts, \revision{even if the correlation estimates are inaccurate.}

% \vspace{-0.3cm}
% \paragraph{\revision{Key Assumption}} 
% \revision{In distribution shift studies, it is common to assume that the training and deployment data distributions differ only in specific aspects (e.g., only label distribution is changed). Without such an assumption, the training and deployment data distributions can be arbitrarily far apart, and there is no way to infer the deployment distribution via the training samples~\citep{huang2006correcting}. In our framework, we focus on the distribution changes of $\ry$ and $\rz$ and do not explicitly model the change of the input feature $\rx$ distribution. Our framework thus works best when the $\rx$ distribution does not change, but we empirically show that our framework performs well in a real-world scenario where the $\rx$ distribution does shift (Sec.~\ref{sec:experiments}).}

\vspace{-0.3cm}
\paragraph{\yuji{Notation \& Fairness Metrics}} 
Let $\theta$ be the model weights, $\rx \in \sX$ be the input feature to the model, $\ry \in \sY$ be the true label, and $\hat{\ry} \in \sY$ be the predicted label where $\hat{\ry}$ is a function of ($\rx$, $\theta$). Let $\rz \in \sZ$ be a sensitive group attribute, e.g., gender or race. We assume a binary setting ($\sY = \sZ = \{0,1\}$).
We focus on the prominent group fairness metrics, demographic parity (DP)~\citep{DBLP:conf/kdd/FeldmanFMSV15} and equalized odds (EO)~\citep{DBLP:conf/nips/HardtPNS16}, where DP is achieved when the positive prediction rates are the same for the groups (i.e., $\Pr(\hat{\ry}{=}1|\rz{=}1) {=} \Pr(\hat{\ry}{=}1|\rz{=}0)$) and EO is achieved when the label-wise accuracies are the same for the groups (i.e., $\Pr(\hat{\ry}{=}y|\ry{=}y, \rz{=}1) {=} \Pr(\hat{\ry}{=}y|\ry{=}y, \rz{=}0), \forall y \in \{0,1\}$). 

\vspace{-0.1cm}
\section{\resizebox{0.95\hsize}{!}{Limitations of Prior Work \& Problem Setup}}
\label{sec:limitations}
\vspace{-0.1cm}

%\paragraph{Limitations of Fixed Distribution Assumption} 
Most fair in-processing approaches assume that the training and deployment distributions are the same, which means that they assume the same level of bias as well.
However, data bias may shift over time as confirmed by recent studies~\citep{NEURIPS2019_373e4c5d, maity2021does, ding2021retiring}, which means that the deployment data may actually have a different bias than the training data. 

% In fair training, the data bias reflects the relationship between a label $\ry$ and group attribute $\rz$. For example, if all positive labels are in the same group, the data can be considered highly biased. Conversely, if the labels are randomly assigned to \textcolor{black}{all} groups, the data can be considered unbiased.

A bias change in the deployment data may have an adverse effect on a trained model's performance. \revisionnew{For example, suppose we train a perfectly fair model on the training data while sacrificing accuracy. Here, if the bias between the label and group changes in the deployment distribution, the trained model's fairness can worsen while the accuracy still remains imperfect. The underlying problem is that the model was trained with a different bias in mind -- see a detailed toy example to show this limitation in Sec.~\ref{appendix:toyexample}. Thus, we aim to overcome this issue under the following setup.}

\vspace{-0.3cm}
\paragraph{Problem Setup}
\revisionnew{Let $D_\text{train}$ and $D_\text{deploy}$ be the training and deployment data, respectively. We assume access to the ($\rx$, $\ry$, $\rz$) values in $D_\text{train}$ and $m$ samples of the ($\ry$, $\rz$) values in $D_\text{deploy}$. Accessing some of the new data is a common and practical scenario for detecting drifts~\citep{lu2018learning} in a data stream. Here, we can detect whether there exists a bias change (shift) by comparing the ($\ry$, $\rz$) values in $D_\text{train}$ and a small number of ($\ry$, $\rz$) values in $D_\text{deploy}$. Our goal is to have high fairness and accuracy even when the bias shifts.}
% -- see more details on how to see the shift in Sec.~\ref{sec:optimization}.

\revisionnew{To this end, we first \textbf{1)} theoretically analyze why the bias change affects the performances of fair training by introducing the notion of correlation (\textbf{Sec.~\ref{sec:correlationshift}}) and then \textbf{2)} design a pre-processing step to address such bias changes and boost the performances of any fair in-processing approaches under the above problem setup (\textbf{Sec.~\ref{sec:framework}}).}

\section{Fair Training under Correlation Shifts}
\label{sec:correlationshift}
\vspace{-0.1cm}

\yuji{To systematically study the effects of data bias changes, we first analyze the achievable performances of fair training via a {\em correlation between y and z that can be used to explicitly measure data bias regarding sensitive groups} (Sec.~\ref{sec:fundamentallimits}) and discuss the limitations of fair in-processings when this correlation shifts (Sec.~\ref{sec:shift_effects}).
We define correlation as follows:}

% We investigate the achievable performances of fair training \textcolor{black}{via} a notion of correlation \textcolor{black}{(Sec.~\ref{sec:fundamentallimits})} and discuss the limitations of in-processing approaches for fairness when this correlation shifts \textcolor{black}{(Sec.~\ref{sec:shift_effects})}.
% To this end, we first introduce the notion of correlation to capture bias in the data as follows.

\vspace{-0.3cm}
\paragraph{($\ry, \rz$)-correlation}

Data bias can be represented via the correlation between the label $\ry$ and the sensitive group attribute $\rz$, where the correlation represents a statistical relationship between two random variables. We thus define {\em ($\ry, \rz$)-correlation} using Pearson's correlation coefficient~\citep{lee1988thirteen} $\rho_{\ry\rz} = \frac{Cov(\ry, \rz)}{\sigma(\ry)\sigma(\rz)}$, \textcolor{black}{which is known to effectively capture biases in real-world scenarios. Here,} $Cov(\cdot)$ is the covariance, and $\sigma(\cdot)$ is the standard deviation.
% We thus define {\em ($\ry, \rz$)-correlation} using Pearson's correlation coefficient~\citep{lee1988thirteen} $\rho_{\ry\rz} = \frac{Cov(\ry, \rz)}{\sigma(\ry)\sigma(\rz)}$, where $Cov(\cdot)$ is the covariance, and $\sigma(\cdot)$ is the standard deviation.
As we assume $\ry$ and $\rz$ are binary, we can express $\rho_{\ry\rz}$ as follows~\citep{cohen1975applied}:\vspace{-0.2cm}\\
% Since we assume $\ry$ and $\rz$ are binary, we can express $\rho_{\ry\rz}$ as follows~\citep{cohen1975applied}: %[Guilford, 1936 \& Cohen and Cohen, 1975]
\resizebox{\linewidth}{!}{
% \hspace{0.7cm}
\centering
\begin{minipage}{\linewidth}
\begin{align*}
    \rho_{\ry\rz} = 
    \frac{\Pr(\ry=1, \rz=1)\Pr(\ry=0, \rz=0) - \Pr(\ry=1, \rz=0)\Pr(\ry=0, \rz=1)}{\sqrt{\Pr(\ry=1)\Pr(\ry=0)\Pr(\rz=1)\Pr(\rz=0)}}.
\end{align*}
\end{minipage}
}

% \revisionnew{Here, the notion of correlation allows us to connect the data bias changes and the behaviors of fair training.}
%\revisionnew{This notion of correlation has a key role to connect the data bias changes and the behaviors of fair training.}
\revisionnew{This notion of correlation plays a key role in both analyzing the achievable performances of fair training (Sec.~\ref{sec:correlationshift}) and improving fair training against data bias changes (Sec.~\ref{sec:framework}).}
% \revisionnew{This notion of correlation plays a key role in improving fair training against data bias changes later in Sec.~\ref{sec:framework}.}

\vspace{-0.1cm}
\subsection{Achievable Performance Analysis via Correlation}
\label{sec:fundamentallimits}
\vspace{-0.1cm}

We identify the fundamental accuracy and fairness limits of fair training based on the ($\ry$, $\rz$)-correlation of the data. 
We first analyze the most common case of improving the fairness w.r.t. a single fairness metric. We then extend our analysis to the more complicated case of improving the fairness w.r.t. multiple metrics, which is important for capturing various social contexts, but has been seldom studied in the literature. 
In both cases, we show that the ($\ry$, $\rz$)-correlation determines the achievable performance of fair training.

\vspace{-0.15cm}
\paragraph{CASE 1 -- Fair Training w.r.t.\@ a Single Metric}
% \subsubsection{Fair Training w.r.t. a Single Metric}
% \label{sec:single_metric}
When improving group fairness, fair training is known to face an accuracy-fairness tradeoff, where the accuracy is sacrificed to make the model fairer. 
Recently, \citet{pmlr-v81-menon18a} investigate that the accuracy-fairness tradeoff w.r.t. demographic parity is affected by how much $\ry$ and $\rz$ are \textit{aligned} in the data. For example, if the $\ry$ and $\rz$ values are identical for all examples, achieving high fairness may require low accuracy. In contrast, if $\ry$ and $\rz$ are randomly set, fairness and accuracy can be achieved together. 
The following proposition shows this previous work's result.

\vspace{0.05cm}
\begin{proposition}[From \citet{pmlr-v81-menon18a}]
(Informal) When a model is trained w.r.t. demographic parity, a high alignment between $\ry$ and $\rz$ leads to a worse accuracy-fairness tradeoff.\label{prop:menon}
\end{proposition}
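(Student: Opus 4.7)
Since Proposition~\ref{prop:menon} is attributed to \citet{pmlr-v81-menon18a}, the plan is to invoke their formal characterization of the DP-constrained Bayes optimum and then translate their notion of ``alignment'' into our $(\ry,\rz)$-correlation. First, I would formalize the accuracy--fairness tradeoff as $A^{\star}(\tau) := \max_{\hat{\ry}}\Pr(\hat{\ry}=\ry)$ subject to $|\Pr(\hat{\ry}{=}1|\rz{=}1) - \Pr(\hat{\ry}{=}1|\rz{=}0)| \le \tau$. The proposition is then the qualitative claim that, with the marginals of $\ry$ and $\rz$ held fixed, $A^{\star}(\tau)$ decreases as $|\rho_{\ry\rz}|$ grows, for every $\tau\ge 0$.

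Second, I would apply the Menon--Williamson cost-sensitive reduction to pin down the structure of the DP-constrained optimum: any maximizer is a group-dependent threshold rule $\hat{\ry}(\rx) = \mathbf{1}[\eta(\rx) \ge t_{\rz}]$, where $\eta(\rx) := \Pr(\ry{=}1|\rx)$ and $t_0, t_1$ differ by a Lagrange multiplier $\lambda^{\star}\ge 0$ whose size is dictated by how tightly the DP constraint binds. The accuracy gap relative to the unconstrained Bayes classifier then equals the integral of $|\eta(\rx) - 1/2|$ over the region on which the group-adjusted thresholds force a prediction flip. The larger the unconstrained positive-rate disparity $\Delta := |\Pr(\ry{=}1|\rz{=}1) - \Pr(\ry{=}1|\rz{=}0)|$, the larger the flipping region needed to squeeze the predicted disparity below $\tau$, hence the larger the accuracy loss.

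Third, I would close the loop by linking $\Delta$ to $|\rho_{\ry\rz}|$: plugging into the binary closed form for Pearson correlation given at the start of Sec.~\ref{sec:correlationshift} and fixing the marginals $\Pr(\ry{=}1)$ and $\Pr(\rz{=}1)$, an elementary manipulation shows that $|\rho_{\ry\rz}|$ is monotone in $\Delta$. Composing the two monotonicities delivers the claim.

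The main obstacle is the monotonicity step inside the second paragraph. Menon--Williamson's quantitative bound is stated in terms of a total-variation-type gap between the per-group conditional distributions of $\eta(\rx)$ rather than the base-rate gap $\Delta$ directly, and connecting the two requires either (i) a mild stochastic-dominance assumption on these conditional distributions, or (ii) restricting attention to a parametric family of bias perturbations that fix the marginals and merely redistribute mass among $(\ry,\rz)$-classes. Option (ii) dovetails with the ``correlation shift'' setup developed later in Sec.~\ref{sec:framework}, and I would adopt it here; under this restriction the monotonicity of the flipping region in $\Delta$, and thus in $|\rho_{\ry\rz}|$, becomes immediate.
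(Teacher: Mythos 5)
The paper never proves Proposition~\ref{prop:menon}: it is stated as an informal restatement of Proposition~8 of \citet{pmlr-v81-menon18a}, and the paper's only ``proof'' is the citation itself. The translation of \emph{alignment} into $(\ry,\rz)$-correlation --- your third step --- is deliberately kept out of Proposition~\ref{prop:menon} and is instead the content of Lemma~\ref{lemma:proportional} and Corollary~\ref{corollary:dp}, carried out in Secs.~\ref{appendix:correlationproperty} and~\ref{appendix:singlemetric} by writing alignment as $\Pr(\ry{=}1|\rz{=}1)+\Pr(\ry{=}0|\rz{=}0) = 1 + \Pr(\ry{=}1|\rz{=}1)-\Pr(\ry{=}1|\rz{=}0)$ and invoking the fixed-marginals proportionality. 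So your proposal proves a strictly larger claim than the one labeled, and in doing so duplicates work the paper assigns to a separate lemma.

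As a reconstruction of the Menon--Williamson argument itself, your outline (DP-constrained optimum as a group-dependent threshold rule on $\eta(\rx)$; accuracy loss as the mass of the flipping region weighted by $|2\eta(\rx)-1|$) is the right skeleton. But the obstacle you flag in your final paragraph is the real crux, and neither of your proposed fixes fully closes it. Menon--Williamson's quantitative tradeoff is governed by the relationship between $\eta(\rx)=\Pr(\ry{=}1|\rx)$ and the group-membership probability $\Pr(\rz{=}1|\rx)$ as functions of $\rx$, not by the scalar base-rate gap $\Delta$; two distributions with identical $\Delta$ and identical marginals can have very different flipping-region costs depending on how $\eta$ is distributed within each group. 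Your option (ii) --- restricting to perturbations that only redistribute mass among $(\ry,\rz)$-classes --- does match the paper's pre-processing setup, but it does not by itself make the flipping-region monotonicity ``immediate''; you would still need to argue that such a redistribution moves the per-group conditional laws of $\eta$ monotonically (your option (i)'s stochastic-dominance condition), so the two options are not alternatives but two halves of the same missing step. The paper avoids this entirely by keeping the proposition informal and redefining alignment at the level of base rates; if you want a formal statement, you should either import Menon--Williamson's own hypotheses verbatim or state the stochastic-dominance assumption explicitly as part of the proposition.
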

\vspace{-0.15cm}

For our purposes, we infer a similar relationship using ($\ry, \rz$)-correlation based on Proposition~\ref{prop:menon}. The following lemma makes a connection between the ($\ry$, $\rz$)-correlation and the conditional probabilities of $\ry$ given $\rz$ under some conditions.
\begin{lemma}
If the marginal probabilities of $\ry$ and $\rz$ (i.e., $\Pr(\ry=y)$ and $\Pr(\rz=z)$) remain the same, the ($\ry, \rz$)-correlation $\rho_{\ry\rz}$ is proportional to the difference between the conditional probabilities of $\ry$ given different $\rz$ values (i.e., $\Pr(\ry=1|\rz=1) - \Pr(\ry=1|\rz=0)$).\label{lemma:proportional}
\end{lemma}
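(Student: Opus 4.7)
The plan is to prove the lemma by direct algebraic manipulation of the closed-form expression for $\rho_{\ry\rz}$ given in the excerpt, reducing its numerator to a product of a term depending only on the marginals of $\rz$ and the target difference of conditional probabilities. Since the statement only concerns the binary case, the identities $\Pr(\ry=0\mid \rz=z) = 1 - \Pr(\ry=1\mid \rz=z)$ and the chain-rule factorization $\Pr(\ry=i, \rz=j) = \Pr(\ry=i\mid \rz=j)\Pr(\rz=j)$ should be enough.

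First, I would substitute $\Pr(\ry=i,\rz=j) = \Pr(\ry=i\mid\rz=j)\Pr(\rz=j)$ into every joint probability appearing in the numerator of $\rho_{\ry\rz}$. This factors out $\Pr(\rz=1)\Pr(\rz=0)$ from both terms. Next, I would use the binary identity $\Pr(\ry=0\mid\rz=z) = 1 - \Pr(\ry=1\mid\rz=z)$ to rewrite the bracketed expression
\[
\Pr(\ry=1\mid\rz=1)\,\Pr(\ry=0\mid\rz=0) - \Pr(\ry=1\mid\rz=0)\,\Pr(\ry=0\mid\rz=1)
\]
as
\[
\Pr(\ry=1\mid\rz=1)\bigl(1-\Pr(\ry=1\mid\rz=0)\bigr) - \Pr(\ry=1\mid\rz=0)\bigl(1-\Pr(\ry=1\mid\rz=1)\bigr),
\]
and observe that the cross terms cancel, leaving exactly $\Pr(\ry=1\mid\rz=1) - \Pr(\ry=1\mid\rz=0)$.

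Putting these two steps together with the denominator gives
\[
\rho_{\ry\rz} = \sqrt{\frac{\Pr(\rz=1)\Pr(\rz=0)}{\Pr(\ry=1)\Pr(\ry=0)}}\,\bigl[\Pr(\ry=1\mid\rz=1) - \Pr(\ry=1\mid\rz=0)\bigr].
\]
Under the lemma's hypothesis, the marginals of $\ry$ and $\rz$ are held fixed, so the square-root prefactor is a positive constant independent of the joint distribution. Hence $\rho_{\ry\rz}$ is proportional to $\Pr(\ry=1\mid\rz=1) - \Pr(\ry=1\mid\rz=0)$, as claimed.

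There is no substantive obstacle here: the entire argument is a two-line algebraic identity once the joint probabilities are factored via the chain rule, and the binary assumption collapses the cross terms cleanly. The only care needed is to state explicitly that the proportionality constant $\sqrt{\Pr(\rz=1)\Pr(\rz=0)/(\Pr(\ry=1)\Pr(\ry=0))}$ is strictly positive (which follows from both marginals being nondegenerate, an implicit assumption since otherwise $\rho_{\ry\rz}$ itself is undefined) so that ``proportional'' is meaningful, and to emphasize that this constant is invariant precisely because the marginal probabilities are assumed unchanged.
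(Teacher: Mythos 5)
Your proposal is correct and follows essentially the same route as the paper's proof: factor each joint probability as $\Pr(\ry=i\mid\rz=j)\Pr(\rz=j)$ to pull $\Pr(\rz=1)\Pr(\rz=0)$ out of the numerator, apply $\Pr(\ry=0\mid\rz=z)=1-\Pr(\ry=1\mid\rz=z)$, and cancel the cross terms. The only (welcome) difference is that you record the explicit proportionality constant $\sqrt{\Pr(\rz=1)\Pr(\rz=0)/(\Pr(\ry=1)\Pr(\ry=0))}$ and note its positivity, whereas the paper only tracks proportionality step by step.
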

\vspace{-0.15cm}

The proof for Lemma~\ref{lemma:proportional} can be found in Sec.~\ref{appendix:correlationproperty}. 
By applying Lemma~\ref{lemma:proportional} to Proposition~\ref{prop:menon}, we can infer that the ($\ry, \rz$)-correlation also determines the accuracy-fairness tradeoff under certain conditions. 
According to the previous work~\citep{pmlr-v81-menon18a}, alignment is defined as how many examples in each group have a specific label. We can thus measure alignment as $\Pr(\ry=y|\rz=1)+\Pr(\ry=y'|\rz=0)$ where $y \neq y'$ and $y, y' \in \{0,1\}$. Then, we can convert this term into the difference between the conditional probabilities of $\ry$ given different $\rz$ values, which is used in Lemma~\ref{lemma:proportional} -- see details in Sec.~\ref{appendix:singlemetric}.
As a result, we derive the following corollary, which shows that the ($\ry, \rz$)-correlation determines the achievable accuracy-fairness tradeoff.
\begin{corollary}[]
\label{corollary:dp}
When a model is trained w.r.t. demographic parity, and the marginal probabilities of $\ry$ and $\rz$ remain the same, the achievable accuracy-fairness tradeoff of the model is determined by the ($\ry, \rz$)-correlation. The higher the correlation, the worse the accuracy-fairness tradeoff.\label{eq:corollary_single}
% A higher ($\ry, \rz$)-correlation makes a worse accuracy-fairness tradeoff.
\end{corollary}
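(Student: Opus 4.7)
The plan is to obtain Corollary~\ref{corollary:dp} by chaining Proposition~\ref{prop:menon} with Lemma~\ref{lemma:proportional} through the quantity $\Pr(\ry{=}1\mid\rz{=}1) - \Pr(\ry{=}1\mid\rz{=}0)$, which will act as a bridge between \emph{alignment} (the driver in Proposition~\ref{prop:menon}) and \emph{correlation} (the driver we want). First, I would recall the definition of alignment noted just above the statement of the corollary: the alignment between $\ry$ and $\rz$ is measured by $\Pr(\ry{=}y\mid\rz{=}1) + \Pr(\ry{=}y'\mid\rz{=}0)$ with $y\neq y'$ and $y,y'\in\{0,1\}$. Taking the representative case $y{=}1,\,y'{=}0$ and using $\Pr(\ry{=}0\mid\rz{=}0) = 1 - \Pr(\ry{=}1\mid\rz{=}0)$, this alignment score equals
\begin{equation*}
1 + \bigl[\Pr(\ry{=}1\mid\rz{=}1) - \Pr(\ry{=}1\mid\rz{=}0)\bigr],
\end{equation*}
so alignment is a strictly increasing affine function of the conditional-probability difference $\Delta := \Pr(\ry{=}1\mid\rz{=}1) - \Pr(\ry{=}1\mid\rz{=}0)$ (the other choice of $(y,y')$ just flips the sign, which I handle by working with $|\Delta|$).

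Next, I would invoke Lemma~\ref{lemma:proportional}: under the stated assumption that the marginals $\Pr(\ry{=}y)$ and $\Pr(\rz{=}z)$ remain fixed, $\rho_{\ry\rz}$ is proportional to $\Delta$ with a positive proportionality constant $1/\sqrt{\Pr(\ry{=}1)\Pr(\ry{=}0)\Pr(\rz{=}1)\Pr(\rz{=}0)}\cdot \Pr(\rz{=}1)\Pr(\rz{=}0)$ that depends only on the (fixed) marginals. Composing the two monotone maps yields the key reduction: \emph{under fixed marginals, alignment is a monotonically increasing function of $|\rho_{\ry\rz}|$}. Hence ``higher alignment'' in the sense of Proposition~\ref{prop:menon} is equivalent to ``higher $|\rho_{\ry\rz}|$.''

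Finally, I would plug this equivalence into Proposition~\ref{prop:menon}: since a higher alignment implies a worse achievable accuracy--fairness tradeoff under demographic parity, and since alignment and $|\rho_{\ry\rz}|$ move together under the fixed-marginal hypothesis, a higher $(\ry,\rz)$-correlation implies a worse achievable tradeoff. This gives exactly Corollary~\ref{corollary:dp}, and in particular shows that the achievable tradeoff is a function of $\rho_{\ry\rz}$ alone (given the marginals), so the correlation \emph{determines} the tradeoff in the sense claimed.

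The main obstacle I expect is the translation step between Menon's notion of alignment and $\Delta$: Proposition~\ref{prop:menon} is stated informally here, and the precise ``alignment'' object in \citet{pmlr-v81-menon18a} is phrased as a sum of class--group probabilities rather than a conditional-probability gap, so care is needed to (i) confirm the two definitions agree up to the monotone affine transformation above, and (ii) handle both signs consistently so that the conclusion is really about $|\rho_{\ry\rz}|$ (or, equivalently, is invariant to relabeling $\rz \leftrightarrow 1-\rz$). Once this bookkeeping is done, the rest of the argument is a direct composition of Lemma~\ref{lemma:proportional} and Proposition~\ref{prop:menon}.
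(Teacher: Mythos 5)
Your proposal is correct and takes essentially the same route as the paper's own proof (Sec.~\ref{appendix:singlemetric}): both rewrite Menon's alignment as $1 + \bigl[\Pr(\ry{=}1\mid\rz{=}1) - \Pr(\ry{=}1\mid\rz{=}0)\bigr]$ and then invoke Lemma~\ref{lemma:proportional} to replace the conditional-probability gap by the $(\ry,\rz)$-correlation under the fixed-marginals assumption. The only difference is your explicit bookkeeping with $|\Delta|$ and $|\rho_{\ry\rz}|$ to handle the choice of $(y,y')$, which the paper handles implicitly by fixing the representative case $y{=}1$.
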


\begin{remark}\label{remark:conditions}
\textcolor{black}{We can relax the assumption that marginal probabilities are fixed in Corollary~\ref{corollary:dp}. 
When the marginal probabilities of $\ry$ and $\rz$ change up to $\gamma_\ry$ and $\gamma_\rz$, respectively, 
% (i.e., $|\Pr'(\ry)-\Pr(\ry)|\leq \gamma_\ry$ and $|\Pr'(\rz)-\Pr(\rz)|\leq \gamma_\rz$),
Corollary~\ref{corollary:dp} can be generalized as follows: the achievable accuracy-fairness tradeoff is determined by $\rho_{\ry, \rz} \cdot \eta$, where $\rho_{\ry, \rz}$ is the ($\ry, \rz$)-correlation and $\eta \in [\sqrt{\tfrac{\Pr(\ry=1)-\gamma_\ry - (\Pr(\ry=1)+\gamma_\ry)^2}{\Pr(\rz=1)+\gamma_\rz - (\Pr(\rz=1)-\gamma_\rz)^2}}, \sqrt{\tfrac{\Pr(\ry=1)+\gamma_\ry - (\Pr(\ry=1)-\gamma_\ry)^2}{\Pr(\rz=1)-\gamma_\rz - (\Pr(\rz=1)+\gamma_\rz)^2}}]$. 
Thus, the higher the $\rho_{\ry, \rz} \cdot \eta$, the worse the tradeoff.
% Thus, the higher the $\rho_{\ry, \rz} \cdot \eta$, the worse the accuracy-fairness tradeoff. 
In our framework, we actually allow the marginal probabilities of $\ry$ and $\rz$ to change up to $\gamma_\ry$ and $\gamma_\rz$ -- see details in Sec.~\ref{sec:optimization}.}
\end{remark}

% \vspace{-0.2cm}
\vspace{-0.25cm}
\paragraph{Simulation}
We now confirm our theoretical \revision{analysis} via a simulation -- see details on the setting in Sec.~\ref{appendix:simulation_setting}.
The left plot in Figure~\ref{fig:simulation} shows the accuracy-unfairness performances of classifiers on two synthetic datasets with low and high correlations when using DP for measuring fairness (the plot on the right will be explained later). 
We measure unfairness where a lower value indicates better fairness (i.e., perfectly fair when 0)
% (i.e., perfectly fair when the value is 0)
-- see the exact metrics in Sec.~\ref{sec:experiments}.
We generate various synthetic classifiers on the two datasets to show the full range of possible model performances. The blue dots (red crosses) are the classifiers on the dataset with low (high) correlation. 
As a result, low correlation results in better accuracy-fairness tradeoffs (i.e., close to the bottom right). We will discuss the black stars and squares in Sec.~\ref{sec:shift_effects}.

\begin{figure}[t]
\centering
% \vspace{-0.1cm}
\includegraphics[width=0.98\columnwidth,trim=0cm 0.3cm 0cm 0cm]{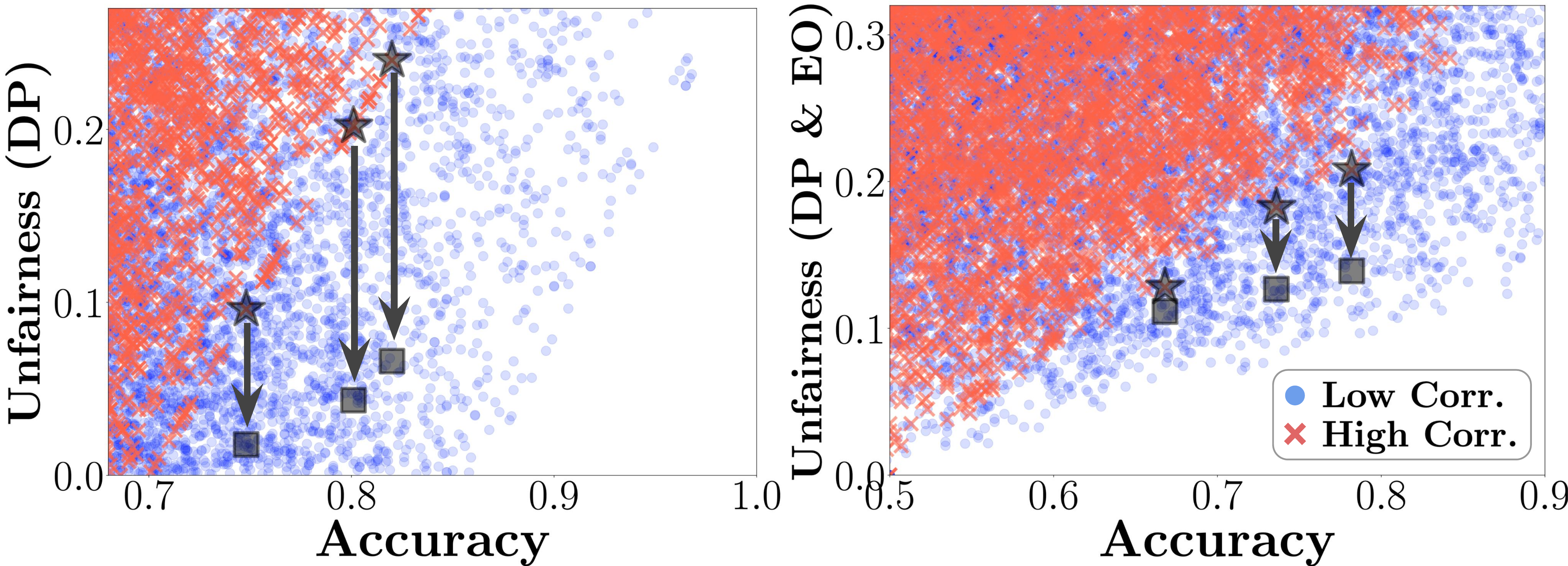}
\vspace{-0.4cm}
\caption{\small Simulation results of accuracy-unfairness (left: DP and right: DP \& EO) performances of various classifiers on two synthetic datasets that have low and high ($\ry$, $\rz$)-correlations. 
\revision{Each blue dot (red cross) indicates a single classifier's performance on the low (high) correlation data.} 
% The blue dots (red crosses) indicate the classifiers' performances on the low (high) correlation data. 
The lower the unfairness value, the better. For each dataset, we generate enough classifiers to show the full range of possible performances. 
As a result, low correlation enables classifiers to attain better accuracy-fairness tradeoffs (i.e., close to the bottom right), regardless of supporting single or multiple fairness metrics.
% In addition, the black squares show that the optimal classifiers at the high correlation data cannot achieve optimal performances at the low correlation data.
Also, the optimal classifiers trained on high correlation data (black stars) have suboptimal performances on the low correlation data (black squares).}
\vspace{-0.3cm}
\label{fig:simulation}
\end{figure}

\begin{remark}\label{remark:single_eo}
We note that Corollary~\ref{eq:corollary_single} does not necessarily apply for equalized odds (EO), which is achieved when the accuracies conditioned on the true label are the same for the groups.
In theory, a perfect classifier can achieve perfect fairness w.r.t. EO, so the ($\ry$, $\rz$)-correlation does not determine the limits of the accuracy-fairness tradeoff w.r.t. EO. However, classifiers are not perfect in practice, and we empirically observe that higher ($\ry$, $\rz$)-correlation leads to a worse accuracy-fairness tradeoff w.r.t. EO as well -- see the empirical results in Sec.~\ref{appendix:eo_empirical}.
\end{remark}

\vspace{-0.25cm}
\paragraph{\resizebox{0.84\hsize}{!}{CASE 2 -- Fair Training w.r.t.\@ Multiple Metrics}}
% \subsubsection{Fair Training w.r.t. Multiple Metrics}
% \label{sec:single_metric}
\hspace{-0.2cm}
Beyond the single metric \revision{case}, we now extend our analysis to support multiple fairness metrics together. 
Although supporting multiple metrics is necessary to address fairness under various social contexts, most fairness works do not address this problem. A major challenge is that group fairness metrics are known to be mutually exclusive, which means that the metrics cannot be perfectly satisfied together, unless the data is completely unbiased~\citep{barocas-hardt-narayanan, DBLP:conf/innovations/KleinbergMR17}.
Interestingly, our ($\ry, \rz$)-correlation provides an opportunity to support multiple metrics as much as possible in a principled fashion.
We focus on supporting two metrics and leave the support of more metrics as future work.
% Interestingly, our ($\ry, \rz$)-correlation provides an opportunity to support multiple fairness metrics together as much as possible in a principled fashion.
% We focus on supporting two metrics and leave the support of more metrics as future work.

We first analyze when a model can improve both DP and EO to get hints on the relationship between the ($\ry, \rz$)-correlation and improving fairness w.r.t. two metrics.
% We first analyze when a model can improve both demographic parity (DP) and equalized odds (EO) to get hints on the relationship between the ($\ry, \rz$)-correlation and improving fairness w.r.t. two metrics.
Here, we focus on the $\varepsilon$-fairness of both metrics, which is a relaxed version of perfect fairness~\citep{barocas-hardt-narayanan}, where $\varepsilon$ indicates the unfairness level of the model. For example, we can define $\varepsilon$-DP as $|\Pr(\hat{\ry}{=}1|\rz{=}1) {-} \Pr(\hat{\ry}{=}1|\rz{=}0)| \leq \varepsilon$ and $\varepsilon$-EO as \resizebox{0.81\hsize}{!}{$|\Pr(\hat{\ry}{=}y|\ry{=}y, \rz{=}1) {-} \Pr(\hat{\ry}{=}y|\ry{=}y, \rz{=}0)| \leq \varepsilon$}. Here, $\varepsilon \in [0,1]$ and a lower $\varepsilon$ means higher fairness.
The \revision{next} propo-sition shows when a model can achieve both $\varepsilon$-DP and $\varepsilon$-EO.
% The following proposition shows when a model can achieve $\varepsilon$-fairness w.r.t. both DP and EO.

\begin{proposition}[$\varepsilon$-DP \& $\varepsilon$-EO]\label{prop:dpeo}
Let a model achieve both $\varepsilon$-DP and $\varepsilon$-EO. Then, the following inequality holds:
$\{ |\Pr(\ry=y|\rz=z)-\Pr(\ry=y|\rz=z')| ~\cdot$ $|\Pr(\hat{\ry}=y|\ry=y,\rz=z)-\Pr(\hat{\ry}=y|\ry=y',\rz=z)| \} \leq 2\varepsilon,~ y \neq y', z \neq z', y, y', z, z' \in \{0,1\}$.
\end{proposition}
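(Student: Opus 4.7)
The plan is to start from the law of total probability, which expresses the group-wise positive prediction rate in terms of the label-wise accuracies and the conditional label probabilities. Writing $p_z := \Pr(\ry{=}1\mid\rz{=}z)$ and $a_{y,z} := \Pr(\hat{\ry}{=}y\mid\ry{=}y, \rz{=}z)$, one has
\begin{align*}
\Pr(\hat{\ry}{=}1\mid\rz{=}z) = a_{1,z}\, p_z + (1-a_{0,z})(1-p_z).
\end{align*}
So the DP gap $\Delta := \Pr(\hat{\ry}{=}1\mid\rz{=}1) - \Pr(\hat{\ry}{=}1\mid\rz{=}0)$ becomes an expression in the six quantities $p_0, p_1, a_{0,0}, a_{0,1}, a_{1,0}, a_{1,1}$, which are constrained by $\varepsilon$-DP ($|\Delta|\le\varepsilon$) and $\varepsilon$-EO ($|a_{y,1}-a_{y,0}|\le\varepsilon$ for $y\in\{0,1\}$).

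The next step is an add-and-subtract manipulation designed to isolate exactly the product that appears in the statement. Concretely, I would rewrite $\Delta$ as
\begin{align*}
\Delta &= (a_{1,1}-a_{1,0})\,p_1 + (a_{0,0}-a_{0,1})(1-p_1) \\
&\quad + (p_1 - p_0)\bigl(a_{1,0} + a_{0,0} - 1\bigr),
\end{align*}
which can be verified by expanding. The first two summands are each bounded in absolute value by $\varepsilon$ weighted by $p_1$ and $1-p_1$ respectively, thanks to $\varepsilon$-EO; combined they contribute at most $\varepsilon$. The final summand is precisely the product to bound, since
\begin{align*}
|p_1 - p_0| &= |\Pr(\ry{=}0\mid\rz{=}0) - \Pr(\ry{=}0\mid\rz{=}1)|,\\
|a_{1,0}{+}a_{0,0}{-}1| &= |\Pr(\hat{\ry}{=}0\mid\ry{=}0,\rz{=}0) - \Pr(\hat{\ry}{=}0\mid\ry{=}1,\rz{=}0)|.
\end{align*}
Applying the triangle inequality to the identity for $\Delta$ and using $|\Delta|\le\varepsilon$ yields the bound $|p_1-p_0|\cdot|a_{1,0}+a_{0,0}-1|\le 2\varepsilon$, which is the $(y,y',z,z')=(0,1,0,1)$ case. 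The other choices of indices follow by the same argument with a symmetric regrouping (e.g., pulling $p_0$ and $1-p_0$ out instead, or using $-\Delta$), producing the variant $|p_1-p_0|\cdot|a_{1,1}+a_{0,1}-1|\le 2\varepsilon$.

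The main obstacle is finding the correct algebraic decomposition in the second step: one needs to split $\Delta$ so that the EO-controlled deviations $(a_{y,1}-a_{y,0})$ appear cleanly weighted by a convex combination summing to $1$ (so they contribute at most $\varepsilon$ in total), while the residual collects into a single product $(p_1-p_0)\cdot(a_{1,0}+a_{0,0}-1)$ matching the stated form. Everything else is a straightforward application of the triangle inequality and a rewrite of the factors in the notation of the statement; the binary assumption on $\ry$ and $\rz$ is what allows $a_{1,0}+a_{0,0}-1$ to be identified with the difference $\Pr(\hat{\ry}{=}0\mid\ry{=}0,\rz{=}0) - \Pr(\hat{\ry}{=}0\mid\ry{=}1,\rz{=}0)$.
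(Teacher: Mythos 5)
Your proof is correct and follows essentially the same route as the paper's: both expand the demographic-parity gap via the law of total probability, use the $\varepsilon$-EO condition to control the cross-group accuracy deviations (which enter weighted by $p_1$ and $1-p_1$, a convex combination), and isolate the residual product $(p_1-p_0)(a_{1,0}+a_{0,0}-1)$, bounded by $2\varepsilon$ via the triangle inequality. The only difference is presentational -- you package the argument as a single verifiable algebraic identity for $\Delta$ followed by one application of the triangle inequality, whereas the paper substitutes the EO inequality chains directly into the DP bound and rearranges; your decomposition is arguably the cleaner write-up of the same idea.
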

\vspace{-0.1cm}

The proof for Proposition~\ref{prop:dpeo} can be found in Sec.~\ref{appendix:dp_eo}.
Here, the achievable fairness level $\varepsilon$ is affected by the data bias, which is represented by the difference between the conditional probabilities of $\ry$ given $\rz$ (i.e., $|\Pr(\ry=y|\rz=z)-\Pr(\ry=y|\rz=z')|$). 
For example, if the data is highly biased, a model cannot achieve high fairness (i.e., low $\varepsilon$) unless the second term on the left side (i.e., $|\Pr(\hat{\ry}=y|\ry=y',\rz=z)-\Pr(\hat{\ry}=y|\ry=y,\rz=z)|$) decreases. However, the second term is related to the accuracy, where lowering it to zero causes the model to make random predictions or predictions that are all the same. 
Thus, Lemma~\ref{lemma:proportional} and Proposition~\ref{prop:dpeo} give the following corollary.

\begin{corollary}[]
When a model is trained w.r.t. both DP and EO, and the marginal probabilities of $\ry$ and $\rz$ do not change, the achievable fairness of the model is determined by the ($\ry, \rz$)-correlation if the accuracy does not change. The higher the correlation, the lower the achievable fairness.\label{eq:corollary_dpeo}
\end{corollary}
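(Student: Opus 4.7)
The plan is to chain Lemma~\ref{lemma:proportional} and Proposition~\ref{prop:dpeo} to isolate the dependence of the achievable $\varepsilon$ on $\rho_{\ry\rz}$. Rewriting Proposition~\ref{prop:dpeo} as a lower bound on $\varepsilon$,
\[
\varepsilon \;\geq\; \tfrac{1}{2}\,\bigl|\Pr(\ry=y|\rz=z)-\Pr(\ry=y|\rz=z')\bigr|\cdot\bigl|\Pr(\hat{\ry}=y|\ry=y,\rz=z)-\Pr(\hat{\ry}=y|\ry=y',\rz=z)\bigr|,
\]
the goal is to lower-bound each of the two factors on the right under the corollary's hypotheses (fixed marginals and fixed accuracy), and then conclude that the only free quantity that controls the product is $|\rho_{\ry\rz}|$.

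First, I would handle the data-side factor $|\Pr(\ry=y|\rz=z)-\Pr(\ry=y|\rz=z')|$. Since $\ry$ and $\rz$ are binary, this absolute difference equals $|\Pr(\ry=1|\rz=1)-\Pr(\ry=1|\rz=0)|$ regardless of the particular $y,z,z'$ chosen. Lemma~\ref{lemma:proportional} then implies that, with $\Pr(\ry{=}y)$ and $\Pr(\rz{=}z)$ fixed, this quantity is proportional to $|\rho_{\ry\rz}|$ with a constant $k>0$ depending only on the (fixed) marginals. Thus the first factor equals $k\,|\rho_{\ry\rz}|$.

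Second, I would handle the model-side factor $|\Pr(\hat{\ry}=y|\ry=y,\rz=z)-\Pr(\hat{\ry}=y|\ry=y',\rz=z)|$, which is the per-group gap between the true-positive-rate and the false-positive-rate on slice $\{\rz=z\}$. Using
\[
\Pr(\hat{\ry}=\ry) \;=\; \sum_{y,z}\Pr(\ry=y,\rz=z)\,\Pr(\hat{\ry}=y|\ry=y,\rz=z),
\]
and noting that $\Pr(\hat{\ry}=y|\ry=y',\rz=z)=1-\Pr(\hat{\ry}=y'|\ry=y',\rz=z)$, a vanishing gap on the slice $\{\rz=z\}$ forces $\hat{\ry}$ to be conditionally independent of $\ry$ given $\rz=z$, which collapses accuracy on that slice to the group base rate. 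Hence, conditional on a fixed (nontrivial) accuracy, there exists a constant $c>0$ that depends only on the accuracy and the fixed marginals and that lower-bounds this factor. Combining gives $\varepsilon \geq \tfrac{c\,k}{2}\,|\rho_{\ry\rz}|$, so the minimum achievable $\varepsilon$ is monotonically increasing in $|\rho_{\ry\rz}|$; since smaller $\varepsilon$ means better joint DP--EO fairness, higher $(\ry,\rz)$-correlation yields lower achievable fairness, which is exactly the claim.

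The main obstacle is making the second step rigorous and non-vacuous: a constant predictor can attain nontrivial overall accuracy yet have zero gap on one slice, so the argument must either (i) restrict to per-group accuracies being fixed, or (ii) pick the slice $\{\rz=z\}$ on which accuracy exceeds the group base rate, and bound $c$ in terms of this excess. Either refinement is consistent with the corollary's phrase ``if the accuracy does not change,'' and both avoid the degenerate edge cases where the lower bound would trivialize.
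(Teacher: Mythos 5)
Your proposal follows essentially the same route as the paper: the corollary is obtained by chaining Lemma~\ref{lemma:proportional} (which converts the conditional-probability gap $|\Pr(\ry=1|\rz=1)-\Pr(\ry=1|\rz=0)|$ into a quantity proportional to $\rho_{\ry\rz}$ under fixed marginals) with the product inequality of Proposition~\ref{prop:dpeo}, and then arguing that the model-side factor cannot vanish without degrading the predictor to a trivial one, so the achievable $\varepsilon$ is forced upward as the correlation grows. Your explicit flagging of the degenerate case (a constant predictor with nonzero overall accuracy but zero slice-level gap) and the proposed fixes are actually more careful than the paper's own informal treatment, which only remarks that driving the second term to zero forces random or constant predictions.
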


Similar to the single-metric case, we confirm the theoretical result of supporting multiple metrics via the simulation in the right plot in Figure~\ref{fig:simulation}. Again, we observe that low correlation enables classifiers to attain better accuracy-fairness tradeoffs. 
We also leave discussions for improving fairness w.r.t.\@ another prominent metric called predictive parity (PP)~\citep{berk2021fairness} together with DP and EO in Secs.~\ref{appendix:pp_dp} ($\varepsilon$-PP and $\varepsilon$-DP) and~\ref{appendix:eo_pp} ($\varepsilon$-EO and $\varepsilon$-PP).

\vspace{-0.1cm}
\subsection{Effects of Correlation Shifts}
\label{sec:shift_effects}
\vspace{-0.1cm}

As the performance ranges of fair training are different according to the data correlation, a fair classifier learned on some training data does not necessarily have optimal performance on deployment data with a different correlation.
We say there is a \textit{correlation shift} between two data distributions $D_1$ and $D_2$ when $|\rho^{D_1}_{\ry\rz} - \rho^{D_2}_{\ry\rz}| \neq 0$, where $\rho^{D_i}_{\ry\rz}$ indicates the ($\ry, \rz$)-correlation of $D_i$. 
Here, the correlation \textit{shift} reflects the bias \textit{change} between data distributions. 

We revisit our previous simulation in Figure~\ref{fig:simulation} that helps to understand how the correlation shifts affect fair training.
We consider a correlation shift from a training distribution with high correlation to a deployment distribution with low correlation. 
We start from a few optimal classifiers trained on the high-correlated data, which are denoted as black stars. 
These classifiers unfortunately have suboptimal performances on the low-correlation deployment data (black squares).
Hence, in-processing approaches that assume the same training and deployment data distributions lose the chance of achieving better fairness and accuracy in the deployment distribution under correlation shifts.

Based on our theoretical and simulation results, we design a pre-processing approach using ($\ry, \rz$)-correlation as a tuning knob to give existing in-processing approaches a better opportunity to achieve maximum accuracy and fairness performances in the following section.

% \vspace{-0.2cm}
\section{Framework}
\label{sec:framework}
\vspace{-0.15cm}

To improve the performance of fair training in the presence of correlation shifts, we propose a novel fair training framework that consists of 
% two main parts: 
1) applying a pre-processing approach for reflecting the shifted correlation and 2) utilizing any existing in-processing algorithms for fair training on top of the improved data.
\revisionnew{In Sec.~\ref{sec:optimization}, we explain the pre-processing step, which estimates the correlation range and solves an optimization problem for finding a new data distribution that follows the shifted correlation by adjusting the ($\ry, \rz$)-class ratios.}
% We first design an optimization problem for finding a new data distribution that follows the shifted correlation by adjusting the ($\ry, \rz$)-class ratios. 
In Sec.~\ref{sec:algorithm}, we explain our overall training process and extensions to support similar distributions.
% \revision{We note that our framework implicitly assumes that the distribution of input feature $\rx$ does not change drastically -- see details in Sec.~\ref{appendix:x_dist}.}

\vspace{-0.2cm}
\subsection{\revisionnew{Pre-processing for Correlation Shifts}}
% \subsection{Optimization}
\label{sec:optimization}
\vspace{-0.1cm}

\paragraph{Conditions} We design an optimization that finds the best ($\ry, \rz$)-class ratios given a ($\ry, \rz$)-correlation $\rho_{\ry\rz}$ by using Lemma~\ref{lemma:proportional}, which shows that $\rho_{\ry\rz}$ is proportional to the conditional probability difference under some assumptions. Note that the conditional probability can be written using the class weights.
% Here, $c = \rho_{\ry\rz}\sqrt{\Pr(\ry{=}1)\Pr(\ry{=}0)}{/}\sqrt{\Pr(\rz{=}1)\Pr(\rz{=}0)}$ by Lemma~\ref{lemma:proportional} and Sec.~\ref{appendix:correlationproperty}.
% We design an optimization that finds the best ($\ry, \rz$)-class ratios given a certain ($\ry, \rz$)-correlation.
Let the original data ratio of each ($\ry$=$y$, $\rz$=$z$)-class be $w_{\ry=y, \rz=y}$.
Let the new data ratio be $w'_{\ry=y, \rz=y}$, which is required to satisfy the shifted correlation of the deployment data.
Note that ${\sum}_{\forall y, z} w'_{\ry=y, \rz=z} {=} 1$. 
Also, let $c$ be the correlation constant that is the difference between the conditional probabilities of $\ry$ given $\rz$ (i.e., $\Pr(\ry{=}1|\rz{=}1) - \Pr(\ry{=}1|\rz{=}0)$) in the deployment data.
Let $[\alpha, \beta]$ be the range of the correlation constant $c$ (i.e., $c \in [\alpha, \beta]$). 
% The range may be known in advance or constructed by using distribution estimation techniques~\citep{huang2006correcting, zhang2013domain} for computing the shifted correlation and generating a range using its confidence interval.
We use a relaxed version of the assumption in Lemma~\ref{lemma:proportional} where the marginal probabilities of $\ry$ and $\rz$ can change by up to $\gamma_\ry$ and $\gamma_\rz$\textcolor{black}{, as discussed in Remark~\ref{remark:conditions}}. 

\vspace{-0.3cm}
\paragraph{\revisionnew{Part 1: Estimating Correlation Range}} 
\revisionnew{We first find a shift range $[\alpha, \beta]$, which can be estimated or given in advance. When the estimation is needed, we consider the scenario where accessing some of the new data is possible, which is necessary for detecting data drifts~\citep{lu2018learning}.
% In this setup, one needs to look at the new data to even realize that there is a drift. 
As stated in Sec.~\ref{sec:limitations}, we assume we have the ($\ry, \rz$) values of $m$ samples in the deployment data.
We can estimate the new correlation constant $c$ by calculating the empirical probability $\hat{\Pr}(\ry|\rz)$ using the given $m$ samples. Here, we can use the maximum likelihood estimator, i.e., $\hat{\Pr}(\ry{=}y|\rz{=}z) {=}$ (\text{num. samples in ($y,z$)})/(\text{num. samples in $z$}). We can also compute a confidence interval of the estimated $c$ using standard concentration inequalities (e.g., Hoeffding's inequality) -- see a formal estimation guarantee and proof in Sec.~\ref{appendix:estimate_corr}.}

\vspace{-0.05cm}
\revisionnew{Some key questions are 1) whether the correlation can be estimated for small amounts of new data and 2) whether our framework works well even when the estimation is incorrect. We answer them via empirical results in Secs.~\ref{sec:realworld} and~\ref{sec:misspecifying_corr}.}

\vspace{-0.25cm}
% \paragraph{\revision{Formulation}}
\paragraph{\revisionnew{Part 2: Optimization}}\hspace{-0.3cm} With the above conditions \revisionnew{and estimated correlation}, we set a problem whose goal is to minimize the squared difference between the original and new data ratios to reduce the information loss, similar to other preprocessings~\citep{DBLP:conf/icml/ZemelWSPD13, quadrianto2019discovering}:\vspace{-0.35cm}\\
% With the above conditions \revision{and estimated correlation}, we set up a problem whose goal is to minimize the squared differences between the original and new data ratios to help reduce the information loss, similar to other pre-processing approaches~\citep{DBLP:conf/icml/ZemelWSPD13, quadrianto2019discovering}:\vspace{-0.2cm}\\
% We thus solve the following optimization:
\resizebox{0.92\linewidth}{!}{
\hspace{0.5cm}
\centering
\begin{minipage}{\linewidth}
\vspace{-0.3cm}
\begin{gather*}
    \underset{w'}{\min} ~~ \scalebox{1.2}{$\sum_{\forall y, z}$} (w_{\ry=y, \rz=z}-w'_{\ry=y, \rz=z})^2 \vspace{-0.05cm}\\
    \text{s.t. }~~ \textcolor{black}{\alpha \leq} \frac{w'_{\ry=1, \rz=1}}{w'_{\ry=1, \rz=1} + w'_{\ry=0, \rz=1}} - \frac{w'_{\ry=1, \rz=0}}{w'_{\ry=1, \rz=0} + w'_{\ry=0, \rz=0}} \textcolor{black}{\leq \beta},\\
    |(w'_{\ry=1, \rz=1} + w'_{\ry=1, \rz=0})-{\Pr}_{\text{train}}(\ry=1)| \leq \gamma_\ry,\\
    |(w'_{\ry=1, \rz=1} + w'_{\ry=0, \rz=1})-{\Pr}_{\text{train}}(\rz=1)| \leq \gamma_\rz,\\
\hspace{-0.6cm}\scalebox{1.2}{$\sum_{\forall y, z}$} w'_{\ry=y, \rz=z} = 1,~~
    0 \leq w'_{\ry=y, \rz=z} \leq 1,~~\forall y \in \{0, 1\}, z \in \{0, 1\}
    % \nonumber
    % \scalebox{0.95}{$y \in \mathcal{\sY}, z \in \mathcal{\sZ},p_i \in \{0,1\}, i=1,...,n$}
\end{gather*}
\end{minipage}
}\vspace{0.05cm}\\
where ${\Pr}_{\text{train}}(\ry{=}1) {=} w_{\ry=1, \rz=1} {+} w_{\ry=1, \rz=0}$ and ${\Pr}_{\text{train}}(\rz{=}1)$ ${=} w_{\ry=1, \rz=1} {+} w_{\ry=0, \rz=1}$. Note that if we know the exact shifted correlation value (i.e., $\alpha {=} \beta {=} c$), the first constraint \revisionnew{becomes an equality constraint.}
% can be rewritten as $w'_{\ry=1, \rz=1}/({w'_{\ry=1, \rz=1} + w'_{\ry=0, \rz=1}})$ $-w'_{\ry=1, \rz=0}/({w'_{\ry=1, \rz=0} + w'_{\ry=0, \rz=0}}) = c$.

\vspace{-0.05cm}
The above optimization is a nonconvex quadratically constrained quadratic problem (QCQP), as the objective is quadratic, and the first constraint is nonconvex quadratic. However, the nonconvex QCQP is known to be hard to solve~\citep{d2003relaxations}. Thus, we apply the semidefinite (SDP) relaxation, which is one of the convex relaxations known to give a reasonable lower bound of the optimal value of the original problem~\citep{park2017general}:\vspace{-0.35cm}\\ 
% Our optimization is non-convex, quadratic constrained quadratic problem. Due to its non-convexity, we need to use a relaxation.
\resizebox{0.92\linewidth}{!}{
\hspace{0.5cm}
\centering
\begin{minipage}{\linewidth}
% \vspace{-0.3cm}
\begin{gather*}
    \underset{X, x}{\min}~~ \textbf{Tr}(XP_0)+q_0^Tx \vspace{-0.05cm}\\
    \text{s.t.}~~  % \textbf{Tr}(XP_1) = 0, ~~
    \textcolor{black}{\textbf{Tr}(XP_{\alpha}) \geq 0,~~ \textbf{Tr}(XP_{\beta}) \leq 0},\\
     |q_2^T x - {\Pr}_{\text{train}}(y=1)| \leq \gamma_\ry, ~~
     |q_3^T x - {\Pr}_{\text{train}}(z=1)| \leq \gamma_\rz, \\
     q_4^Tx = 1,~~ 0 \leq x_i \leq 1~~\forall i, ~~
     \begin{bmatrix} X & x \\ x^T & 1 \end{bmatrix} \succeq 0
\end{gather*}
\end{minipage}
}\vspace{-0.1cm}\\
% \resizebox{1.05\hsize}{!}{ where $\textbf{Tr}(\cdot)$ is the trace, $~X=xx^T$, $x = \begin{bmatrix} x_1 & x_2 & x_3 & x_4 \end{bmatrix}^T\:$}
where $\textbf{Tr}(\cdot)$ is the trace, \resizebox{0.59\hsize}{!}{$~X=xx^T$, $x = \begin{bmatrix} x_1 & x_2 & x_3 & x_4 \end{bmatrix}^T$}
\resizebox{1.05\hsize}{!}{${=}\begin{bmatrix} w'_{1, 1} \hspace{-0.1cm} & w'_{1, 0} \hspace{-0.1cm} & w'_{0, 1} \hspace{-0.1cm} & w'_{0, 0} \end{bmatrix}^T$,~ $q_0 = {-}2\begin{bmatrix} w_{1, 1} \hspace{-0.1cm} & w_{1, 0} \hspace{-0.1cm} & w_{0, 1} \hspace{-0.1cm} & w_{0, 0} \end{bmatrix}^T$,}
\resizebox{1.05\hsize}{!}{$q_2 {=}\begin{bmatrix} 1 \hspace{-0.1cm} & 0 \hspace{-0.1cm} & 1 \hspace{-0.1cm} & 0 \end{bmatrix}^T$, $q_3 {=} \begin{bmatrix} 1 \hspace{-0.1cm} & 1 \hspace{-0.1cm} & 0 \hspace{-0.1cm} & 0 \end{bmatrix}^T$, $q_4 {=} \begin{bmatrix} 1 \hspace{-0.1cm} & 1 \hspace{-0.1cm} & 1 \hspace{-0.1cm} & 1 \end{bmatrix}^T$, $P_0 {=} \diag(\mathbf{1})~$,} and \resizebox{0.60\hsize}{!}{$P_\gamma = \begin{bsmallmatrix} 0 & {-\gamma}/{2} & 0 & {(1-\gamma)}/{2}\\ {-\gamma}/{2} & 0 & {(-1-\gamma)}/{2} & 0\\ 0 & {(-1-\gamma)}/{2} & 0 & {-\gamma}/{2}\\ {(1-\gamma)}/{2} & 0 & {-\gamma}/{2} & 0 \end{bsmallmatrix}$} where $\gamma \in \{\alpha, \beta\}$. Conversion details are in Sec.~\ref{appendix:sdprelaxation}.
As this relaxed problem is now convex, we can solve it using convex optimization solvers (e.g., CVXPY~\citep{diamond2016cvxpy}).

\vspace{-0.05cm}
\begin{remark}
\revision{In distribution shift studies, it is common to assume that the training and deployment distributions differ only in specific aspects (e.g., only label dist.\@ is changed). Without such an assumption, the training and deployment distributions can be arbitrarily far apart, and there is no way to infer the new distribution via the training samples~\citep{huang2006correcting}. 
In our framework, we focus on the distribution changes of $\ry$ and $\rz$ and do not explicitly model the change of the input feature $\rx$ distribution. 
Our framework thus works best when the $\rx$ distribution does not change, but does not strictly require this condition unlike other previous works on fairness under different types of shifts~\citep{maity2021does,giguere2022fairness}.
We empirically show that our framework indeed performs well in a real-world scenario where the $\rx$ distribution does shift -- see Sec.~\ref{sec:realworld}. }
\end{remark}

\vspace{-0.3cm}
\subsection{Overall Training}
\label{sec:algorithm}
\vspace{-0.0cm}

\setlength{\textfloatsep}{0pt}
\vspace{-0.15cm}
\begin{algorithm}[h]
\small
\DontPrintSemicolon
\setstretch{1.0}
    \SetKwInput{Input}{Input}
    \SetKwInOut{Output}{Output}
    \SetNoFillComment

\Input{training data $D$, original ratio $w_{\ry, \rz}$, \revisionnew{$m$ samples of ($\ry_\text{deploy}$, $\rz_\text{deploy}$)}, confidence level of estimation $1-\delta$, thresholds $\gamma_\ry$ and $\gamma_\rz$,
% correlation range $[\alpha$, $\beta]$, thresholds $\gamma_\ry$ and $\gamma_\rz$, 
in-processing algorithm\@ $f$}
% \Input{Dataset}

\revisionnew{$\alpha, \beta$ = $\texttt{MLE}(\ry_\text{deploy}, \rz_\text{deploy}, 1-\delta)$ or given in advance}

$w'_{\ry, \rz}$ = $\texttt{SDPsolver}(w_{\ry, \rz}, \textcolor{black}{\alpha, \beta}, \gamma_\ry, \gamma_\rz)$

$d_j$ $\gets$ $w'_{\ry=y, \rz=z}/w_{\ry=y, \rz=z}, \forall j \in \mathbb{I}_{(y,z)}, \forall (y, z)\in \mathcal{\sY} \times \mathcal{\sZ}$

\textbf{d} = $\{d_i\}_{i=1,...,n}$

(Optional) \textbf{d} = $\texttt{MinDistChange}(D, w_{\ry, \rz}, w'_{\ry, \rz})$

Draw new data $D'$ from $D$ via weighted sampling w.r.t. \textbf{d}

$\theta$ $\gets$ initial model parameters

\textbf{for} \textit{each epoch} \textbf{do}: Update $\theta$ based on $f$ on $D'$

\Output{$\theta$}
\caption{Fair Training under Correlation Shifts}
\label{alg:overall}
\end{algorithm}

% \vspace{-0.3cm}
% We present the overall process for fair training under correlation shifts in Algorithm~\ref{alg:overall}. The algorithm includes two main parts: pre-processing for reflecting the shifted correlation and in-processing for fair training on top of the improved data.
% We first find the new ($\ry$, $\rz$)-class ratio $w_{\ry, \rz}$ based on the SDP relaxation of our optimization, which can be solved using convex optimization solvers (e.g., CVXPY). We then calculate the sample-wise weights to ensure that the sample weight sum in each ($\ry$, $\rz$)-class is $w_{\ry=y, \rz=z} \cdot n$, where $n$ is the total number of samples in the original training data. 
% Within each ($\ry$, $\rz$)-class, all samples have the same weight. We then draw new data $D'$ from the original data via weighted sampling according to the sample-wise weights. Finally, we train a model using an in-processing fair algorithm $f$ on $D'$.

We present the overall process for fair training under correlation shifts in Algorithm~\ref{alg:overall}. The algorithm includes \textit{pre-processing} for reflecting the shifted correlation and \textit{in-processing} for fair training on top of the improved data. \revisionnew{During the pre-processing, we first obtain the shift range $[\alpha, \beta]$, which can be given in advance or inferred using $m$ samples and the target confidence level of estimation $1-\delta$ with a maximum likelihood estimator (MLE) -- see a detailed algorithm in Sec.~\ref{appendix:mle}}. We then find the new ($\ry$, $\rz$)-class ratio $w'_{\ry, \rz}$ based on the SDP relaxation of our optimization, which can be solved using convex optimization solvers (e.g., CVXPY). During the in-processing, we calculate the sample-wise weights to ensure that the sample weight sum in each ($\ry$, $\rz$)-class is $w_{\ry=y, \rz=z} \cdot n$, where $n$ is the total number of samples in the original training data. 
Within each ($\ry$, $\rz$)-class, all samples have the same weight. We then draw new data $D'$ from the original data via weighted sampling according to the sample-wise weights. Finally, we train a model using an in-processing fair algorithm $f$ on $D'$.

% \setlength{\columnsep}{5pt}
% \begin{wrapfigure}{r}{0.52\textwidth}
% % \begin{wrapfigure}{r}{0.52\textwidth}
% % \vspace{-0.5cm}
% \vspace{-0.5cm}
% \scalebox{0.82}{
% \begin{minipage}{0.62\textwidth}
% % \begin{minipage}{0.62\textwidth}
%   \begin{algorithm}[H]
%     \DontPrintSemicolon
%     \setstretch{1}
% \SetKwInput{Input}{Input}
% \SetKwInOut{Output}{Output}
% \SetNoFillComment

% \Input{training data $D$, original ratio $w_{\ry, \rz}$, correlation \textcolor{black}{range $[\alpha$, $\beta]$}, thresholds $\gamma_\ry$ and $\gamma_\rz$, in-processing $f$}
% % \Input{Dataset}

% $w'_{\ry, \rz}$ = $\texttt{SDPsolver}(w_{\ry, \rz}, \textcolor{black}{\alpha, \beta}, \gamma_\ry, \gamma_\rz)$\\
% $d_j$ $\gets$ $w'_{\ry=y, \rz=z}/w_{\ry=y, \rz=z}, \forall j \in \mathbb{I}_{(y,z)}, \forall (y, z)\in \mathcal{\sY} \times \mathcal{\sZ}$\\
% \textbf{d} = $\{d_i\}_{i=1,...,n}$\\
% (Optional) \textbf{d} = $\texttt{MinDistChange}(D, w_{\ry, \rz}, w'_{\ry, \rz})$ \\
% Draw new data $D'$ from $D$ via weighted sampling w.r.t. \textbf{d}\\
% $\theta$ $\gets$ initial model parameters\\
% \textbf{for} \textit{each epoch} \textbf{do}: Update $\theta$ based on $f$ on $D'$\\
% % \For{each epoch}{
% % Update model parameters $\theta$ based on $f$ on $D'$
% % }
% \Output{$\theta$}
% \caption{Overall Training under Correlation Shifts}
% \label{alg:overall}
% \end{algorithm}
% \end{minipage}
% }
% \vspace{-1.3cm}
% \end{wrapfigure}
\textcolor{black}{In addition, using an optional step ($\texttt{MinDistChange}$), we can address the scenario where the pre-processing should minimally change the original training data, which is sometimes preferred in other applications~\citep{DBLP:journals/kais/KamiranC11}.
% Details are in Sec.~\ref{appendix:min_dist_change}.
Details on this extension are in Sec.~\ref{appendix:min_dist_change}.}

\vspace{-0.3cm}
\section{Experiments}
\label{sec:experiments}
\vspace{-0.2cm}

We perform experiments to evaluate our framework. 
We use logistic regression in all experiments. We evaluate the models on separate test datasets and repeat all experiments with five different random seeds. We use CVXPY as a convex optimization solver -- see detailed settings in Sec.~\ref{appendix:ex_setting}.
% Our experiments are performed using PyTorch on a Linux server with Intel Xeon Silver 4210R CPUs and NVIDIA Quadro RTX 8000 GPUs. 
% \textcolor{black}{More detailed settings are in Sec.~\ref{appendix:ex_setting}.}

\vspace{-0.35cm}
\paragraph{Fairness metrics} We focus on two prominent group fairness metrics: demographic parity (DP)~\citep{DBLP:conf/kdd/FeldmanFMSV15} and equalized odds (EO)~\citep{DBLP:conf/nips/HardtPNS16}.
We measure the fairness disparities (i.e., unfairness) among sensitive groups as follows: {\em DP disparity} = $\max_{z \in \mathcal{\sZ}}|\Pr(\hat{\ry}=1|\rz=z)-\Pr(\hat{\ry}=1)|$ and {\em EO disparity} = $\max_{z \in \mathcal{\sZ}, y \in \mathcal{\sY}}|\Pr(\hat{\ry}=y|\rz=z,\ry=y)-\Pr(\hat{\ry}=y|\ry=y)|$. 
When measuring the unfairness w.r.t. both DP and EO, we take the maximum of both disparities (i.e., $\max$({\em DP disp.}, {\em EO disp.})).
Note that lower disparity means better fairness.
% A classifier is perfectly fair when the above disparity is zero.

\vspace{-0.35cm}
\paragraph{Datasets} 
We use a total of three datasets \textcolor{black}{for training}: one synthetic dataset and two real-world benchmark datasets. 
We generate the synthetic dataset using a method similar to \citet{DBLP:conf/aistats/ZafarVGG17}. 
The synthetic training dataset has 2,000 samples and consists of two non-sensitive attributes $(\rx_1, \rx_2)$, one sensitive attribute $\rz$, and one label attribute $\ry$ -- see details in Sec.~\ref{appendix:ex_setting}.
We also utilize two real datasets: \yuji{ProPublica COMPAS~\citep{Compas} consists of 5,278 samples, and its labels indicate recidivism; AdultCensus~\citep{DBLP:conf/kdd/Kohavi96} has 43,131 samples, and its labels indicate a person's annual income.}
% We also utilize two real datasets, ProPublica COMPAS~\citep{Compas} and AdultCensus~\citep{DBLP:conf/kdd/Kohavi96}, \yuji{The two real datasets} consist of 5,278 and 43,131 samples, and the labels indicate recidivism and each person's annual income, respectively. 
We use gender as $\rz$.
% We use gender as the sensitive attribute.

\vspace{-0.05cm}
To construct test data representing the deployment distribution with shifted correlation \yuji{($c_\text{test}$)}, we use three methods: (1) re-sampling data within each ($\ry$, $\rz$)-class in the original test data, (2) modifying the $\rz$ values while fixing the $\rx$ and $\ry$ distributions in the original test data (see details in Sec.~\ref{appendix:ex_setting}), and (3) utilizing a newly-collected data, where we train on AdultCensus~\citep{DBLP:conf/kdd/Kohavi96}, but test on a recent version of this dataset called ACSIncome~\citep{ding2021retiring}.

% We show the results for the method of re-sampling in the main paper and leave the results for the \textcolor{black}{other methods in Secs.~\ref{appendix:test_construction} \&~\ref{appendix:newadult}}.
% In Secs.~\ref{sec:accfair} \textcolor{black}{\&~\ref{sec:misspecifying_corr}}, the correlation constant $c$ (i.e., $\Pr(\ry{=}1|\rz{=}1) {-} \Pr(\ry{=}1|\rz{=}0)$) of the test data is 50\% \textcolor{black}{and 60\% of $c_{\text{train}}$, respectively,} where $c_{\text{train}}$ is \textcolor{black}{the correlation constant of} the training data.
% In Secs.~\ref{sec:varying_corr} \&~\ref{sec:min_dist_change}, $c$ is set to 10\% to 70\% of $c_{\text{train}}$. 
% In Sec.~\ref{appendix:varying_corr_larger}, $c$ is set to 110\% to 150\% of $c_{\text{train}}$.

\vspace{-0.35cm}
\paragraph{Baselines}
We compare our framework with three types of baselines: (1) \textit{vanilla (non-fair)} training using logistic regression, (2) \textit{in-processing-only} training, and (3) \textit{two-step} training that first runs an existing pre-processing algorithm and then an in-processing algorithm. 
For in-processing-only training, we use the following three approaches: Fairness Constraints (FC)~\citep{DBLP:conf/aistats/ZafarVGG17, DBLP:conf/www/ZafarVGG17}, which adds an unfairness penalty term to the loss function; Adversarial Debiasing (AD)~\citep{DBLP:conf/aies/ZhangLM18}, which adversarially trains a classifier with a fairness discriminator;
% to maximize the independence between the predicted labels and sensitive group attributes; 
and FairBatch (FB)~\citep{roh2021fairbatch}, which adaptively adjusts batch ratios among groups to improve fairness.
\textcolor{black}{When we run in-processing approaches for multiple fairness metrics, we naturally extend each approach by combining the fairness constraints for different metrics -- see details in Sec.~\ref{appendix:ex_setting}.}
For two-step training, we use a pre-processing algorithm called Reweighing (RW)~\citep{DBLP:journals/kais/KamiranC11} to debias the data before running the above in-processings, where RW balances the data amounts across groups.
\textcolor{black}{We thus run the in-processing algorithms on the less-biased data by RW.}
% For two-step training, we use a pre-processing algorithm called Reweighing (RW)~\citep{DBLP:journals/kais/KamiranC11} to debias the training data before running the above three in-processing approaches, where RW balances the data amounts across sensitive groups. 
% Thus, we first run RW on the training data and then each in-processing algorithm on the resulting less-biased data.

% When we run in-processing approaches for multiple fairness metrics, we naturally extend each approach by combining the fairness constraints for different metrics via a tuning knob that adjusts the importance between each metric. Here, the fairness constraints are implemented differently in each algorithm (e.g., FC adds loss penalty terms, and AD adds discriminators) -- see details in Sec.~\ref{appendix:ex_setting}.

\vspace{-0.35cm}
\paragraph{Hyperparameters} 
We consider \textit{four scenarios of knowing the range of the correlation constant $c$} in the test data. 
\textbf{(1)} In Secs.~\ref{sec:accfair} \&~\ref{sec:varying_corr}, we assume that \revisionnew{the exact $c_\text{test}$ value is given in advance (i.e., $\alpha {=} \beta {=} c_\text{test}$).
\textbf{(2)} In Secs.~\ref{sec:realworld} \&~\ref{appendix:newadult}, we use an \textit{estimated} $c$ range that is calculated using some of the new data.
\textbf{(3)} In Sec.~\ref{appendix:c_range}, we evaluate with  arbitrary $c$ ranges (i.e., $c_\text{test} \pm x\%$).}
% \textbf{(1)} In Secs.~\ref{sec:accfair} and~\ref{sec:varying_corr}, we assume that the range indicates the exact $c_\text{test}$ value (i.e., $\alpha {=} \beta {=} c_\text{test}$).
% \textbf{(2)} In Sec.~\ref{appendix:c_range}, we assume that the range is $c_\text{test} \pm x\%$.
\textbf{(4)} In Secs.~\ref{sec:misspecifying_corr} \&~\ref{appendix:unknown_corr_compas}, we assume that the correlation itself is \textit{incorrect}.
% \revision{\textbf{(4)} In Sec.~\ref{sec:realworld}, we use the estimated range that is actually calculated by a distribution estimation technique~\citep{zhang2013domain}.}
% We assume the correlation constant $c$ is known for the test (deployment) distribution \textcolor{black}{(i.e., $\alpha = \beta = c_\text{test}$ in the algorithm) in Secs.~\ref{sec:accfair},~\ref{sec:varying_corr}, and~\ref{sec:min_dist_change}}. 
% If the correlation constant is not known in advance, we can infer it using distribution estimate techniques~\citep{huang2006correcting, zhang2013domain}\textcolor{black}{, which gives a confidence interval that can be used as the correlation constant range $[\alpha, \beta]$. We also evaluate our framework when the correlation constant is wrongly specified in Sec.~\ref{sec:misspecifying_corr}}.
We set both $\gamma_\ry$ and $\gamma_\rz$ to 0.1, \textcolor{black}{which is a larger value than the actual marginal probability changes in the test data. We also test for the values of 0.2 and 0.3, and the overall trends remain the same.}
% We are thus allowing $\Pr(\ry)$ and $\Pr(\rz)$ to change by at most 0.1, which is reasonable in practice as discussed in Remark~\ref{remark:conditions}.
\yuji{The in-processings' hyperparameters are in Sec.~\ref{appendix:ex_setting}.}
% For all in-processing algorithms, we start from a candidate set and use cross-validation on the (pre-processed) training data to choose the hyperparameters that result in the best fairness while having an accuracy that best aligns with other results.
% \textcolor{black}{Details are in Sec.~\ref{appendix:ex_setting}.}
% More hyperparameters are described in Sec.~\ref{appendix:ex_setting}.

\begin{table}[t]
\vspace{-0.2cm}
  \caption{\small Performances on the synthetic test dataset w.r.t. a single metric (DP) and multiple metrics (DP \& EO). The test datasets are constructed via re-sampling from the original distribution. 
%   The test data's correlation constant $c$ is 50\% of the training data's constant.
  The correlation constant $c$ of the test data is 50\% of that of the training data. 
  We compare our framework with three types of baselines: (1) non-fair training: LR; (2) in-processing-only training: FC, AD, and FB; (3) two-step training: RW~\citep{DBLP:journals/kais/KamiranC11} + in-processings. 
%   Experiments are repeated five times. 
  \yuji{In the last row, we also show the performances of an in-processing algorithm (FB) trained on the test distribution, which can be considered as the upper bounds.}
  }
  \vspace{-0.05cm}
  \label{tbl:synthetic}
  \centering
\scalebox{0.64}{
  \begin{tabular}{l@{\hspace{12pt}}c@{\hspace{7pt}}c@{\hspace{12pt}}c@{\hspace{7pt}}c}
    \toprule
      & \multicolumn{2}{c}{Single (DP)} & \multicolumn{2}{c}{Multiple (DP \& EO)}\\
    \cmidrule(r){1-5}
      Method & Acc. & Unfair. & Acc. & Unfair. \\
    \midrule
    LR & .865 $\pm$ .000 & .173 $\pm$ .000 & .865 $\pm$ .000 & .173 $\pm$ .000 \\
    \cmidrule(l){1-5}
    FC~\citep{DBLP:conf/aistats/ZafarVGG17, DBLP:conf/www/ZafarVGG17} & .778 $\pm$ .011 & .038 $\pm$ .013 & .853 $\pm$ .001 & .075 $\pm$ .002 \\
    % FC & .794 $\pm$ .017 & .030 $\pm$ .007 & .853 $\pm$ .001 & .075 $\pm$ .002 \\
    % FC & .806 $\pm$ .032 & .032 $\pm$ .014 & .xxx $\pm$ .xxx & .xxx $\pm$ .xxx \\
    RW+FC & .848 $\pm$ .004 & .079 $\pm$ .003 & .848 $\pm$ .004 & .082 $\pm$ .002 \\
    \textbf{Ours}+FC & .849 $\pm$ .002 & \textbf{.034 $\pm$ .004} & .851 $\pm$ .005 & \textbf{.060 $\pm$ .003} \\
    \cmidrule(l){1-5}
    AD~\citep{DBLP:conf/aies/ZhangLM18} & .762 $\pm$ .016 & .032 $\pm$ .011 & .821 $\pm$ .004 & .068 $\pm$ .003 \\
    RW+AD & .845 $\pm$ .002 & .087 $\pm$ .006 & .847 $\pm$ .003 & .084 $\pm$ .005 \\
    \textbf{Ours}+AD & .814 $\pm$ .011 & \textbf{.017 $\pm$ .006} & .842 $\pm$ .009 & \textbf{.054 $\pm$ .008} \\
    \cmidrule(l){1-5}
    FB~\citep{roh2021fairbatch} & .821 $\pm$ .000 & .048 $\pm$ .000 & .849 $\pm$ .001 & .091 $\pm$ .005 \\
    RW+FB & .859 $\pm$ .002 & .055 $\pm$ .003 & .855 $\pm$ .001 & .071 $\pm$ .008 \\
    \textbf{Ours}+FB & .836 $\pm$ .001 & \textbf{.003 $\pm$ .001} & .852 $\pm$ .004 & \textbf{.058 $\pm$ .001} \\
    \cmidrule{1-5}
    {\em FB on test dist. (upper bound)} & .838 $\pm$ .002 & .003 $\pm$ .002 & .859 $\pm$ .002 & .058 $\pm$ .004\\
    \bottomrule
  \end{tabular}
  }
  \vspace{-0.5cm}
\end{table}

\begin{table}[t]
\vspace{-0.25cm}
  \caption{\small Performances on the \textcolor{black}{COMPAS} test dataset. Other settings are identical to those in Table~\ref{tbl:synthetic}.}
%   \vspace{0.1cm}
  \label{tbl:compas}
  \centering
\scalebox{0.64}{
  \begin{tabular}{l@{\hspace{12pt}}c@{\hspace{7pt}}c@{\hspace{12pt}}c@{\hspace{7pt}}c}
    \toprule
      & \multicolumn{2}{c}{Single (DP)} & \multicolumn{2}{c}{Multiple (DP \& EO)}\\
    \cmidrule(r){1-5}
      Method & Acc. & Unfair. & Acc. & Unfair. \\
    \midrule
    LR & .660 $\pm$ .000 & .129 $\pm$ .000 & .660 $\pm$ .000 & .225 $\pm$ .000\\
    \cmidrule(l){1-5}
    FC~\citep{DBLP:conf/aistats/ZafarVGG17, DBLP:conf/www/ZafarVGG17} & .656 $\pm$ .004 & .050 $\pm$ .021 & .654 $\pm$ .007 & .137 $\pm$ .034\\
    RW+FC & .654 $\pm$ .004 & .068 $\pm$ .039 & .651 $\pm$ .010 & .124 $\pm$ .031\\
    \textbf{Ours}+FC & .657 $\pm$ .008 & \textbf{.037 $\pm$ .021} & .652 $\pm$ .020 & \textbf{.106 $\pm$ .038} \\
    \cmidrule(l){1-5}
    AD~\citep{DBLP:conf/aies/ZhangLM18} & .655 $\pm$ .003 & .054 $\pm$ .015 & .661 $\pm$ .005 & \textbf{.111 $\pm$ .035} \\
    RW+AD & .657 $\pm$ .005 & .092 $\pm$ .022 & .655 $\pm$ .007 & .145 $\pm$ .056 \\
    \textbf{Ours}+AD & .650 $\pm$ .003 & \textbf{.045 $\pm$ .008} & .664 $\pm$ .006 & .117 $\pm$ .039 \\
    \cmidrule(l){1-5}
    FB~\citep{roh2021fairbatch} & .647 $\pm$ .001 & .038 $\pm$ .013 & .650 $\pm$ .002 & .187 $\pm$ .019 \\
    RW+FB & .653 $\pm$ .003 & .094 $\pm$ .016 & .652 $\pm$ .003 & .197 $\pm$ .020\\
    \textbf{Ours}+FB & .648 $\pm$ .004 & \textbf{.027 $\pm$ .001} & .657 $\pm$ .004 & \textbf{.130 $\pm$ .014}\\
    % \textbf{Ours}+FB & .836 $\pm$ .001 & \textbf{.003 $\pm$ .001} & .852 $\pm$ .004 & \textbf{.058 $\pm$ .001} & .648 $\pm$ .004 & \textbf{.027 $\pm$ .001} & .657 $\pm$ .004 & \textbf{.130 $\pm$ .014}\\
    \cmidrule{1-5}
    {\em FB on test dist. (upper bound)} & .659 $\pm$ .001 & .012 $\pm$ .008 & .664 $\pm$ .001 & .095 $\pm$ .014\\
    \bottomrule
  \end{tabular}
  }
%   \vspace{-0.1cm}
\end{table}

\vspace{-0.15cm}
\subsection{Accuracy and Fairness}
\label{sec:accfair}
\vspace{-0.05cm}

We first compare the accuracy and fairness performances of our framework with baselines on the synthetic \textcolor{black}{and COMPAS datasets} in Tables~\ref{tbl:synthetic} and~\ref{tbl:compas} -- see \yuji{many more results in Appendix~\ref{appendix:experiments}, including the AdultCensus experiments (Sec.~\ref{appendix:adultcensus}), which show similar results}. The test data has lower correlation than the training data. 
LR shows vanilla training without any fairness technique. Other baselines are clustered based on three in-processing approaches: FC, AD, and FB. 
% LR in the first row shows vanilla training without any fairness technique. Other baselines are clustered based on three in-processing approaches: FC, AD, and FB. 
For each in-processing approach X, applying our pre-processing (denoted as Ours+X) generally shows better fairness while achieving comparable or even better accuracies, either when supporting only DP or both DP and EO. 
% For each in-processing approach X, applying our pre-processing (denoted as Ours + X) generally shows better fairness while achieving comparable or even better accuracies, either when supporting only DP or both DP and EO. 
The in-processing-only baselines mostly show worse fairness and accuracy compared to applying our approach, \revision{as} the in-processing-only baselines are trained with different biases from test data distribution in mind. 
The baselines of applying RW before in-processing generally do not achieve high fairness compared to ours. 
The reason is that existing pre-processing approaches like RW simply mitigate data bias as much as possible, which is not always beneficial for the in-processing. 
In comparison, our approach takes a more principled approach by adjusting the bias according to the correlation shift with the purpose of improving the in-processing performance.
\yuji{As a result, ours enables the in-processing approaches to be closer to optimal performances (last row).}
% We also present accuracy and fairness trade-off curves of the in-processing-only baseline and our approach in Sec.~\ref{appendix:tradeoff}.
In Sec.~\ref{appendix:align_well}, we show that the pre-processed data by our algorithm is \revision{indeed} \textit{more aligned} with the true test distribution than the original training data. 
%in terms of ($y$, $z$)-correlation and Wasserstein distance. 

% We observe similar results when constructing the test dataset with a different method, which modifies $\rz$ directly in Sec.~\ref{appendix:test_construction}.

We observe similar results when using the \textit{two other test settings} explained above. One is to modify the $\rz$ values while fixing the $\rx$ and $\ry$ distributions using the synthetic dataset (Sec.~\ref{appendix:test_construction}).
The other is to use two real-world income datasets collected in the 1990s and 2010s for training and testing, respectively, where they have different $(\ry, \rz)$-correlation values and also shifted $\rx$ distribution (Sec.~\ref{sec:realworld}).

\vspace{-0.05cm}
\revision{In addition, we enrich the empirical results by 1) presenting accuracy-fairness \textit{tradeoff curves} of the methods (Sec.~\ref{appendix:tradeoff}) and 2) showing the effects of the optional step in Algorithm~\ref{alg:overall} (Sec.~\ref{appendix:optional_step}), which finds possibly-different sample weights within each ($\ry, \rz$)-class to minimize the overall distribution change between the original and pre-processed data.}

\vspace{-0.3cm}
\subsection{Varying the Correlation of the Test Data}
\label{sec:varying_corr}
\vspace{-0.2cm}

We compare the algorithm performances when varying the correlation of the test data.
Figure~\ref{fig:varying_corr} shows the accuracy and fairness performances of FB and Ours+FB \revision{when the test data's correlation constant $c$ varies from 10\% to 70\% of the training data's $c$}. 
% As the test data's correlation constant $c$ varies from 10\% to 70\% of the training data's correlation constant, our pre-processing improves the accuracy and fairness of the in-processing algorithm. 
Interestingly, when the correlation of the test data differs significantly from the training data (i.e., close to 10\%), the in-processing-only baseline (FB) shows worse fairness. We suspect that the baseline is mitigating a different type of bias than that of the test data.
% We suspect that the baseline targets a largely different bias from the test data and thus trains the classifier incorrectly.
On the other hand, our pre-processing successfully enables the in-processing algorithm to achieve high accuracy and fairness (e.g., for a 10\% test corr., the unfairness decreases from 0.108 to 0.003).
In addition, we vary the correlation from 110\% to 150\% in Sec.~\ref{appendix:varying_corr_larger} where we show how our pre-processing enables in-processing to achieve high fairness.
% , \textcolor{black}{where our pre-processing enables in-processing to achieve high fairness.}
% We also vary the correlation from 110\% to 150\% of that of the training data in Sec.~\ref{appendix:varying_corr_larger} where we show how our pre-processing enables in-processing to achieve high fairness.
%We also observe algorithm performances when varying the correlation from 110\% to 150\% of that of the training data in Sec.~\ref{appendix:varying_corr_larger}, which shows our pre-processing enables to achieve high fairness.

\begin{figure}[h]
\vspace{-0.3cm}
% \hspace*{0.5cm}
\centering
\hspace{-0.2cm}
\includegraphics[width=0.97\columnwidth,trim=0cm 0.9cm 0cm 0cm]{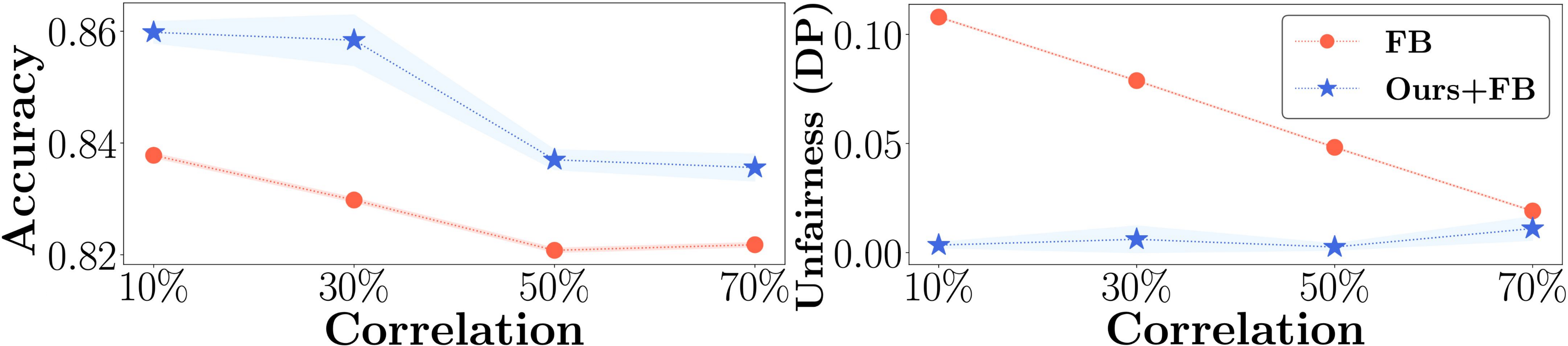}
\vspace{-0.3cm}
\caption{\small Performances of FB and Ours+FB on the synthetic data while varying the correlation of the test data.} % to have 10\% to 70\% correlation of the training data.}
\vspace{-0.3cm}
\label{fig:varying_corr}
\end{figure}

% \begin{figure}[t]
% \centering
% \vspace{-0.2cm}
% \includegraphics[width=0.7\columnwidth,trim=0cm 0.7cm 0cm 0cm]{}
% % \vspace{-0.4cm}
% \caption{Performances of FB and Ours+FB on the synthetic data while varying the correlation in the test data to have 10\% to 70\% correlation of the training data.}
% \vspace{-0.2cm}
% \label{fig:varying_corr}
% \end{figure}

\vspace{-0.25cm}
\subsection{\resizebox{0.92\hsize}{!}{\revision{Estimating Correlations and Handling x Dist. Shifts}}}
\label{sec:realworld}
\vspace{-0.2cm}

\revision{We evaluate our method when 1) the shifted correlation is estimated from only a small amount of deployment data and 2) when the $\rx$ distribution changes. We use the two real-world income datasets collected in the 1990s (i.e., AdultCensus) and 2010s (i.e., ACSIncome) for training and testing, respectively. As the two datasets have a large time gap, they naturally have different ($\ry$, $\rz$)-correlations as well as input feature $\rx$ distributions (see Sec.~\ref{appendix:newadult} for an analysis).} \revisionnew{Here, we assume access to only 1,000 samples ($\simeq$ 0.07\%) of the 2010s data to measure the shifted correlation constant $c$ -- see the estimation performances when varying the number of samples in Sec.~\ref{appendix:newadult}. We compute the new $c$ value via a maximum likelihood estimator, as described in Sec.~\ref{sec:optimization}. As a result, the estimated $c$ is 0.140, where the true $c$ computed on the entire new data is 0.147. Here, we can find an error range of the estimated $c$ with a specific probability, as shown in Sec.~\ref{appendix:estimate_corr}. We note that we can use the error range as $[\alpha, \beta]$. When running our pre-processing with this estimation,}
\revision{our framework still outperforms the in-processing baselines (Table~\ref{tbl:new_adult}). There are \textbf{two takeaways}: 1) one can reasonably estimate the correlation shift} \revisionnew{via a practical amount of new data, and 2) our method performs well even when $\rx$ shifts.}

% \revision{Following the scenario discussed in Sec.~\ref{sec:optimization}, we assume access to only 1\% of the 2010s data to measure the shifted correlation constant $c$. We compute the difference between $\Pr(\ry|\rz)$ values, as defined in Sec.~\ref{sec:optimization}.}
% \revision{As a result, the estimated $c$ is 0.159, where the true $c$ computed on the entire new data is 0.147. When running our pre-processing with the estimated $c$, our framework still outperforms the in-processing baselines (Table~\ref{tbl:new_adult}). There are \textbf{two takeaways}: 1) one can reasonably estimate the correlation shift via a small amount (say 1\%) of deployment data, and 2) our method performs well with the estimated correlation plus $\rx$ shifts.}

% \revision{Add a new ACSIncome experiment that can show 1) the performance of measuring the correlation shifts and 2) how our framework works even when $\rx$ distribution changes.}

\vspace{-0.35cm}
\subsection{\yuji{Robustness Against Unknown Correlations}}
% \subsection{Misspecifying the Correlation in the Algorithm}
\label{sec:misspecifying_corr}
\vspace{-0.2cm}

\revision{Finally}, we evaluate our approach when the exact shifted correlation is unknown with two scenarios: 1) misspecifying the correlation in the algorithm and 2) giving a range of correlation shifts to the algorithm.
% We also evaluate our approach when the shifted correlation is incorrectly specified.
\textbf{[Scenario 1]} We first consider when the shifted correlation is \textit{incorrectly specified}. We set $\alpha {=} \beta {=} c_\text{specified}$, where $c_\text{specified} {\neq} c_\text{test}$. Figure~\ref{fig:wrong_corr} shows the performances of FB and Ours+FB when the true correlation of the test data is 60\% of the training data's correlation. Ours improves the in-processing-only baseline’s accuracy for the entire range of considered correlations and improves fairness when the specified correlation is higher than 30\%. Hence, our approach is still beneficial when the estimation error is within 10\%.
\textbf{[Scenario 2]} In Sec.~\ref{appendix:c_range}, we also run our approach with \textit{a range of correlation shifts}.
% \revisionnew{, which can be inferred using the correlation range estimation discussed in Secs.~\ref{sec:optimization} and~\ref{appendix:estimate_corr}}.
% by distribution estimation techniques~\citep{huang2006correcting, zhang2013domain}. 
\revisionnew{Here we set the arbitrary $[\alpha, \beta]$ range to be $[c_\text{test} {-} x\%, c_\text{test} {+} x\%]$ with various $x$ values to see the robustness of our method.}
As a result, there are \textbf{two takeaways}: 1) our framework successfully boosts the in-processing-only baseline performances when the $[\alpha, \beta]$ range is reasonable, and 2) even if we do not have any information about the shift, our framework performs at least as well as the in-processing-only baselines. \revision{We observe \textit{similar results on real-world data} in Sec.~\ref{appendix:unknown_corr_compas}.}
% We expand the $[\alpha, \beta]$ range to be $[c_\text{test} {-} x\%, c_\text{test} {+} x\%]$ instead of a constant value. As a result, there are \textbf{two takeaways}: 1) our framework successfully boosts the in-processing-only baseline performances when the $[\alpha, \beta]$ range is reasonable, and 2) even if we do not have any information about the shift, our framework performs at least as well as the in-processing-only baselines. \revision{We observe \textit{similar results on real-world data} in Sec.~\ref{appendix:unknown_corr_compas}.}

\begin{figure}[h]
\vspace{-0.25cm}
% \hspace*{0.5cm}
\centering
\hspace{-0.2cm}
\includegraphics[width=0.97\columnwidth,trim=0cm 0.9cm 0cm 0cm]{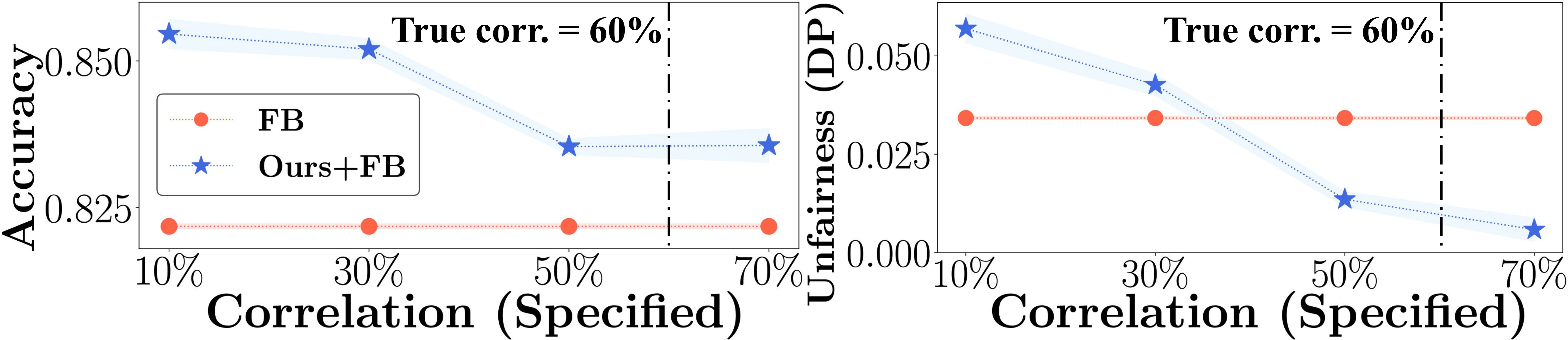}
\vspace{-0.35cm}
\caption{\small Performances of FB and Ours+FB on the synthetic data while varying the \textit{specified} correlation in our algorithm to have 10\% to 70\% where the true correlation of the test data is 60\%.}
\vspace{-0.3cm}
\label{fig:wrong_corr}
\end{figure}

\vspace{-0.25cm}
\section{Related Work}
\label{sec:relatedwork}
\vspace{-0.2cm}

As model fairness becomes essential for Trustworthy AI, various fairness techniques have been proposed to improve group fairness that do not discriminate specific demographics~\citep{barocas-hardt-narayanan}. 
% -- \revision{see discussion on causality-based fairness in Sec.~\ref{appendix:causality}}. 
The fairness techniques can be categorized into three prominent approaches: pre-, in-, and post-processings -- see their representative works in Sec.~\ref{appendix:relatedwork_traditional_fairness}. Among the three categories, in-processing approaches are widely used for fair training due to their high fairness and accuracy performances, but most of them assume that the training and deployment distributions are the same.
\vspace{-0.05cm}

\setlength{\columnsep}{4pt}
\begin{wraptable}{r}{5.0cm}
\vspace{-0.85cm}
  \caption{\small Different distribution shifts.}
%   \vspace{-0.35cm}
  \label{tbl:related_work}
  \centering
\scalebox{0.74}{
  \begin{tabular}{c@{\hspace{5pt}}c@{\hspace{5pt}}c}
    \toprule
    Category & \multicolumn{2}{c}{Type of Shifts} \\
    \midrule
    \multirow{3}{*}{\breakcell{General\\ distribution\\ shifts}} & covariate shift & $\Pr(\rx)$\\
    & label shift & $\Pr(\ry)$\\
    & concept shift & $\Pr(\ry|\rx)$\\
    \cmidrule{1-3}
    \multirow{3}{*}{\breakcell{Fairness\\ specific\\ shifts}} & demographic shift & $\Pr(\rz)$ \\
    & subpopulation shift & $\Pr(\ry,\rz)$ \\
    & \textbf{corr. shift (ours)} & $\Pr(\rz|\ry)$ \\
    \bottomrule
  \end{tabular}
  }
  \vspace{-0.65cm}
\end{wraptable}
Recently, there is an emerging focus on \revision{fairness} under data distri-bution shifts. In Table \ref{tbl:related_work}, we summarize various types of shifts into the two categories: 
% Recently, there is an emerging focus on fair training under data distribution shifts. In Table~\ref{tbl:related_work}, we summarize various types of shifts into the following two categories: 
\vspace{-0.1cm}

1. {\em General distribution shifts}~\citep{singh2021fairness, Rezaei_Liu_Memarrast_Ziebart_2021, chen2022fairness, mishler2022fair} focus on shifts involving the input feature ($\rx$) and label ($\ry$), which are widely studied in the traditional machine learning literature. Here the bias changes between label ($\ry$) and group ($\rz$) are not explicitly considered.

\vspace{-0.1cm}
2. {\em Fairness-specific shifts}~\citep{maity2021does, an2022transferring, giguere2022fairness, 51492} handle group ($\rz$) changes, as $\rz$ is especially correlated with fair training. Our work also falls into this category. A recent study~\citep{maity2021does} theoretically analyzes the behavior of fair training under a change in bias called subpopulation shifts, where a specific group has fewer positively-labeled examples during training time compared to deployment time. Another study~\citep{giguere2022fairness} designs a new test method to serve a fair model under another distribution change called demographic shifts, where the subgroup distribution may change \revisionnew{-- see an \textit{empirical comparison} with our work in Sec.~\ref{appendix:shifty}}. A recent work~\citep{an2022transferring} proposes a self-training-based transfer algorithm that requires specific model architecture (e.g., adversary network) to support data changes.
% A recent work~\citep{an2022transferring} proposes a self-training-based transfer algorithm that requires specific model architecture (e.g., adversary network) to support data changes, including subpopulation shifts. 
In comparison, our contribution lies in 1) introducing the notion of correlation shifts, which is important for explaining with theoretical evidence the connection between the data bias changes and behaviors of fair training, 2) analyzing the fundamental limits of in-processing approaches \revision{under} correlation shifts, and 3) proposing a pre-processing step based on the theoretical analysis for assisting the existing fairness approaches. In addition, our framework is general and can \textit{support any model architecture and training procedure}.
\revision{We leave more detailed comparisons in Sec.~\ref{appendix:relatedwork_distribution_shifts}.}

% \vspace{-0.07cm}
Another line of research is \revision{supporting robustness in fair training, including handling noisy groups~\citep{Celis2021FairCW, NEURIPS2020_37d097ca} or poisoning attacks~\citep{mehrabi2021exacerbating, DBLP:conf/pkdd/SolansB020}}. Although this direction is not our immediate focus, we do perform preliminary experiments in Secs.~\ref{appendix:noisy} \revision{and~\ref{appendix:poisoning}} to show some potential to support noisy group attributes \revision{or poisoning attack scenarios}. 
% [[candidate for removal]] We believe our method can be further extended with other robust training methods.

% \vspace{-0.07cm}
\revision{In addition to group fairness, we discuss other noteworthy fairness definitions including causality-based fairness~\citep{10.5555/3294771.3294834} in Secs.~\ref{appendix:relatedwork_traditional_fairness} and~\ref{appendix:causality}.}

\vspace{-0.3cm}
\section{Conclusion}
\label{sec:conclusion}
\vspace{-0.15cm}

We addressed the problem of model fairness in the presence of bias changes in the data.
We first introduced the new notion of ($\ry, \rz$)-correlation for capturing bias and analyzed the \revision{performance limits} of in-processing approaches in the presence of correlation shifts.
% We first introduced the new notion of ($\ry, \rz$)-correlation for capturing bias and analyzed the accuracy and fairness limitations of existing in-processing approaches in the presence of correlation shifts.
We then proposed a decoupling framework where pre-processing is used to adjust the correlation, and in-processing is used for unfairness mitigation.
% [[candidate for removal]] The pre-processing step adjusts the data ratio among ($\ry, \rz$)-classes to reflect the shifted correlation and can optionally minimize the distribution change of the training data as well.
% Experiments showed how our pre-processing enables existing in-processing approaches to successfully achieve high fairness and accuracy under correlation shifts.
Experiments showed how our pre-processing enables existing in-processing approaches to achieve high fairness and accuracy under correlation shifts and outperform baselines.

\bibliography{main}

\begin{thebibliography}{58}
\providecommand{\natexlab}[1]{#1}
\providecommand{\url}[1]{\texttt{#1}}
\expandafter\ifx\csname urlstyle\endcsname\relax
  \providecommand{\doi}[1]{doi: #1}\else
  \providecommand{\doi}{doi: \begingroup \urlstyle{rm}\Url}\fi

\bibitem[Agarwal et~al.(2018)Agarwal, Beygelzimer, Dud{\'{\i}}k, Langford, and
  Wallach]{DBLP:conf/icml/AgarwalBD0W18}
Agarwal, A., Beygelzimer, A., Dud{\'{\i}}k, M., Langford, J., and Wallach,
  H.~M.
\newblock A reductions approach to fair classification.
\newblock In \emph{ICML}, pp.\  60--69, 2018.

\bibitem[An et~al.(2022)An, Che, Ding, and Huang]{an2022transferring}
An, B., Che, Z., Ding, M., and Huang, F.
\newblock Transferring fairness under distribution shifts via fair consistency
  regularization.
\newblock In \emph{NeurIPS}, 2022.

\bibitem[Angwin et~al.(2016)Angwin, Larson, Mattu, and Kirchner]{Compas}
Angwin, J., Larson, J., Mattu, S., and Kirchner, L.
\newblock Machine bias: {T}here's software used across the country to predict
  future criminals. {A}nd its biased against blacks.
\newblock ProPublica, 2016.

\bibitem[Barocas et~al.(2019)Barocas, Hardt, and
  Narayanan]{barocas-hardt-narayanan}
Barocas, S., Hardt, M., and Narayanan, A.
\newblock \emph{Fairness and Machine Learning}.
\newblock fairmlbook.org, 2019.
\newblock \url{http://www.fairmlbook.org}.

\bibitem[Berk et~al.(2021)Berk, Heidari, Jabbari, Kearns, and
  Roth]{berk2021fairness}
Berk, R., Heidari, H., Jabbari, S., Kearns, M., and Roth, A.
\newblock Fairness in criminal justice risk assessments: The state of the art.
\newblock \emph{Sociol. Methods Res.}, 50\penalty0 (1):\penalty0 3--44, 2021.

\bibitem[Celis et~al.(2021)Celis, Huang, Keswani, and Vishnoi]{Celis2021FairCW}
Celis, L.~E., Huang, L., Keswani, V., and Vishnoi, N.~K.
\newblock Fair classification with noisy protected attributes: A framework with
  provable guarantees.
\newblock In \emph{ICML}, 2021.

\bibitem[Chen et~al.(2022)Chen, Raab, Wang, and Liu]{chen2022fairness}
Chen, Y., Raab, R., Wang, J., and Liu, Y.
\newblock Fairness transferability subject to bounded distribution shift.
\newblock In \emph{NeurIPS}, 2022.

\bibitem[Choi et~al.(2020)Choi, Grover, Singh, Shu, and Ermon]{choi2020fair}
Choi, K., Grover, A., Singh, T., Shu, R., and Ermon, S.
\newblock Fair generative modeling via weak supervision.
\newblock In \emph{ICML}, 2020.

\bibitem[Chzhen et~al.(2019)Chzhen, Denis, Hebiri, Oneto, and
  Pontil]{NIPS2019_9437}
Chzhen, E., Denis, C., Hebiri, M., Oneto, L., and Pontil, M.
\newblock Leveraging labeled and unlabeled data for consistent fair binary
  classification.
\newblock In \emph{NeurIPS}, pp.\  12760--12770. 2019.

\bibitem[Cohen \& Cohen(1975)Cohen and Cohen]{cohen1975applied}
Cohen, J. and Cohen, P.
\newblock Applied multiple regression/correlation analysis for the behavioral
  sciences.
\newblock 1975.

\bibitem[Cotter et~al.(2019)Cotter, Jiang, and
  Sridharan]{DBLP:conf/alt/CotterJS19}
Cotter, A., Jiang, H., and Sridharan, K.
\newblock Two-player games for efficient non-convex constrained optimization.
\newblock In \emph{ALT}, pp.\  300--332, 2019.

\bibitem[Diamond \& Boyd(2016)Diamond and Boyd]{diamond2016cvxpy}
Diamond, S. and Boyd, S.
\newblock {CVXPY}: {A} {P}ython-embedded modeling language for convex
  optimization.
\newblock \emph{Journal of Machine Learning Research}, 17\penalty0
  (83):\penalty0 1--5, 2016.

\bibitem[Ding et~al.(2021)Ding, Hardt, Miller, and Schmidt]{ding2021retiring}
Ding, F., Hardt, M., Miller, J., and Schmidt, L.
\newblock Retiring adult: New datasets for fair machine learning.
\newblock In Beygelzimer, A., Dauphin, Y., Liang, P., and Vaughan, J.~W.
  (eds.), \emph{NeurIPS}, 2021.

\bibitem[du~Pin~Calmon et~al.(2017)du~Pin~Calmon, Wei, Vinzamuri, Ramamurthy,
  and Varshney]{DBLP:conf/nips/CalmonWVRV17}
du~Pin~Calmon, F., Wei, D., Vinzamuri, B., Ramamurthy, K.~N., and Varshney,
  K.~R.
\newblock Optimized pre-processing for discrimination prevention.
\newblock In \emph{NeurIPS}, pp.\  3995--4004, 2017.

\bibitem[Dwork et~al.(2012{\natexlab{a}})Dwork, Hardt, Pitassi, Reingold, and
  Zemel]{Dwork:2012:FTA:2090236.2090255}
Dwork, C., Hardt, M., Pitassi, T., Reingold, O., and Zemel, R.
\newblock Fairness through awareness.
\newblock In \emph{ITCS}, pp.\  214--226, 2012{\natexlab{a}}.
\newblock ISBN 978-1-4503-1115-1.

\bibitem[Dwork et~al.(2012{\natexlab{b}})Dwork, Hardt, Pitassi, Reingold, and
  Zemel]{DBLP:conf/innovations/DworkHPRZ12}
Dwork, C., Hardt, M., Pitassi, T., Reingold, O., and Zemel, R.~S.
\newblock Fairness through awareness.
\newblock In \emph{ITCS}, pp.\  214--226, 2012{\natexlab{b}}.

\bibitem[d’Aspremont \& Boyd(2003)d’Aspremont and Boyd]{d2003relaxations}
d’Aspremont, A. and Boyd, S.
\newblock Relaxations and randomized methods for nonconvex qcqps.
\newblock \emph{EE392o Class Notes, Stanford University}, 1:\penalty0 1--16,
  2003.

\bibitem[Feldman et~al.(2015)Feldman, Friedler, Moeller, Scheidegger, and
  Venkatasubramanian]{DBLP:conf/kdd/FeldmanFMSV15}
Feldman, M., Friedler, S.~A., Moeller, J., Scheidegger, C., and
  Venkatasubramanian, S.
\newblock Certifying and removing disparate impact.
\newblock In \emph{SIGKDD}, pp.\  259--268, 2015.

\bibitem[Giguere et~al.(2022)Giguere, Metevier, Brun, Thomas, Niekum, and
  da~Silva]{giguere2022fairness}
Giguere, S., Metevier, B., Brun, Y., Thomas, P.~S., Niekum, S., and da~Silva,
  B.~C.
\newblock Fairness guarantees under demographic shift.
\newblock In \emph{ICLR}, 2022.

\bibitem[Givens \& Shortt(1984)Givens and Shortt]{givens1984class}
Givens, C.~R. and Shortt, R.~M.
\newblock A class of wasserstein metrics for probability distributions.
\newblock \emph{Michigan Math. J.}, 31\penalty0 (2):\penalty0 231--240, 1984.

\bibitem[Hardt et~al.(2016)Hardt, Price, and Srebro]{DBLP:conf/nips/HardtPNS16}
Hardt, M., Price, E., and Srebro, N.
\newblock Equality of opportunity in supervised learning.
\newblock In \emph{NeurIPS}, pp.\  3315--3323, 2016.

\bibitem[Hoeffding(1963)]{10.2307/2282952}
Hoeffding, W.
\newblock Probability inequalities for sums of bounded random variables.
\newblock \emph{Journal of the American Statistical Association}, 58\penalty0
  (301):\penalty0 13--30, 1963.

\bibitem[Huang et~al.(2006)Huang, Gretton, Borgwardt, Sch{\"o}lkopf, and
  Smola]{huang2006correcting}
Huang, J., Gretton, A., Borgwardt, K., Sch{\"o}lkopf, B., and Smola, A.
\newblock Correcting sample selection bias by unlabeled data.
\newblock \emph{NIPS}, 19:\penalty0 601--608, 2006.

\bibitem[Jiang \& Nachum(2020)Jiang and Nachum]{pmlr-v108-jiang20a}
Jiang, H. and Nachum, O.
\newblock Identifying and correcting label bias in machine learning.
\newblock In \emph{AISTATS}, pp.\  702--712, 2020.

\bibitem[Kamiran \& Calders(2011)Kamiran and
  Calders]{DBLP:journals/kais/KamiranC11}
Kamiran, F. and Calders, T.
\newblock Data preprocessing techniques for classification without
  discrimination.
\newblock \emph{Knowl. Inf. Syst.}, 33\penalty0 (1):\penalty0 1--33, 2011.

\bibitem[Kamiran et~al.(2012)Kamiran, Karim, and
  Zhang]{DBLP:conf/icdm/KamiranKZ12}
Kamiran, F., Karim, A., and Zhang, X.
\newblock Decision theory for discrimination-aware classification.
\newblock In \emph{IEEE ICDM}, pp.\  924--929, 2012.

\bibitem[Kilbertus et~al.(2017)Kilbertus, Rojas-Carulla, Parascandolo, Hardt,
  Janzing, and Sch\"{o}lkopf]{10.5555/3294771.3294834}
Kilbertus, N., Rojas-Carulla, M., Parascandolo, G., Hardt, M., Janzing, D., and
  Sch\"{o}lkopf, B.
\newblock Avoiding discrimination through causal reasoning.
\newblock In \emph{NeurIPS}, pp.\  656--666, 2017.

\bibitem[Kleinberg et~al.(2017)Kleinberg, Mullainathan, and
  Raghavan]{DBLP:conf/innovations/KleinbergMR17}
Kleinberg, J.~M., Mullainathan, S., and Raghavan, M.
\newblock Inherent trade-offs in the fair determination of risk scores.
\newblock In \emph{ITCS}, pp.\  43:1--43:23, 2017.

\bibitem[Kohavi(1996)]{DBLP:conf/kdd/Kohavi96}
Kohavi, R.
\newblock Scaling up the accuracy of naive-bayes classifiers: {A} decision-tree
  hybrid.
\newblock In \emph{SIGKDD}, pp.\  202--207, 1996.

\bibitem[Kusner et~al.(2017)Kusner, Loftus, Russell, and Silva]{NIPS2017_6995}
Kusner, M.~J., Loftus, J., Russell, C., and Silva, R.
\newblock Counterfactual fairness.
\newblock In \emph{NeurIPS}, pp.\  4066--4076. 2017.

\bibitem[Lu et~al.(2018)Lu, Liu, Dong, Gu, Gama, and Zhang]{lu2018learning}
Lu, J., Liu, A., Dong, F., Gu, F., Gama, J., and Zhang, G.
\newblock Learning under concept drift: A review.
\newblock \emph{IEEE TKDE}, 31\penalty0 (12):\penalty0 2346--2363, 2018.

\bibitem[Maity et~al.(2021)Maity, Mukherjee, Yurochkin, and Sun]{maity2021does}
Maity, S., Mukherjee, D., Yurochkin, M., and Sun, Y.
\newblock Does enforcing fairness mitigate biases caused by subpopulation
  shift?
\newblock In \emph{NeurIPS}, 2021.

\bibitem[Mehrabi et~al.(2021)Mehrabi, Naveed, Morstatter, and
  Galstyan]{mehrabi2021exacerbating}
Mehrabi, N., Naveed, M., Morstatter, F., and Galstyan, A.
\newblock Exacerbating algorithmic bias through fairness attacks.
\newblock In \emph{AAAI}, volume~35, pp.\  8930--8938, 2021.

\bibitem[Menon \& Williamson(2018)Menon and Williamson]{pmlr-v81-menon18a}
Menon, A.~K. and Williamson, R.~C.
\newblock The cost of fairness in binary classification.
\newblock In \emph{FAT*}, pp.\  107--118, 2018.

\bibitem[Mishler \& Dalmasso(2022)Mishler and Dalmasso]{mishler2022fair}
Mishler, A. and Dalmasso, N.
\newblock Fair when trained, unfair when deployed: Observable fairness measures
  are unstable in performative prediction settings.
\newblock \emph{ArXiv}, 2022.

\bibitem[Narayanan(2018)]{narayanan2018translation}
Narayanan, A.
\newblock Translation tutorial: 21 fairness definitions and their politics.
\newblock In \emph{ACM FAccT}, volume 1170, 2018.

\bibitem[Park \& Boyd(2017)Park and Boyd]{park2017general}
Park, J. and Boyd, S.
\newblock General heuristics for nonconvex quadratically constrained quadratic
  programming.
\newblock \emph{ArXiv}, 2017.

\bibitem[Peyr{\'e} \& Cuturi(2019)Peyr{\'e} and Cuturi]{peyre2019computational}
Peyr{\'e}, G. and Cuturi, M.
\newblock Computational optimal transport: With applications to data science.
\newblock \emph{Found. Trends Mach. Learn.}, 11\penalty0 (5-6):\penalty0
  355--607, 2019.

\bibitem[Pleiss et~al.(2017)Pleiss, Raghavan, Wu, Kleinberg, and
  Weinberger]{DBLP:conf/nips/PleissRWKW17}
Pleiss, G., Raghavan, M., Wu, F., Kleinberg, J.~M., and Weinberger, K.~Q.
\newblock On fairness and calibration.
\newblock In \emph{NeurIPS}, pp.\  5684--5693, 2017.

\bibitem[Quadrianto et~al.(2019)Quadrianto, Sharmanska, and
  Thomas]{quadrianto2019discovering}
Quadrianto, N., Sharmanska, V., and Thomas, O.
\newblock Discovering fair representations in the data domain.
\newblock In \emph{CVPR}, pp.\  8227--8236, 2019.

\bibitem[Rezaei et~al.(2021)Rezaei, Liu, Memarrast, and
  Ziebart]{Rezaei_Liu_Memarrast_Ziebart_2021}
Rezaei, A., Liu, A., Memarrast, O., and Ziebart, B.~D.
\newblock Robust fairness under covariate shift.
\newblock \emph{AAAI}, pp.\  9419--9427, 2021.

\bibitem[Rodgers \& Nicewander(1988)Rodgers and Nicewander]{lee1988thirteen}
Rodgers, J.~L. and Nicewander, W.~A.
\newblock Thirteen ways to look at the correlation coefficient.
\newblock \emph{The American Statistician}, 42\penalty0 (1):\penalty0 59--66,
  1988.

\bibitem[Roh et~al.(2020)Roh, Lee, Whang, and Suh]{pmlr-v119-roh20a}
Roh, Y., Lee, K., Whang, S.~E., and Suh, C.
\newblock {FR}-{T}rain: A mutual information-based approach to fair and robust
  training.
\newblock In \emph{ICML}, pp.\  8147--8157, 2020.

\bibitem[Roh et~al.(2021)Roh, Lee, Whang, and Suh]{roh2021fairbatch}
Roh, Y., Lee, K., Whang, S.~E., and Suh, C.
\newblock {F}air{B}atch: Batch selection for model fairness.
\newblock In \emph{ICLR}, 2021.

\bibitem[Schrouff et~al.(2022)Schrouff, Harris, Koyejo, Alabdulmohsin,
  Schnider, Opsahl-Ong, Brown, Roy, Mincu, Chen, Dieng, Natarajan, Liu,
  Karthikesalingam, Heller, Chiappa, and D'Amour]{51492}
Schrouff, J., Harris, N., Koyejo, S., Alabdulmohsin, I. M.~I., Schnider, E.,
  Opsahl-Ong, K., Brown, A., Roy, S., Mincu, D., Chen, C., Dieng, A.,
  Natarajan, V., Liu, Y., Karthikesalingam, A., Heller, K., Chiappa, S., and
  D'Amour, A.~N.
\newblock Fairness properties do not transfer: do we have viable solutions for
  real-world applications?
\newblock In \emph{NeurIPS}, 2022.

\bibitem[Shen \& Sanghavi(2019)Shen and Sanghavi]{pmlr-v97-shen19e}
Shen, Y. and Sanghavi, S.
\newblock Learning with bad training data via iterative trimmed loss
  minimization.
\newblock In \emph{ICML}, pp.\  5739--5748, 2019.

\bibitem[Singh et~al.(2021)Singh, Singh, Mhasawade, and
  Chunara]{singh2021fairness}
Singh, H., Singh, R., Mhasawade, V., and Chunara, R.
\newblock Fairness violations and mitigation under covariate shift.
\newblock In \emph{FAccT}, pp.\  3--13, 2021.

\bibitem[Solans et~al.(2020)Solans, Biggio, and
  Castillo]{DBLP:conf/pkdd/SolansB020}
Solans, D., Biggio, B., and Castillo, C.
\newblock Poisoning attacks on algorithmic fairness.
\newblock In \emph{{ECML} {PKDD}}, pp.\  162--177, 2020.

\bibitem[Thomas et~al.(2019)Thomas, Castro~da Silva, Barto, Giguere, Brun, and
  Brunskill]{thomas2019preventing}
Thomas, P.~S., Castro~da Silva, B., Barto, A.~G., Giguere, S., Brun, Y., and
  Brunskill, E.
\newblock Preventing undesirable behavior of intelligent machines.
\newblock \emph{Science}, 366\penalty0 (6468):\penalty0 999--1004, 2019.

\bibitem[Wang et~al.(2020)Wang, Guo, Narasimhan, Cotter, Gupta, and
  Jordan]{NEURIPS2020_37d097ca}
Wang, S., Guo, W., Narasimhan, H., Cotter, A., Gupta, M., and Jordan, M.
\newblock Robust optimization for fairness with noisy protected groups.
\newblock In \emph{NeurIPS}, pp.\  5190--5203, 2020.

\bibitem[Wick et~al.(2019)Wick, panda, and Tristan]{NEURIPS2019_373e4c5d}
Wick, M., panda, s., and Tristan, J.-B.
\newblock Unlocking fairness: a trade-off revisited.
\newblock In \emph{NeurIPS}, 2019.

\bibitem[Zafar et~al.(2017{\natexlab{a}})Zafar, Valera, Gomez{-}Rodriguez, and
  Gummadi]{DBLP:conf/aistats/ZafarVGG17}
Zafar, M.~B., Valera, I., Gomez{-}Rodriguez, M., and Gummadi, K.~P.
\newblock Fairness constraints: {M}echanisms for fair classification.
\newblock In \emph{AISTATS}, pp.\  962--970, 2017{\natexlab{a}}.

\bibitem[Zafar et~al.(2017{\natexlab{b}})Zafar, Valera, Gomez{-}Rodriguez, and
  Gummadi]{DBLP:conf/www/ZafarVGG17}
Zafar, M.~B., Valera, I., Gomez{-}Rodriguez, M., and Gummadi, K.~P.
\newblock Fairness beyond disparate treatment {\&} disparate impact: {L}earning
  classification without disparate mistreatment.
\newblock In \emph{WWW}, pp.\  1171--1180, 2017{\natexlab{b}}.

\bibitem[Zemel et~al.(2013)Zemel, Wu, Swersky, Pitassi, and
  Dwork]{DBLP:conf/icml/ZemelWSPD13}
Zemel, R.~S., Wu, Y., Swersky, K., Pitassi, T., and Dwork, C.
\newblock Learning fair representations.
\newblock In \emph{ICML}, pp.\  325--333, 2013.

\bibitem[Zhang et~al.(2018)Zhang, Lemoine, and
  Mitchell]{DBLP:conf/aies/ZhangLM18}
Zhang, B.~H., Lemoine, B., and Mitchell, M.
\newblock Mitigating unwanted biases with adversarial learning.
\newblock In \emph{AIES}, pp.\  335--340, 2018.

\bibitem[Zhang(2006)]{zhang2006schur}
Zhang, F.
\newblock \emph{The Schur complement and its applications}, volume~4.
\newblock Springer Science \& Business Media, 2006.

\bibitem[Zhang \& Bareinboim(2018)Zhang and Bareinboim]{Zhang2018FairnessID}
Zhang, J. and Bareinboim, E.
\newblock Fairness in decision-making - the causal explanation formula.
\newblock In \emph{AAAI}, pp.\  2037--2045, 2018.

\bibitem[Zhao et~al.(2020)Zhao, Coston, Adel, and Gordon]{Zhao2020Conditional}
Zhao, H., Coston, A., Adel, T., and Gordon, G.~J.
\newblock Conditional learning of fair representations.
\newblock In \emph{ICLR}, 2020.

\end{thebibliography}
\bibliographystyle{icml2023}
%%%%%%%%%%%%%%%%%%%%%%%%%%%%%%%%%%%%%%%%%%%%%%%%%%%%%%%%%%%%

\newpage
\appendix
\onecolumn

\section{Appendix -- Theory}

\subsection{\revisionnew{Toy Example}}
\label{appendix:toyexample}

\revisionnew{In fair training, the data bias reflects the relationship between a label $\ry$ and group attribute $\rz$. For example, if all positive labels are in the same group, the data can be considered highly biased. Conversely, if the labels are randomly assigned to \textcolor{black}{all} groups, the data can be considered unbiased.}

\revisionnew{A bias change in the deployment data may have an adverse affect on a trained model's performance.
Figure~\ref{fig:toyexample} shows a toy example that illustrates how a fair classifier's performance is affected by a bias change during deployment. 
Here, the bias can be expressed via correlation, and we discuss their relationship in Sec.~\ref{sec:correlationshift}.
The training distribution is biased, where Group 2 has more positive labels than Group 1. On the other hand, in the deployment distribution, the bias decreases as the labels are equally distributed for each group. 
Suppose we train a fair classifier on the training data as shown on the left side where the DP fairness is perfect (i.e., $\Pr(\hat{\ry}{=}1)$ are the same for the groups), but the accuracy is not as a result. On the deployment data, the DP worsens while the accuracy still remains imperfect. The underlying problem is that the classifier was trained with a different bias in mind.}

\begin{figure}[h]
\centering
\includegraphics[width=0.55\columnwidth,trim=0cm 0.3cm 0cm 0cm]{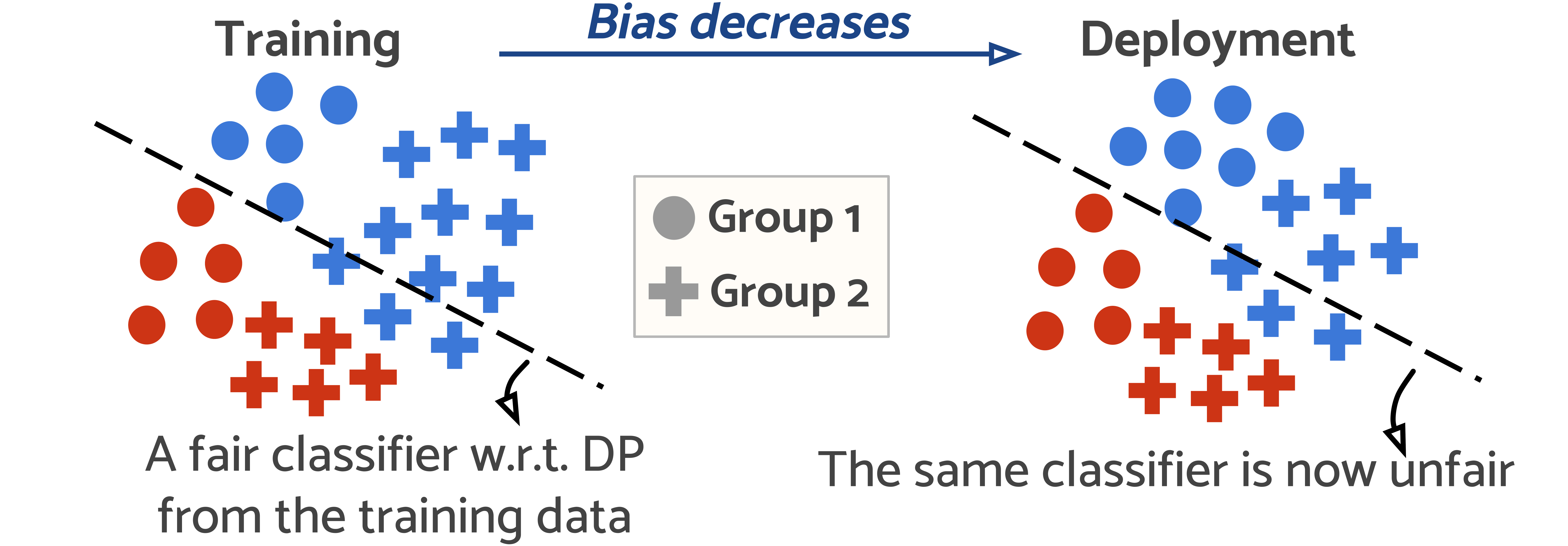}
\vspace{-0.2cm}
\caption{\revisionnew{A toy example for illustrating the impact of bias changes. The blue and red colors indicate positive and negative examples, respectively. On the left, there is a fair classifier that achieves DP on the training distribution. On the right, the deployment data has a lower bias, making the trained classifier both unfair and inaccurate.}}
\label{fig:toyexample}
\end{figure}

\subsection{Proof for Lemma~\ref{lemma:proportional}}
\label{appendix:correlationproperty}

Continuing from Sec.~\ref{sec:fundamentallimits}, we provide a proof for Lemma~\ref{lemma:proportional}.
% we find a property of ($\ry, \rz$)-correlation. 
\begin{proof}

We denote the Pearson's correlation coefficient between $\ry$ and $\rz$ as $\rho_{\ry\rz}$.
By definition, $\rho_{\ry\rz} = \frac{Cov(\ry, \rz)}{\sigma(\ry)\sigma(\rz)}$.
For binary $\ry$ and $\rz$, we can rewrite $\rho_{\ry\rz}$ as follows~\citep{cohen1975applied}:
\begin{align*}
    \rho_{\ry\rz} = \frac{Cov(\ry, \rz)}{\sigma(\ry)\sigma(\rz)}
    = \frac{\Pr(\ry=1, \rz=1)\Pr(\ry=0, \rz=0) - \Pr(\ry=1, \rz=0)\Pr(\ry=0, \rz=1)}{\sqrt{\Pr(\ry=1)\Pr(\ry=0)\Pr(\rz=1)\Pr(\rz=0)}}
\end{align*}

% The above equation can be written as follows:
% \begin{align*}
%     \rho_{\ry\rz} &= \frac{\Pr(\ry=1, \rz=1)\Pr(\ry=0, \rz=0) - \Pr(\ry=1, \rz=0)\Pr(\ry=0, \rz=1)}{\sqrt{\Pr(\ry=1)\Pr(\ry=0)\Pr(\rz=1)\Pr(\rz=0)}}\\
%     &= \frac{\frac{\Pr(\ry=1, \rz=1)}{\Pr(\rz=1)}\frac{\Pr(\ry=0, \rz=0)}{\Pr(\rz=0)} - \frac{\Pr(\ry=1, \rz=0)}{\Pr(\rz=0)}\frac{\Pr(\ry=0, \rz=1)}{\Pr(\rz=1)}}{\frac{\sqrt{\Pr(\ry=1)\Pr(\ry=0)\Pr(\rz=1)\Pr(\rz=0)}}{\Pr(\rz=1)\Pr(\rz=0)}}.
% \end{align*}

If the marginal probabilities of $\ry$ and $\rz$ (i.e., $\Pr(\ry=y)$ and $\Pr(\rz=z)$) remain the same, 
\begin{align*}
    \rho_{\ry\rz} &\propto \Pr(\ry=1, \rz=1)\Pr(\ry=0, \rz=0) - \Pr(\ry=1, \rz=0)\Pr(\ry=0, \rz=1)\\
    &\propto  \frac{\Pr(\ry=1, \rz=1)}{\Pr(\rz=1)}\frac{\Pr(\ry=0, \rz=0)}{\Pr(\rz=0)} - \frac{\Pr(\ry=1, \rz=0)}{\Pr(\rz=0)}\frac{\Pr(\ry=0, \rz=1)}{\Pr(\rz=1)}\\
    &= \Pr(\ry=1|\rz=1)\Pr(\ry=0|\rz=0) - \Pr(\ry=1|\rz=0)\Pr(\ry=0|\rz=1)\\
    &= \Pr(\ry=1|\rz=1)(1-\Pr(\ry=1|\rz=0)) - \Pr(\ry=1|\rz=0)(1-\Pr(\ry=1|\rz=1))\\
    &= \Pr(\ry=1|\rz=1) - \Pr(\ry=1|\rz=0) - \Pr(\ry=1|\rz=1)\Pr(\ry=1|\rz=0)+\Pr(\ry=1|\rz=0)\Pr(\ry=1|\rz=1)\\
    &= \Pr(\ry=1|\rz=1) - \Pr(\ry=1|\rz=0).
\end{align*}

Therefore, if the marginal probabilities of $\ry$ and $\rz$ remain the same, the ($\ry, \rz$)-correlation $\rho_{\ry\rz}$ is proportional to $\Pr(\ry=1|\rz=1) - \Pr(\ry=1|\rz=0)$, which is the difference between the conditional probabilities of $\ry$ given different $\rz$ values.

\end{proof}

\subsection{The Accuracy-Fairness Tradeoff When Improving Fairness w.r.t. a Single Metric}
\label{appendix:singlemetric}

Continuing from Sec.~\ref{sec:fundamentallimits}, we show that the ($\ry, \rz$)-correlation determines the accuracy-fairness tradeoff under certain conditions by applying Lemma~\ref{lemma:proportional} to Proposition~\ref{prop:menon}. 

The previous work~\citep{pmlr-v81-menon18a} shows that a higher alignment between $\ry$ and $\rz$ leads to a worse accuracy-fairness tradeoff w.r.t. demographic parity -- more details are described in Proposition 8 in~\citet{pmlr-v81-menon18a}. 
According to the previous work, alignment is defined as how many examples in each group have a specific label. We can thus measure the alignment as $\Pr(\ry=1|\rz=1) + \Pr(\ry=0|\rz=0)$. 
Then, we can rewrite the equation as follows:
\begin{align*}
    \text{alignment} &= \Pr(\ry=1|\rz=1) + \Pr(\ry=0|\rz=0)\\
    &= \Pr(\ry=1|\rz=1) + (1 - \Pr(\ry=1|\rz=0))\\
    &= 1 + \Pr(\ry=1|\rz=1) - \Pr(\ry=1|\rz=0).
\end{align*}

By applying Lemma~\ref{lemma:proportional} to the above result, we observe that the alignment between $\ry$ and $\rz$ is proportional to (($\ry, \rz$)-correlation $+ 1$) when the marginal distributions of $\ry$ and $\rz$ remain the same.
Therefore, when a model is trained w.r.t. demographic parity and the marginal probabilities of $\ry$ and $\rz$ do not change, a higher ($\ry, \rz$)-correlation results in a worse accuracy-fairness tradeoff.

% A higher alignment between $\Pr(\ry|\rx)$ and $\Pr(\rz|\rx)$ leads to a worse accuracy-fairness tradeoff.

% We can measure the alignment as follows: $\Pr(\ry=1|\rz=1) + \Pr(\ry=0|\rz=0)$.

% Then, we can rewrite the equation:
% \begin{align*}
%     \text{alignment} &= \Pr(\ry=1|\rz=1) + \Pr(\ry=0|\rz=0)\\
%     &= \Pr(\ry=1|\rz=1) + (1 - \Pr(\ry=1|\rz=0))\\
%     &= 1 + \Pr(\ry=1|\rz=1) - \Pr(\ry=1|\rz=0).
% \end{align*}

% Therefore, alignment $\propto$ (($\ry, \rz$)-correlation $+ 1$) when the marginal distributions of $\ry$ and $\rz$ are the same.

\subsection{$\varepsilon$-DP \& $\varepsilon$-EO}
\label{appendix:dp_eo}

Continuing from Sec.~\ref{sec:fundamentallimits}, we provide a proof for Proposition~\ref{prop:dpeo}.

\begin{proof}

A model achieves both $\varepsilon$-DP and $\varepsilon$-EO when the two inequalities $|\Pr(\hat{\ry}=y|\rz=z) - \Pr(\hat{\ry}=y|\rz=z')| \leq \varepsilon$ (i.e., $\varepsilon$-DP) and $|\Pr(\hat{\ry}=y|\ry=y, \rz=z) - \Pr(\hat{\ry}=y|\ry=y, \rz=z')| \leq \varepsilon$ (i.e., $\varepsilon$-EO) are satisfied, where $y \neq y', y, y' \in \{0,1\}$ and $z \neq z', z, z' \in \{0,1\}$. 

By combining the law of total probability and $\varepsilon$-DP, we can get the following inequality:
\begin{align}
    -\varepsilon \leq \Pr(\hat{\ry}=y|\rz=z, \ry=y') \Pr(\ry=y'|\rz=z) &+ \Pr(\hat{\ry}=y|\rz=z, \ry=y) \Pr(\ry=y|\rz=z) \nonumber \\ 
    - \Pr(\hat{\ry}=y|\rz=z', \ry=y') \Pr(\ry=y'|\rz=z') &- \Pr(\hat{\ry}=y|\rz=z', \ry=y) \Pr(\ry=y|\rz=z') \leq \varepsilon . \label{eq:dp_modif}
\end{align}

Also, $\varepsilon$-EO can be rewritten as follows:
\begin{align}
    -\varepsilon + \Pr(\hat{\ry}=y|\rz=z',\ry=y) \leq \Pr(\hat{\ry}=y|\rz=z,\ry=y)  \leq \varepsilon + \Pr(\hat{\ry}=y|\rz=z',\ry=y). \label{eq:eo_modif}
\end{align}

By substituting Eq.~\ref{eq:eo_modif} to Eq.~\ref{eq:dp_modif}, we can get the following inequality:
\begin{align}
    &(-\varepsilon + \Pr(\hat{\ry}=y|\rz=z',\ry=y'))\Pr(\ry=y'|\rz=z) + (-\varepsilon + \Pr(\hat{\ry}=y|\rz=z',\ry=y)) \Pr(\ry=y|\rz=z) \nonumber \\  
    &- \Pr(\hat{\ry}=y|\rz=z', \ry=y') \Pr(\ry=y'|\rz=z') - \Pr(\hat{\ry}=y|\rz=z', \ry=y) \Pr(\ry=y|\rz=z') \nonumber \\  
    &\leq \Pr(\hat{\ry}=y|\rz=z, \ry=y') \Pr(\ry=y'|\rz=z) + \Pr(\hat{\ry}=y|\rz=z, \ry=y) \Pr(\ry=y|\rz=z) \label{eq:dp_eo_sub} \\
    &- \Pr(\hat{\ry}=y|\rz=z', \ry=y') \Pr(\ry=y'|\rz=z') - \Pr(\hat{\ry}=y|\rz=z', \ry=y) \Pr(\ry=y|\rz=z')  \nonumber \\ 
    &\leq (\varepsilon + \Pr(\hat{\ry}=y|\rz=z',\ry=y'))\Pr(\ry=y'|\rz=z) + (\varepsilon + \Pr(\hat{\ry}=y|\rz=z',\ry=y)) \Pr(\ry=y|\rz=z) \nonumber \\ 
    &- \Pr(\hat{\ry}=y|\rz=z', \ry=y') \Pr(\ry=y'|\rz=z') - \Pr(\hat{\ry}=y|\rz=z', \ry=y) \Pr(\ry=y|\rz=z'). \nonumber 
\end{align}

By subtracting Eq.~\ref{eq:eo_modif} from Eq.~\ref{eq:dp_eo_sub},
\begin{align*}
    &-\varepsilon + (-\varepsilon + \Pr(\hat{\ry}=y|\rz=z',\ry=y'))\Pr(\ry=y'|\rz=z) + (-\varepsilon + \Pr(\hat{\ry}=y|\rz=z',\ry=y)) \Pr(\ry=y|\rz=z) \\
    &- \Pr(\hat{\ry}=y|\rz=z', \ry=y') \Pr(\ry=y'|\rz=z') - \Pr(\hat{\ry}=y|\rz=z', \ry=y) \Pr(\ry=y|\rz=z') \\
    &\leq 0  \\
    &\leq \varepsilon + (\varepsilon + \Pr(\hat{\ry}=y|\rz=z',\ry=y'))\Pr(\ry=y'|\rz=z) + (\varepsilon + \Pr(\hat{\ry}=y|\rz=z',\ry=y)) \Pr(\ry=y|\rz=z) \\
    &- \Pr(\hat{\ry}=y|\rz=z', \ry=y') \Pr(\ry=y'|\rz=z') - \Pr(\hat{\ry}=y|\rz=z', \ry=y) \Pr(\ry=y|\rz=z').
\end{align*}

By rearranging the terms, we get the followings:
\begin{align*}
    -2\varepsilon \leq (\Pr(\hat{\ry}=y|\rz=z',\ry=y')-\Pr(\hat{\ry}=y|\rz=z',\ry=y))(\Pr(\ry=y'|\rz=z)-\Pr(\ry=y'|\rz=z')) \leq 2\varepsilon \\
    \Rightarrow -2\varepsilon \leq (\Pr(\hat{\ry}=y|\rz=z',\ry=y')-\Pr(\hat{\ry}=y|\rz=z',\ry=y))(1-\Pr(\ry=y|\rz=z)-1+\Pr(\ry=y|\rz=z')) \leq 2\varepsilon.
\end{align*}

% \resizebox{1.05\hsize}{!}{$|\Pr(\ry=y|\rz=1)-\Pr(\ry=y|\rz=0)|~|\Pr(\hat{\ry}=y|\ry=y',\rz=1)$}\\$-\Pr(\hat{\ry}=y|\ry=y,\rz=0)| \leq 2\varepsilon, y \neq y', y, y' \in \{0,1\}$.

Thus, we get the following inequality, which is in Proposition~\ref{prop:dpeo}:
\begin{align*}
     |\Pr(\ry=y|\rz=z)-\Pr(\ry=y|\rz=z')|~|\Pr(\hat{\ry}=y|\ry=y,\rz=z)-\Pr(\hat{\ry}=y|\ry=y',\rz=z)| \leq 2\varepsilon.
\end{align*}

\end{proof}

% \subsection{More Discussion on Supporting both $\varepsilon$-DP and $\varepsilon$-EO}
% \label{appendix:dp_eo_example}

% Continuing from Sec.~\ref{sec:fundamentallimits}, we explain the details on Proposition~\ref{prop:dpeo}, which shows when a model can achieve $\varepsilon$-fairness w.r.t. both DP and EO.
% In Proposition~\ref{prop:dpeo}, we observe that if the data is highly biased, a model cannot achieve high fairness (i.e., low $\varepsilon$) unless the second term in the left side (i.e., $|\Pr(\hat{\ry}=y|\ry=y,\rz=z)-\Pr(\hat{\ry}=y|\ry=y',\rz=z)|$) decreases. 

% However, the second term is related to the model accuracy, where lowering it to zero causes the model to make random predictions or predictions that are all the same.
% For example, 

% As we discussed, Proposition~\ref{prop:dpeo} shows that the achievable fairness level $\varepsilon$ is affected by the data bias, which is represented by the difference between the conditional probabilities of $\ry$ given $\rz$ (i.e., $|\Pr(\ry=y|\rz=1)-\Pr(\ry=y|\rz=0)|$). 

\subsection{$\varepsilon$-PP \& $\varepsilon$-DP}
\label{appendix:pp_dp}

Continuing from Sec.~\ref{sec:fundamentallimits}, we consider fair training w.r.t. both predictive parity (PP) and demographic parity (DP). 
We can define $\varepsilon$-PP \& $\varepsilon$-DP as follows:
\begin{align*}
    -\varepsilon \leq \Pr(\ry=y|\hat{\ry}=y, \rz=0) &- \Pr(\ry=y|\hat{\ry}=y, \rz=1) \leq \varepsilon ~~\text{(i.e., $\varepsilon$-PP)}\\
    -\varepsilon \leq \Pr(\hat{\ry}=y|\rz=0) &- \Pr(\hat{\ry}=y| \rz=1) \leq \varepsilon ~~\text{(i.e., $\varepsilon$-DP)}
\end{align*}

Based on these definitions, we give a proposition for $\varepsilon$-PP and $\varepsilon$-DP:
\begin{proposition}[$\varepsilon$-PP \& $\varepsilon$-DP]\label{prop:ppdp}
Let a model achieve both $\varepsilon$-PP and $\varepsilon$-DP. Then, the following inequality holds:
\begin{align*}
    \frac{|\Pr(\ry=y,\hat{\ry}=y|\rz=z) - \Pr(\ry=y,\hat{\ry}=y|\rz=z')|}{2 \Pr(\ry=y|\hat{\ry}=y, \rz=z)+\Pr(\hat{\ry}=y|\rz=z)+\Pr(\ry=y|\hat{\ry}=y, \rz=z')} \leq \varepsilon, ~~
     z\neq z',~z,z' \in \mathcal{\sZ},~y \in \mathcal{\sY}.    
\end{align*}
\end{proposition}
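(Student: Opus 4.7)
}
The plan is to reduce the claim to the elementary identity $AB - CD = B(A-C) + C(B-D)$ and then invoke the $\varepsilon$-PP and $\varepsilon$-DP assumptions directly. First I would shorten notation by writing $P_z := \Pr(\hat{\ry}=y\mid \rz=z)$ and $Q_z := \Pr(\ry=y\mid\hat{\ry}=y,\rz=z)$ (and similarly for $z'$). Using the product rule of conditional probability, the joint appearing in the numerator becomes
\begin{equation*}
\Pr(\ry=y,\hat{\ry}=y\mid\rz=z) \;=\; \Pr(\ry=y\mid\hat{\ry}=y,\rz=z)\,\Pr(\hat{\ry}=y\mid\rz=z) \;=\; Q_z P_z,
\end{equation*}
so the numerator equals $|Q_z P_z - Q_{z'} P_{z'}|$, while the $\varepsilon$-PP and $\varepsilon$-DP hypotheses translate to $|Q_z - Q_{z'}| \le \varepsilon$ and $|P_z - P_{z'}| \le \varepsilon$ respectively.

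Next I would perform the add-and-subtract decomposition
\begin{equation*}
Q_z P_z - Q_{z'} P_{z'} \;=\; P_z(Q_z - Q_{z'}) + Q_{z'}(P_z - P_{z'}),
\end{equation*}
take absolute values, and apply the triangle inequality together with the two fairness bounds to obtain $|Q_z P_z - Q_{z'} P_{z'}| \le \varepsilon(P_z + Q_{z'})$. Since probabilities are non-negative, $P_z + Q_{z'} \le 2Q_z + P_z + Q_{z'}$, which immediately upper-bounds the ratio in the proposition by $\varepsilon$. Dividing and rearranging gives the claimed inequality.

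The computation itself is routine; the main thing to get right is choosing the correct grouping in the add-and-subtract step so that the coefficients on $(Q_z - Q_{z'})$ and $(P_z - P_{z'})$ are manifestly non-negative and match (or are dominated by) the terms appearing in the stated denominator. Note that the symmetric alternative $Q_z(P_z - P_{z'}) + P_{z'}(Q_z - Q_{z'})$ would yield a bound with denominator $Q_z + P_{z'}$ instead, which is likewise dominated by $2Q_z + P_z + Q_{z'}$, so the proposition can be proved either way and any looseness is absorbed by the extra $Q_z + P_z$ term in the stated denominator. No further structural argument is required.
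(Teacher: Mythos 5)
Your argument is correct, and it takes a genuinely different and substantially shorter route than the paper's. The paper first rewrites $\varepsilon$-PP so that the demographic-parity quantity $\Pr(\hat{\ry}=y\mid\rz=z)$ appears explicitly, then performs a two-stage ``sandwich'' substitution of the $\varepsilon$-DP bounds into the $\varepsilon$-PP inequality (its Eqs.~\ref{eq:ppdp_1}--\ref{eq:ppdp_5}), and finally rearranges terms to land exactly on the denominator $2Q_z+P_z+Q_{z'}$. You instead factor the joint via the chain rule, apply the decomposition $Q_zP_z-Q_{z'}P_{z'}=P_z(Q_z-Q_{z'})+Q_{z'}(P_z-P_{z'})$, and use the triangle inequality; this yields the sharper intermediate bound $\lvert Q_zP_z-Q_{z'}P_{z'}\rvert\le\varepsilon\,(P_z+Q_{z'})$, of which the proposition is a weakening since $Q_z\ge 0$. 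Your approach buys transparency and reveals that the $2Q_z$ term in the stated denominator is pure slack; the paper's approach buys nothing extra beyond arriving at its particular denominator. Two minor caveats: (i) your closing remark that the symmetric decomposition would give denominator $Q_z+P_{z'}$, ``likewise dominated by'' $2Q_z+P_z+Q_{z'}$, is not right in general --- that domination requires $P_{z'}\le Q_z+P_z+Q_{z'}$, which can fail --- but this aside is not load-bearing, since your chosen grouping is the one that works; (ii) both your proof and the paper's implicitly assume the denominator is nonzero, and the chain-rule factorization assumes $\Pr(\hat{\ry}=y,\rz=z)>0$ so that $Q_z$ is well defined, an assumption the paper also makes tacitly.
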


\begin{proof}

From $\varepsilon$-DP, we get 
\begin{align}
    -\varepsilon +\Pr(\hat{\ry}=y| \rz=1) \leq \Pr(\hat{\ry}=y|\rz=0) \leq \varepsilon + \Pr(\hat{\ry}=y| \rz=1). \label{eq:ppdp_1}    
\end{align}

From $\varepsilon$-PP, we get 
{\small  
\begin{align}
-\varepsilon
    \leq \Pr(\hat{\ry}{=}y|\rz{=}0) \frac{\Pr(\rz{=}0)}{\Pr(\hat{\ry}{=}y,\rz{=}0)} \frac{\Pr(\ry{=}y,\hat{\ry}{=}y, \rz{=}0)}{\Pr(\hat{\ry}{=}y,\rz{=}0)} - \Pr(\hat{\ry}{=}y|\rz{=}1) \frac{\Pr(\rz{=}1)}{\Pr(\hat{\ry}{=}y,\rz{=}1)} \frac{\Pr(\ry{=}y,\hat{\ry}{=}y, \rz{=}1)}{\Pr(\hat{\ry}{=}y,\rz{=}1)}
    \leq \varepsilon. \label{eq:ppdp_2} 
\end{align}
}%

By substituting Eq.~\ref{eq:ppdp_1} to Eq.~\ref{eq:ppdp_2}, we can get the following inequality:
{\small 
\begin{align}
    &\{-\varepsilon +\Pr(\hat{\ry}=y|\rz=1)\} \frac{\Pr(\rz=0)}{\Pr(\hat{\ry}=y,\rz=0)} \frac{\Pr(\ry=y,\hat{\ry}=y, \rz=0)}{\Pr(\hat{\ry}=y,\rz=0)} - \Pr(\hat{\ry}=y|\rz=1) \frac{\Pr(\rz=1)}{\Pr(\hat{\ry}=y,\rz=1)} \frac{\Pr(\ry=y,\hat{\ry}=y, \rz=1)}{\Pr(\hat{\ry}=y,\rz=1)} \nonumber \\
    &\leq \Pr(\hat{\ry}=y|\rz=0) \frac{\Pr(\rz=0)}{\Pr(\hat{\ry}=y,\rz=0)} \frac{\Pr(\ry=y,\hat{\ry}=y, \rz=0)}{\Pr(\hat{\ry}=y,\rz=0)} - \Pr(\hat{\ry}=y|\rz=1) \frac{\Pr(\rz=1)}{\Pr(\hat{\ry}=y,\rz=1)} \frac{\Pr(\ry=y,\hat{\ry}=y, \rz=1)}{\Pr(\hat{\ry}=y,\rz=1)} \label{eq:ppdp_3}\\
    &\leq \{\varepsilon +\Pr(\hat{\ry}=y|\rz=1)\} \frac{\Pr(\rz=0)}{\Pr(\hat{\ry}=y,\rz=0)} \frac{\Pr(\ry=y,\hat{\ry}=y, \rz=0)}{\Pr(\hat{\ry}=y,\rz=0)} - \Pr(\hat{\ry}=y|\rz=1) \frac{\Pr(\rz=1)}{\Pr(\hat{\ry}=y,\rz=1)} \frac{\Pr(\ry=y,\hat{\ry}=y, \rz=1)}{\Pr(\hat{\ry}=y,\rz=1)}.\nonumber
\end{align}
}% 

By subtracting Eq.~\ref{eq:ppdp_2} from Eq.~\ref{eq:ppdp_3},
{\small 
\begin{align*} 
    &-\varepsilon + \{-\varepsilon +\Pr(\hat{\ry}{=}y|\rz{=}1)\} \frac{\Pr(\rz=0)}{\Pr(\hat{\ry}=y,\rz=0)} \frac{\Pr(\ry=y,\hat{\ry}=y, \rz=0)}{\Pr(\hat{\ry}=y,\rz=0)} - \Pr(\hat{\ry}{=}y|\rz{=}1) \frac{\Pr(\rz=1)}{\Pr(\hat{\ry}=y,\rz=1)} \frac{\Pr(\ry=y,\hat{\ry}=y, \rz=1)}{\Pr(\hat{\ry}=y,\rz=1)} \\
    &\leq 0 \\
    &\leq \varepsilon + \{\varepsilon +\Pr(\hat{\ry}{=}y|\rz{=}1)\} \frac{\Pr(\rz=0)}{\Pr(\hat{\ry}=y,\rz=0)} \frac{\Pr(\ry=y,\hat{\ry}=y, \rz=0)}{\Pr(\hat{\ry}=y,\rz=0)} - \Pr(\hat{\ry}{=}y|\rz{=}1) \frac{\Pr(\rz=1)}{\Pr(\hat{\ry}=y,\rz=1)} \frac{\Pr(\ry=y,\hat{\ry}=y, \rz=1)}{\Pr(\hat{\ry}=y,\rz=1)}.
\end{align*}
}%

By rearranging the terms, we get the following inequality:
\begin{align*}
    &-\varepsilon \cdot \frac{\Pr(\rz=0)\Pr(\ry=y,\hat{\ry}=y, \rz=0)}{(\Pr(\hat{\ry}=y,\rz=0))^2} -\varepsilon \\
    &\leq \Pr(\hat{\ry}=y|\rz=1) \{ \frac{\Pr(\rz=1)\Pr(\ry=y,\hat{\ry}=y, \rz=1)}{(\Pr(\hat{\ry}=y,\rz=1))^2} - \frac{\Pr(\rz=0)\Pr(\ry=y,\hat{\ry}=y, \rz=0)}{(\Pr(\hat{\ry}=y,\rz=0))^2}\} \\
    &\leq \varepsilon \cdot \frac{\Pr(\rz=0)\Pr(\ry=y,\hat{\ry}=y, \rz=0)}{(\Pr(\hat{\ry}=y,\rz=0))^2} +\varepsilon
\end{align*}

which is rewritten as follows:
\begin{align*}
    &-\varepsilon \cdot \frac{\Pr(\ry=y|\hat{\ry}=y, \rz=0)+\Pr(\hat{\ry}=y|\rz=0)}{\Pr(\hat{\ry}=y|\rz=0)} \\
    &\leq \Pr(\hat{\ry}=y|\rz=1) \{ \frac{\Pr(\ry=y|\hat{\ry}=y, \rz=1)}{\Pr(\hat{\ry}=y|\rz=1)} - \frac{\Pr(\ry=y|\hat{\ry}=y, \rz=0)}{\Pr(\hat{\ry}=y|\rz=0)}\} \\
    &\leq \varepsilon \cdot \frac{\Pr(\ry=y|\hat{\ry}=y, \rz=0)+\Pr(\hat{\ry}=y|\rz=0)}{\Pr(\hat{\ry}=y|\rz=0)}.
\end{align*}

By multiplying $\frac{\Pr(\hat{\ry}=y|\rz=0)}{\Pr(\ry=y|\hat{\ry}=y, \rz=0)+\Pr(\hat{\ry}=y|\rz=0)}$ for all terms in the above inequality, we can get
\begin{align}
    -\varepsilon
    \leq \frac{\Pr(\hat{\ry}=y|\rz=0)\Pr(\ry=y|\hat{\ry}=y, \rz=1) - \Pr(\hat{\ry}=y|\rz=1)\Pr(\ry=y|\hat{\ry}=y, \rz=0)}{\Pr(\ry=y|\hat{\ry}=y, \rz=0)+\Pr(\hat{\ry}=y|\rz=0)}
    \leq \varepsilon. \label{eq:ppdp_4}
\end{align}

Let $A = \Pr(\ry=y|\hat{\ry}=y, \rz=0)+\Pr(\hat{\ry}=y|\rz=0)$. Since $-\varepsilon \leq \Pr(\hat{\ry}=y|\rz=0) - \Pr(\hat{\ry}=y| \rz=1) \leq \varepsilon$ (i.e., $\varepsilon$-DP), we can make another inequality from Eq.~\ref{eq:ppdp_4}:
\begin{align}
    &\frac{(-\varepsilon+\Pr(\hat{\ry}=y|\rz=1))\Pr(\ry=y|\hat{\ry}=y, \rz=1) - (\varepsilon+\Pr(\hat{\ry}=y|\rz=0))\Pr(\ry=y|\hat{\ry}=y, \rz=0)}{A} \nonumber \\
    &\leq \frac{\Pr(\hat{\ry}=y|\rz=0)\Pr(\ry=y|\hat{\ry}=y, \rz=1) - \Pr(\hat{\ry}=y|\rz=1)\Pr(\ry=y|\hat{\ry}=y, \rz=0)}{A} \label{eq:ppdp_5}\\
    &\leq \frac{(\varepsilon+\Pr(\hat{\ry}=y|\rz=1))\Pr(\ry=y|\hat{\ry}=y, \rz=1) - (-\varepsilon+\Pr(\hat{\ry}=y|\rz=0))\Pr(\ry=y|\hat{\ry}=y, \rz=0)}{A}. \nonumber
\end{align}

By subtracting Eq.~\ref{eq:ppdp_4} and Eq.~\ref{eq:ppdp_5},
\begin{align*}
    &-\varepsilon+\frac{(-\varepsilon+\Pr(\hat{\ry}=y|\rz=1))\Pr(\ry=y|\hat{\ry}=y, \rz=1) - (\varepsilon+\Pr(\hat{\ry}=y|\rz=0))\Pr(\ry=y|\hat{\ry}=y, \rz=0)}{A}\\
    &\leq 0\\
    &\leq \varepsilon+\frac{(\varepsilon+\Pr(\hat{\ry}=y|\rz=1))\Pr(\ry=y|\hat{\ry}=y, \rz=1) - (-\varepsilon+\Pr(\hat{\ry}=y|\rz=0))\Pr(\ry=y|\hat{\ry}=y, \rz=0)}{A}.
\end{align*}

Since $\Pr(\hat{\ry}=y|\rz=z)\Pr(\ry=y|\hat{\ry}=y, \rz=z) = \Pr(\ry=y,\hat{\ry}=y|\rz=z)$, we can rewritten the above inequality as follows:
\begin{align*}
    &-\varepsilon + \frac{-\varepsilon\{\Pr(\ry=y|\hat{\ry}=y, \rz=1) + \Pr(\ry=y|\hat{\ry}=y, \rz=0)\} + \Pr(\ry=y,\hat{\ry}=y|\rz=1) - \Pr(\ry=y,\hat{\ry}=y|\rz=0)}{A}\\
    &\leq 0 \\
    &\leq \varepsilon + \frac{\varepsilon\{\Pr(\ry=y|\hat{\ry}=y, \rz=1) + \Pr(\ry=y|\hat{\ry}=y, \rz=0)\} + \Pr(\ry=y,\hat{\ry}=y|\rz=1) - \Pr(\ry=y,\hat{\ry}=y|\rz=0)}{A}.
\end{align*}

Now, let $B {=} \Pr(\ry=y|\hat{\ry}=y, \rz=1) + \Pr(\ry=y|\hat{\ry}=y, \rz=0)$ and $C {=} \Pr(\ry=y,\hat{\ry}=y|\rz=1) - \Pr(\ry=y,\hat{\ry}=y|\rz=0)$. Then,
\begin{align*}
    -\varepsilon + \frac{1}{A}(-\varepsilon \cdot B + C) \leq 0 \leq \varepsilon + \frac{1}{A}(\varepsilon \cdot B + C).
\end{align*}

By arranging the terms,
\begin{align*}
    -\varepsilon \cdot (1 &+ \frac{B}{A}) + \frac{C}{A} \leq 0 \leq \varepsilon \cdot (1 + \frac{B}{A}) + \frac{C}{A} \\
    &\Longrightarrow -\varepsilon \leq \frac{-C}{A+B} \leq \varepsilon.
\end{align*}

Therefore, we can conclude
\begin{align*}
    \frac{|\Pr(\ry=y,\hat{\ry}=y|\rz=0) - \Pr(\ry=y,\hat{\ry}=y|\rz=1)|}{2 \Pr(\ry=y|\hat{\ry}=y, \rz=0)+\Pr(\hat{\ry}=y|\rz=0)+\Pr(\ry=y|\hat{\ry}=y, \rz=1)} \leq \varepsilon.
\end{align*}

If we reverse the $\rz$ values in the derivation, we get the same formula with only the $\rz$ value changed in the above expression.
As a result, we get the following inequality, which is in Proposition~\ref{prop:ppdp}:
\begin{align*}
    \frac{|\Pr(\ry=y,\hat{\ry}=y|\rz=z) - \Pr(\ry=y,\hat{\ry}=y|\rz=z')|}{2 \Pr(\ry=y|\hat{\ry}=y, \rz=z)+\Pr(\hat{\ry}=y|\rz=z)+\Pr(\ry=y|\hat{\ry}=y, \rz=z')} \leq \varepsilon, ~~
     z\neq z',~z,z' \in \mathcal{\sZ},~y \in \mathcal{\sY}.    
\end{align*}

\end{proof}
% Let $\varepsilon$-PP \& $\varepsilon$-DP as follows:
% \begin{align*}
%     -\varepsilon \leq \Pr(\ry=y|\hat{\ry}=y, \rz=z) &- \Pr(\ry=y|\hat{\ry}=y, \rz=z') \leq \varepsilon ~~\text{(i.e., $\varepsilon$-PP)}\\
%     -\varepsilon \leq \Pr(\hat{\ry}=y|\rz=z) &- \Pr(\hat{\ry}=y| \rz=z') \leq \varepsilon ~~\text{(i.e., $\varepsilon$-DP)}
% \end{align*}
% \begin{align*}
%      \frac{|\Pr(\ry=y,\hat{\ry}=y|\rz=z) - \Pr(\ry=y,\hat{\ry}=y|\rz=z')|}{2 \Pr(\ry=y|\hat{\ry}=y, \rz=z)+\Pr(\hat{\ry}=y|\rz=z)+\Pr(\ry=y|\hat{\ry}=y, \rz=z')} \leq \varepsilon, ~~
%      z\neq z',~z,z' \in \mathcal{\sZ},~y \in \mathcal{\sY}
% \end{align*}

Therefore, to make $\varepsilon = 0$, the numerator term $|\Pr(\ry=y,\hat{\ry}=y|\rz=z) - \Pr(\ry=y,\hat{\ry}=y|\rz=z')|$ should be zero. When $|\Pr(\ry=y,\hat{\ry}=y|\rz=z) - \Pr(\ry=y,\hat{\ry}=y|\rz=z')| = 0$, the following is satisfied: $(\ry, \hat{\ry}) \perp \rz$, which implies $\ry \perp \rz$ and $\hat{\ry} \perp \rz$. Here, $\ry \perp \rz$ indicates that $|\Pr(\ry=y|\rz=z)-\Pr(\ry=y|\rz=z')|$ is zero. 
As a result, perfectly satisfying PP and DP requires the data to be fully unbiased. We thus suspect that the achievable model fairness w.r.t. both PP and DP is affected by the ($\ry, \rz$)-correlation.

\subsection{$\varepsilon$-EO \& $\varepsilon$-PP}
\label{appendix:eo_pp}

Continuing from Sec.~\ref{sec:fundamentallimits}, we consider fair training w.r.t. both equalized odds (EO) and predictive parity (PP). 
We can define $\varepsilon$-EO \& $\varepsilon$-PP as follows:
\begin{align*}
    -\varepsilon \leq \Pr(\hat{\ry}=y|\ry=y, \rz=0) &- \Pr(\hat{\ry}=y|\ry=y, \rz=1) \leq \varepsilon ~~\text{(i.e., $\varepsilon$-EO)}\\
    -\varepsilon \leq \Pr(\ry=y|\hat{\ry}=y, \rz=0) &- \Pr(\ry=y|\hat{\ry}=y, \rz=1) \leq \varepsilon ~~\text{(i.e., $\varepsilon$-PP)}
\end{align*}

Based on these definitions, we give a proposition for $\varepsilon$-EO and $\varepsilon$-PP:
\begin{proposition}[$\varepsilon$-EO \& $\varepsilon$-PP]\label{prop:eopp}
Let a model achieve both $\varepsilon$-EO and $\varepsilon$-PP. Then, the following inequality holds:
\begin{align*}
    \frac{\Pr(\hat{\ry}{=}y|\ry{=}y,\rz{=}z')}{\Pr(\hat{\ry}{=}y, \rz{=}z) + \Pr(\ry{=}y, \rz{=}z)} \cdot | \Pr(\ry{=}y)-\Pr(\ry{=}y | \rz{=}z') \cdot \frac{\Pr(\hat{\ry}{=}y)}{\Pr(\hat{\ry}{=}y | \rz{=}z')} |
    \leq \varepsilon , ~~
     z\neq z',~z,z' \in \mathcal{\sZ},~y \in \mathcal{\sY}.
\end{align*}
\end{proposition}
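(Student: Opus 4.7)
The plan is to mirror the template used in the proofs of Propositions~\ref{prop:dpeo} and~\ref{prop:ppdp}: start from $\varepsilon$-PP, replace the conditional probabilities that appear inside it by their $\varepsilon$-EO bounds via the standard sandwich-and-subtract maneuver, and then reduce the resulting inequality by Bayes' rule and the law of total probability into the form stated in Proposition~\ref{prop:eopp}. Because both EO and PP condition on either $\ry{=}y$ or $\hat{\ry}{=}y$ together with $\rz{=}z$, the natural bridge between them is Bayes' rule $\Pr(\ry{=}y|\hat{\ry}{=}y,\rz{=}z) = \Pr(\hat{\ry}{=}y|\ry{=}y,\rz{=}z)\cdot \Pr(\ry{=}y|\rz{=}z)/\Pr(\hat{\ry}{=}y|\rz{=}z)$, which converts the PP difference into an expression involving exactly the terms controlled by EO.

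First I would rewrite $\varepsilon$-PP as
\[
\left|\Pr(\hat{\ry}{=}y|\ry{=}y,\rz{=}z)\tfrac{\Pr(\ry{=}y|\rz{=}z)}{\Pr(\hat{\ry}{=}y|\rz{=}z)} - \Pr(\hat{\ry}{=}y|\ry{=}y,\rz{=}z')\tfrac{\Pr(\ry{=}y|\rz{=}z')}{\Pr(\hat{\ry}{=}y|\rz{=}z')}\right|\le \varepsilon.
\]
Next I would substitute the $\varepsilon$-EO bounds $\Pr(\hat{\ry}{=}y|\ry{=}y,\rz{=}z)\in[\Pr(\hat{\ry}{=}y|\ry{=}y,\rz{=}z')-\varepsilon,\Pr(\hat{\ry}{=}y|\ry{=}y,\rz{=}z')+\varepsilon]$ into this inequality, exactly as Eqs.~(\ref{eq:dp_eo_sub}) and~(\ref{eq:ppdp_5}) do in the earlier proofs, and then subtract a copy of $\varepsilon$-EO scaled by the appropriate positive factor to cancel the common term $\Pr(\hat{\ry}{=}y|\ry{=}y,\rz{=}z')$. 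This produces a new two-sided inequality in which only the $\rz{=}z'$ likelihood $\Pr(\hat{\ry}{=}y|\ry{=}y,\rz{=}z')$ and marginal-style combinations of $\Pr(\ry{=}y|\rz)$ and $\Pr(\hat{\ry}{=}y|\rz)$ survive.

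Finally I would apply the law of total probability $\Pr(\ry{=}y)=\sum_{\rz}\Pr(\ry{=}y|\rz)\Pr(\rz)$ together with its counterpart for $\hat{\ry}$ to reassemble the surviving linear combinations into the marginal form $\Pr(\ry{=}y)-\Pr(\ry{=}y|\rz{=}z')\cdot\Pr(\hat{\ry}{=}y)/\Pr(\hat{\ry}{=}y|\rz{=}z')$. Dividing through by the positive factor that has accumulated on the right-hand side -- which, after expressing conditional probabilities as ratios of joints, should collapse to $\Pr(\hat{\ry}{=}y,\rz{=}z)+\Pr(\ry{=}y,\rz{=}z)$ up to a factor of $\Pr(\hat{\ry}{=}y|\ry{=}y,\rz{=}z')$ -- and taking absolute values yields the claimed bound.

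The main obstacle will be the bookkeeping in the last step: after the EO substitution, several terms of the form $\Pr(\hat{\ry}{=}y|\ry{=}y,\rz{=}z')\cdot\Pr(\ry{=}y|\rz)/\Pr(\hat{\ry}{=}y|\rz)$ appear with mixed $\rz$-arguments, and it is not a priori obvious that they regroup into exactly $\Pr(\hat{\ry}{=}y,\rz{=}z)+\Pr(\ry{=}y,\rz{=}z)$ in the denominator and the specific marginal expression in the numerator. I expect that the asymmetry between $z$ and $z'$ in the final statement -- which mirrors the asymmetry in Proposition~\ref{prop:ppdp} -- arises precisely because the EO sandwich is applied only to the $\rz{=}z$ side of the PP difference, and verifying that the resulting grouping matches the stated form (for both choices of which side plays the role of $z'$) will require careful algebraic manipulation analogous to the transition from Eq.~(\ref{eq:ppdp_5}) to the final conclusion of Proposition~\ref{prop:ppdp}.
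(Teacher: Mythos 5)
Your plan is essentially the paper's proof: the paper likewise uses Bayes' rule to rewrite the $\varepsilon$-PP difference as $\Pr(\hat{\ry}{=}y|\ry{=}y,\rz{=}z)\,\Pr(\ry{=}y,\rz{=}z)/\Pr(\hat{\ry}{=}y,\rz{=}z)$ minus its $\rz{=}z'$ counterpart, sandwiches $\Pr(\hat{\ry}{=}y|\ry{=}y,\rz{=}z)$ by its $\varepsilon$-EO bounds, combines the two sandwiches, and finishes with the total-probability identity $\Pr(\ry{=}y,\rz{=}z)=\Pr(\ry{=}y)-\Pr(\ry{=}y,\rz{=}z')$ and division by the accumulated factor $1+\Pr(\ry{=}y,\rz{=}z)/\Pr(\hat{\ry}{=}y,\rz{=}z)$, exactly as you describe. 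One small correction to your step three: what is played off against the substituted sandwich is the $\pm\varepsilon$ bound from $\varepsilon$-PP itself (not a scaled copy of $\varepsilon$-EO), and the term $\Pr(\hat{\ry}{=}y|\ry{=}y,\rz{=}z')$ is not cancelled but factors out and survives as the multiplicative prefactor in the final inequality.
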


\begin{proof}
From $\varepsilon$-EO, we get 
\begin{align}
    -\varepsilon + \Pr(\hat{\ry}=y|\ry=y, \rz=1) \leq \Pr(\hat{\ry}=y|\ry=y, \rz=0) \leq \varepsilon + \Pr(\hat{\ry}=y|\ry=y, \rz=1). \label{eq:eopp_1} 
\end{align}

From $\varepsilon$-PP, we get
\begin{align}
    -\varepsilon \leq \Pr(\hat{\ry}=y|\ry=y, \rz=0) \frac{\Pr(\ry=y, \rz=0)}{\Pr(\hat{\ry}=y, \rz=0)} - \Pr(\hat{\ry}=y|\ry=y, \rz=1) \frac{\Pr(\ry=y, \rz=1)}{\Pr(\hat{\ry}=y, \rz=1)} \leq \varepsilon. \label{eq:eopp_2}
\end{align}

By substituting Eq.~\ref{eq:eopp_1} to Eq.~\ref{eq:eopp_2}, we can get the following inequality:
\begin{align}
    &(-\varepsilon+\Pr(\hat{\ry}=y|\ry=y, \rz=1)) \frac{\Pr(\ry=y, \rz=0)}{\Pr(\hat{\ry}=y, \rz=0)} - \Pr(\hat{\ry}=y|\ry=y, \rz=1) \frac{\Pr(\ry=y, \rz=1)}{\Pr(\hat{\ry}=y, \rz=1)} \nonumber \\
    &\leq \Pr(\hat{\ry}=y|\ry=y, \rz=0) \frac{\Pr(\ry=y, \rz=0)}{\Pr(\hat{\ry}=y, \rz=0)} - \Pr(\hat{\ry}=y|\ry=y, \rz=1) \frac{\Pr(\ry=y, \rz=1)}{\Pr(\hat{\ry}=y, \rz=1)} \label{eq:eopp_3}\\
    &\leq (\varepsilon+\Pr(\hat{\ry}=y|\ry=y, \rz=1)) \frac{\Pr(\ry=y, \rz=0)}{\Pr(\hat{\ry}=y, \rz=0)} - \Pr(\hat{\ry}=y|\ry=y, \rz=1) \frac{\Pr(\ry=y, \rz=1)}{\Pr(\hat{\ry}=y, \rz=1)}. \nonumber 
\end{align}

By subtracting Eq.~\ref{eq:eopp_2} from Eq.~\ref{eq:eopp_3},
\begin{align*}
    &-\varepsilon+(-\varepsilon+\Pr(\hat{\ry}=y|\ry=y, \rz=1)) \frac{\Pr(\ry=y, \rz=0)}{\Pr(\hat{\ry}=y, \rz=0)} - \Pr(\hat{\ry}=y|\ry=y, \rz=1) \frac{\Pr(\ry=y, \rz=1)}{\Pr(\hat{\ry}=y, \rz=1)} \\
    &\leq 0 \\
    &\leq \varepsilon + (\varepsilon+\Pr(\hat{\ry}=y|\ry=y, \rz=1)) \frac{\Pr(\ry=y, \rz=0)}{\Pr(\hat{\ry}=y, \rz=0)} - \Pr(\hat{\ry}=y|\ry=y, \rz=1) \frac{\Pr(\ry=y, \rz=1)}{\Pr(\hat{\ry}=y, \rz=1)}.
\end{align*}

By rearranging the terms, we get the following inequality:
\begin{align*}
    &-\varepsilon \cdot \frac{\Pr(\hat{\ry}=y, \rz=0) + \Pr(\ry=y, \rz=0)}{\Pr(\hat{\ry}=y, \rz=0)}\\
    &\leq \Pr(\hat{\ry}=y|\ry=y,\rz=1) \{ \frac{\Pr(\ry=y, \rz=0)}{\Pr(\hat{\ry}=y, \rz=0)} - \frac{\Pr(\ry=y, \rz=1)}{\Pr(\hat{\ry}=y, \rz=1)} \} \\
    &\leq \varepsilon \cdot \frac{\Pr(\hat{\ry}=y, \rz=0) + \Pr(\ry=y, \rz=0)}{\Pr(\hat{\ry}=y, \rz=0)}.
\end{align*}

By multiplying $\frac{\Pr(\hat{\ry}=y, \rz=0)}{\Pr(\hat{\ry}=y, \rz=0) + \Pr(\ry=y, \rz=0)}$ for all terms in the inequality, we can get
\begin{align*}
    -\varepsilon \leq \frac{\Pr(\hat{\ry}=y, \rz=0)\Pr(\hat{\ry}=y|\ry=y,\rz=1)}{\Pr(\hat{\ry}=y, \rz=0) + \Pr(\ry=y, \rz=0)} \{ \frac{\Pr(\ry=y, \rz=0)}{\Pr(\hat{\ry}=y, \rz=0)} - \frac{\Pr(\ry=y, \rz=1)}{\Pr(\hat{\ry}=y, \rz=1)} \}
    \leq \varepsilon 
\end{align*}

which is rewritten as follows:
\begin{align*}
    -\varepsilon \leq \frac{\Pr(\hat{\ry}=y|\ry=y,\rz=1)}{\Pr(\hat{\ry}=y, \rz=0) + \Pr(\ry=y, \rz=0)} \{ \Pr(\ry=y, \rz=0) - \frac{\Pr(\hat{\ry}=y, \rz=0)}{\Pr(\hat{\ry}=y, \rz=1)} \cdot \Pr(\ry=y, \rz=1) \}
    \leq \varepsilon.
\end{align*}

Since $\Pr(\ry=y, \rz=0) = \Pr(\ry=y)-\Pr(\ry=y, \rz=1)$ by the total probability law,
\begin{align*}
    -\varepsilon \leq \frac{\Pr(\hat{\ry}{=}y|\ry{=}y,\rz{=}1)}{\Pr(\hat{\ry}{=}y, \rz{=}0) + \Pr(\ry{=}y, \rz{=}0)} \{ \Pr(\ry{=}y)-\Pr(\ry{=}y, \rz{=}1) - \frac{\Pr(\hat{\ry}{=}y, \rz{=}0)}{\Pr(\hat{\ry}{=}y, \rz{=}1)} \cdot \Pr(\ry{=}y, \rz{=}1) \}
    \leq \varepsilon.
\end{align*}

By arranging the terms,
\begin{gather*}
    -\varepsilon \leq \frac{\Pr(\hat{\ry}=y|\ry=y,\rz=1)}{\Pr(\hat{\ry}=y, \rz=0) + \Pr(\ry=y, \rz=0)} \{ \Pr(\ry=y)-\Pr(\ry=y, \rz=1) \cdot ( 1+ \frac{\Pr(\hat{\ry}=y, \rz=0)}{\Pr(\hat{\ry}=y, \rz=1)}) \}
    \leq \varepsilon \\
    \Rightarrow -\varepsilon \leq \frac{\Pr(\hat{\ry}=y|\ry=y,\rz=1)}{\Pr(\hat{\ry}=y, \rz=0) + \Pr(\ry=y, \rz=0)} \{ \Pr(\ry=y)-\Pr(\ry=y, \rz=1) \cdot \frac{\Pr(\hat{\ry}=y)}{\Pr(\hat{\ry}=y, \rz=1)} \}
    \leq \varepsilon.
\end{gather*}

Therefore, we can conclude
\begin{align*} 
    \frac{\Pr(\hat{\ry}=y|\ry=y,\rz=1)}{\Pr(\hat{\ry}=y, \rz=0) + \Pr(\ry=y, \rz=0)} \cdot | \Pr(\ry=y)-\Pr(\ry=y, \rz=1) \cdot \frac{\Pr(\hat{\ry}=y)}{\Pr(\hat{\ry}=y, \rz=1)} |
    \leq \varepsilon.
\end{align*}

If we reverse the $\rz$ values in the derivation, we get the same formula with only the $\rz$ value changed in the above expression.
As a result, we get the following inequality, which is in Proposition~\ref{prop:eopp}:
\begin{align*}
     \frac{\Pr(\hat{\ry}{=}y|\ry{=}y,\rz{=}z')}{\Pr(\hat{\ry}{=}y, \rz{=}z) + \Pr(\ry{=}y, \rz{=}z)} \cdot | \Pr(\ry{=}y)-\Pr(\ry{=}y | \rz{=}z') \cdot \frac{\Pr(\hat{\ry}{=}y)}{\Pr(\hat{\ry}{=}y | \rz{=}z')} |
    \leq \varepsilon , ~~
     z\neq z',~z,z' \in \mathcal{\sZ},~y \in \mathcal{\sY}.
\end{align*}

\end{proof}

Therefore, $\varepsilon = 0$ when $\Pr(\ry=y) = \Pr(\ry=y | \rz=z')$ and $\Pr(\hat{\ry}=y) = \Pr(\hat{\ry}=y | \rz=z')$ (i.e., $\ry \perp \rz$ and $\hat{\ry} \perp \rz$). Here, satisfying both $\ry \perp \rz$ and $\hat{\ry} \perp \rz$ is the sufficient condition of perfectly satisfying EO and PP. Note that $\ry \perp \rz$ implies that the data is unbiased.
We thus suspect that the achievable model fairness w.r.t. both EO and PP is affected by the ($\ry, \rz$)-correlation. 

\newpage
\subsection{\revisionnew{Estimating Shifted Correlation Range Using Samples}}
\label{appendix:estimate_corr}
\vspace{0.1cm}

\revisionnew{Continuing from Sec.~\ref{sec:optimization}, we discuss using samples in the deployment data to estimate the shifted correlation value $c$ and its range $[\alpha, \beta]$. To this end, we show the relationship between the number of samples and the confidence of the estimation through the following theorem.
% Let $p = \Pr(\ry{=}1, \rz{=}1)$, $q = \Pr(\ry{=}0, \rz{=}1)$, $r = \Pr(\ry{=}1, \rz{=}0)$, and $s = \Pr(\ry{=}0, \rz{=}0)$. Assume $p, q, r, s \geq \lambda > 0$. 
}

\revisionnew{
\begin{theorem}\label{thm:correlation}
    Let $n_1$, $n_0$, $n_{11}$, and $n_{01}$ be the number of samples with $(\rz=1)$, $(\rz=0)$, $(\rz=1, \ry=1)$, and $(\rz=0, \ry=1)$, respectively. Let $\hat{c}$ = $\dfrac{n_{11}}{n_1}-\dfrac{n_{01}}{n_0}$. If $n_1 \geq \dfrac{2 \ln (4/\delta)}{\eps^2}$ and $n_0 \geq \dfrac{2 \ln (4/\delta)}{\eps^2}$, then $|\hat{c}-c| \leq \eps$ with probability $1-\delta$. 
\end{theorem}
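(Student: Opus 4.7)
The plan is to decompose the estimation error into two independent pieces, one for each conditional probability, apply Hoeffding's inequality to each, and combine with a union bound. Note that $\hat c - c = \bigl(\tfrac{n_{11}}{n_1}-\Pr(\ry{=}1\mid\rz{=}1)\bigr)-\bigl(\tfrac{n_{01}}{n_0}-\Pr(\ry{=}1\mid\rz{=}0)\bigr)$, so by the triangle inequality it suffices to show that each of the two deviations is at most $\varepsilon/2$ with probability at least $1-\delta/2$.

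First I would condition on the group counts $n_1$ and $n_0$. Conditional on the indices of the samples with $\rz=1$, the corresponding $\ry$-values are i.i.d.\ Bernoulli random variables with mean $\Pr(\ry{=}1\mid\rz{=}1)$, and similarly for $\rz=0$. Since each $\ry$-value lies in $[0,1]$, Hoeffding's inequality gives
\begin{equation*}
\Pr\!\left(\left|\tfrac{n_{11}}{n_1}-\Pr(\ry{=}1\mid\rz{=}1)\right|\geq \tfrac{\varepsilon}{2}\,\Big|\,n_1\right) \leq 2\exp\!\left(-\tfrac{n_1\varepsilon^2}{2}\right),
\end{equation*}
and an analogous bound for the $\rz=0$ side. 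Under the hypothesis $n_1\geq 2\ln(4/\delta)/\varepsilon^2$, the right-hand side is at most $\delta/2$, and symmetrically for $n_0$.

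A union bound over the two events yields that with probability at least $1-\delta$, both empirical conditional means are within $\varepsilon/2$ of their true values, and therefore $|\hat c-c|\leq\varepsilon$ by the triangle inequality. Since the hypotheses on $n_1,n_0$ are deterministic bounds, the conditioning step drops out cleanly.

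I do not expect a substantive obstacle here; the main care points are (i) invoking Hoeffding with the correct constants (the factor of $2$ in the exponent for $[0,1]$-valued variables, which gives the $2\ln(4/\delta)/\varepsilon^2$ sample-size requirement after splitting $\delta$ and $\varepsilon$ each into two), and (ii) being explicit that conditioning on $n_1,n_0$ makes the relevant $\ry$-samples i.i.d.\ Bernoulli, so that Hoeffding applies. If one prefers to avoid conditioning, one can instead argue directly from the full i.i.d.\ sample by noting that $n_{11}/n_1$ and $n_{01}/n_0$ are maximum-likelihood estimators of the two conditional probabilities and applying a two-sample Hoeffding bound; the resulting constants are identical.
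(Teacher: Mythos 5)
Your proposal is correct and follows essentially the same route as the paper's proof: split $\hat c - c$ into the two conditional-probability deviations, apply Hoeffding's inequality to each at tolerance $\varepsilon/2$ and failure probability $\delta/2$ using the stated sample-size conditions, and combine via the triangle inequality and a union bound. You are somewhat more careful than the paper in making the conditioning on $n_1, n_0$ and the union bound explicit, but the substance of the argument is identical.
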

}

\begin{proof}
\revisionnew{By Hoeffding's inequality~\citep{10.2307/2282952}, we can write the following inequality w.r.t.\@ $\frac{n_{11}}{n_1}$:
\begin{align}
\Pr(\left|\frac{n_{11}}{n_1}-\Pr(\ry=1|\rz=1)\right|\geq \frac{\eps}{2}) \leq 2 \exp(-\frac{n_1 \eps^2}{2}).
\end{align}
Using the condition w.r.t.\@ $n_1$ in Theorem~\ref{thm:correlation} (i.e., $n_1 \geq \frac{2 \ln (4/\delta)}{\eps^2}$), we can rewrite the Hoeffding's inequality as follows:
\begin{align}
\Pr(\left|\frac{n_{11}}{n_1}-\Pr(\ry=1|\rz=1)\right|\geq \frac{\eps}{2}) &\leq 2 \exp(-\frac{n_1 \eps^2}{2})\\
&\leq 2 \exp(-\frac{2 \ln (4/\delta)}{\eps^2}\frac{\eps^2}{2})\\
&= 2 \exp(\ln (\delta/4))\\
&= \frac{\delta}{2}.
\end{align}
Similarly, the inequality w.r.t.\@ $\frac{n_{01}}{n_0}$ can be written as follows:
\begin{align}
\Pr(\left|\frac{n_{01}}{n_0}-\Pr(\ry=1|\rz=0)\right|\geq \frac{\eps}{2}) \leq 2 \exp(-\frac{n_0 \eps^2}{2}) \leq \frac{\delta}{2}.
\end{align}
These inequalities show that if $n_1 \geq \dfrac{2 \ln (4/\delta)}{\eps^2}$ and $n_0 \geq \dfrac{2 \ln (4/\delta)}{\eps^2}$, then $|\dfrac{n_{11}}{n_1}-\Pr(\ry{=}1|\rz{=}1)| \leq \eps/2$ and $|\dfrac{n_{01}}{n_0}-\Pr(\ry{=}1|\rz{=}0)| \leq \eps/2$ each with probability at least $1-\delta/2$. }

\revisionnew{Therefore, if $n_1 \geq \dfrac{2 \ln (4/\delta)}{\eps^2}$ and $n_0 \geq \dfrac{2 \ln (4/\delta)}{\eps^2}$, then $|\hat{c}-c| = |(\dfrac{n_{11}}{n_1}-\dfrac{n_{01}}{n_0})-(\Pr(\ry{=}1|\rz{=}1)-\Pr(\ry{=}1|\rz{=}0))|\leq\eps/2 + \eps/2 \leq \eps$ with probability at least $1-\delta$.}
\end{proof}

\revisionnew{By using Theorem~\ref{thm:correlation}, we can calculate how many samples are required to achieve a specific error range with a specific probability. For example, if we aim to have an error range $c-0.1\leq \hat{c} \leq c+0.1$ with 0.95 probability (i.e., $\eps=0.1$ and $\delta=0.05$), we need $n_1 \geq 876$ and $n_0 \geq 876$ samples to satisfy the condition $n \geq \dfrac{2 \ln (4/\delta)}{\eps^2}$. Also, when we measure $\hat{c}$ using $m$ ($=n_1+n_0$) samples, we can construct the shift range $[\alpha, \beta]$ with a specific probability $1-\delta$  -- see details in the following section (Sec.~\ref{appendix:mle}).}

\vspace{0.3cm}
\subsection{\revisionnew{Maximum Likelihood Estimation in Our Scenario}}
\label{appendix:mle}
\vspace{0.2cm}

\revisionnew{Continuing from Sec.~\ref{sec:optimization}, we explain the details of the maximum likelihood estimation (MLE) in Algorithm~\ref{alg:overall}. During the estimation, we use Theorem~\ref{thm:correlation} in Sec.~\ref{appendix:estimate_corr}, which explains how to get the confidence interval of the estimation using $m$ samples with a specific probability $(1-\delta)$. Note that the confidence level $(1-\delta)$ is a hyperparameter (e.g., $(1-\delta) = 0.9$). }

\revisionnew{Algorithm~\ref{alg:mle} shows the overall steps in the estimation. We first calculate the estimated correlation constant $c$ (i.e., $\hat{c}$). Then, we find the error bound $\eps$ by using the conditions in Theorem~\ref{thm:correlation} (i.e., $\eps = \sqrt{\frac{2}{n}\ln(\frac{4}{\delta})}$). As a result, we can construct the shift range $[\alpha, \beta] = [\hat{c}-\eps, \hat{c}+\eps]$ with a probability $1-\delta$.}

\vspace{0.2cm}
\setlength{\textfloatsep}{10pt}
\begin{algorithm}[h]
\DontPrintSemicolon
\setstretch{1.0}
    \SetKwInput{Input}{Input}
    \SetKwInOut{Output}{Output}
    \SetNoFillComment
    
    \Input{$m$ samples of ($\ry_\text{deploy}, \rz_\text{deploy}$), confidence level of estimation $1-\delta$}
    
    $\hat{\Pr}(\ry=y|\rz=z) =\dfrac{(\text{num.\@ of samples with ($\ry=y, \ry=z$)})}{(\text{num.\@ of samples with ($\ry=z$)})}$
    
    $\hat{c} = \dfrac{\hat{\Pr}(\ry=1|\rz=1)}{\hat{\Pr}(\rz=1)} - \dfrac{\hat{\Pr}(\ry=1|\rz=0)}{\hat{\Pr}(\rz=0)}$
    
    $n_1 = \text{num.\@ of samples with ($\rz=1$)}$
    
    $n_0 = \text{num.\@ of samples with ($\rz=0$)}$
    
    $n = \min(n_1, n_0)$ 
    
    $\eps = \sqrt{\dfrac{2}{n}\ln(\dfrac{4}{\delta})}$
    
    $\alpha = \hat{c} - \eps$
    
    $\beta = \hat{c} + \eps$
    
    \Output{$\alpha$, $\beta$}
    \caption{MLE}
    \label{alg:mle}
\end{algorithm}

\vspace{0.3cm}
\subsection{Semidefinite Relaxation}
\label{appendix:sdprelaxation}
\vspace{0.2cm}

Continuing from Sec.~\ref{sec:optimization}, we provide details of the semidefinite relaxation in our optimization.

Recall our original optimization as follows:
\begin{gather*}
    \underset{w'}{\min} ~~ \underset{\forall y, z}{\sum} (w_{\ry=y, \rz=z}-w'_{\ry=y, \rz=z})^2\\
    \text{s.t. } \alpha \leq \frac{w'_{\ry=1, \rz=1}}{w'_{\ry=1, \rz=1} + w'_{\ry=0, \rz=1}} - \frac{w'_{\ry=1, \rz=0}}{w'_{\ry=1, \rz=0} + w'_{\ry=0, \rz=0}} \leq \beta,\\
    |(w'_{\ry=1, \rz=1} + w'_{\ry=1, \rz=0})-{\Pr}_{\text{train}}(\ry=1)| \leq \gamma_\ry,\\
    |(w'_{\ry=1, \rz=1} + w'_{\ry=0, \rz=1})-{\Pr}_{\text{train}}(\rz=1)| \leq \gamma_\rz,\\
    \underset{\forall y, z}{\sum} w'_{\ry=y, \rz=z} = 1,~~
    0 \leq w'_{\ry=y, \rz=z} \leq 1,~~\forall y \in \{0, 1\}, z \in \{0, 1\}
    % \nonumber
    % \scalebox{0.95}{$y \in \mathcal{\sY}, z \in \mathcal{\sZ},p_i \in \{0,1\}, i=1,...,n$}
\end{gather*}
where ${\Pr}_{\text{train}}(\ry=1) {=} w_{\ry=1, \rz=1} {+} w_{\ry=1, \rz=0}$ and ${\Pr}_{\text{train}}(\rz=1) = w_{\ry=1, \rz=1} {+} w_{\ry=0, \rz=1}$.

As the above optimization is a non-convex quadratically constrained quadratic problem (non-convex QCQP), we now apply the semidefinite relaxation (SDP relaxation)~\citep{park2017general}.
We first rewrite the above optimization using matrices:
\begin{gather*}
    \underset{x}{\min}~~ x^T P_0 x+q_0^Tx\\
    \text{s.t.}~~  x^T P_\alpha x \geq 0,~ x^T P_\beta x \leq 0,\\
    % x^T P_1 x = 0,\\
     |q_2^T x - {\Pr}_{\text{train}}(y=1)| \leq \gamma_\ry,\\
     |q_3^T x - {\Pr}_{\text{train}}(z=1)| \leq \gamma_\rz\\
     q_4^Tx = 1, ~~  0 \leq x_i \leq 1~~\forall i
\end{gather*}
where $x = \begin{bmatrix} x_1 & x_2 & x_3 & x_4 \end{bmatrix}^T = \begin{bmatrix} w'_{1, 1} & w'_{1, 0} & w'_{0, 1} & w'_{0, 0} \end{bmatrix}^T$,
$q_0 = -2\begin{bmatrix} w_{1, 1} & w_{1, 0} & w_{0, 1} & w_{0, 0} \end{bmatrix}^T$,
$q_2 = \begin{bmatrix} 1 & 0 & 1 & 0 \end{bmatrix}^T$, $q_3 = \begin{bmatrix} 1 & 1 & 0 & 0 \end{bmatrix}^T$, $q_4 = \begin{bmatrix} 1 & 1 & 1 & 1 \end{bmatrix}^T$, $P_0 = \diag(\mathbf{1})~$, 
\resizebox{0.85\hsize}{!}{\textcolor{black}{$P_\alpha = \begin{bsmallmatrix} 0 & {-\alpha}/{2} & 0 & {(1-\alpha)}/{2}\\ {-\alpha}/{2} & 0 & {(-1-\alpha)}/{2} & 0\\ 0 & {(-1-\alpha)}/{2} & 0 & {-\alpha}/{2}\\ {(1-\alpha)}/{2} & 0 & {-\alpha}/{2} & 0 \end{bsmallmatrix}$, and $P_\beta = \begin{bsmallmatrix} 0 & {-\beta}/{2} & 0 & {(1-\beta)}/{2}\\ {-\beta}/{2} & 0 & {(-1-\beta)}/{2} & 0\\ 0 & {(-1-\beta)}/{2} & 0 & {-\beta}/{2}\\ {(1-\beta)}/{2} & 0 & {-\beta}/{2} & 0 \end{bsmallmatrix}$}.}
% $P_1 = \begin{bsmallmatrix} 0 & {-c}/{2} & 0 & {(1-c)}/{2}\\ {-c}/{2} & 0 & {(-1-c)}/{2} & 0\\ 0 & {(-1-c)}/{2} & 0 & {-c}/{2}\\ {(1-c)}/{2} & 0 & {-c}/{2} & 0 \end{bsmallmatrix}$.

As $x^TPx = \textbf{Tr}(P(xx^T))$, we can get the following optimization:
\begin{gather*}
    \underset{X, x}{\min}~~ \textbf{Tr}(XP_0)+q_0^Tx\\
    \text{s.t.}~~  \textbf{Tr}(XP_\alpha) \geq 0,~ \textbf{Tr}(XP_\beta) \leq 0\\
     |q_2^T x - {\Pr}_{\text{train}}(y=1)| \leq \gamma_\ry,\\
     |q_3^T x - {\Pr}_{\text{train}}(z=1)| \leq \gamma_\rz\\
     q_4^Tx = 1, ~~  0 \leq x_i \leq 1~~\forall i,\\
     X = xx^T
    %  \begin{bmatrix} X & x \\ x^T & 1 \end{bmatrix} \succeq 0
\end{gather*}
where $\textbf{Tr}(\cdot)$ is the trace and $~X=xx^T$.

In the above optimization, the last constraint is non-convex. Thus, we relax the non-convex constraint $X = xx^T$ into $X-xx^T \succeq 0$, which is convex, and then use a Schur complement~\citep{zhang2006schur} to get the final SDP form:
\begin{gather*}
    \underset{X, x}{\min}~~ \textbf{Tr}(XP_0)+q_0^Tx\\
    \text{s.t.}~~  
    % \textbf{Tr}(XP_1) = 0,\\
    \textbf{Tr}(XP_\alpha) \geq 0,~ \textbf{Tr}(XP_\beta) \leq 0\\
     |q_2^T x - {\Pr}_{\text{train}}(y=1)| \leq \gamma_\ry,\\
     |q_3^T x - {\Pr}_{\text{train}}(z=1)| \leq \gamma_\rz\\
     q_4^Tx = 1, ~~  0 \leq x_i \leq 1~~\forall i,\\
     \begin{bmatrix} X & x \\ x^T & 1 \end{bmatrix} \succeq 0.
\end{gather*}

When we solve the above SDP relaxation problem using convex optimization solvers, we set the 5$\times$5-matrix A as the variable so as to indicate $\begin{bmatrix} X & x \\ x^T & 1 \end{bmatrix}$.
Then, we get the solution $x = \begin{bmatrix} x_1 & x_2 & x_3 & x_4 \end{bmatrix}^T$ by taking the first four elements of the last vector in the resulting matrix A.

\vspace{1.5cm}
\subsection{Optional Step for Minimizing Overall Distribution Change}
\label{appendix:min_dist_change}
\vspace{0.1cm}

Continuing from Sec.~\ref{sec:algorithm}, we explain the details on the optional step in Algorithm~\ref{alg:overall}.
We can use the optional step ($\texttt{MinDistChange}$) to find non-uniform data sample weights within each ($\ry, \rz$)-class that minimizes the overall distribution change in terms of the Wasserstein distance~\citep{givens1984class}, rather than using identical weights.
In the optional step, we first divide the examples in each ($\ry$=$y$, $\rz$=$z$)-class into two sets $\{\ry$=$y$, $\rz$=$z$, $\rt$=$0\}$ and $\{\ry$=$y$, $\rz$=$z$, $\rt$=$1\}$ with equal numbers, where $\rt$ is a specific feature that can be a criterion for dividing the examples. For example, we can find a median of a non-sensitive attribute $\rx_1$ and set $\rt = 0$ for an example if the $\rx_1$ value of the example is lower than the median of $\rx_1$. We set $\rt = 1$ otherwise.
We then make a candidate set of partial weights $w'_{\ry=y, \rz=z, \rt=t}$ on each ($\ry$=$y$, $\rz$=$z$, $\rt$=$t$)-class, where $\sum w'_{\ry=y, \rz=z, \rt=t} = w'_{\ry=y, \rz=z}$. Note that we can extend $\rt$ beyond the binary setting to find more detailed partial weights.
For each partial weight candidate, we calculate the candidate sample-wise weights ($\textbf{d}_{\text{tmp}}$) to ensure that the sample weight sum in each ($\ry$=$y$, $\rz$=$z$, $\rt$=$t$)-class is $w_{\ry=y, \rz=z, \rt=t} \cdot n$, where $n$ is the total number of samples in the original training data. 
% Within each ($\ry$=$y$, $\rz$=$z$, $\rt$=$t$)-class, all samples have the same weights.
Then, we draw new data $\tilde{D}$ from the original training data $D$ via weighted sampling according to $\textbf{d}_{\text{tmp}}$. 
Then, we calculate the Wasserstein distance between $\tilde{D}$ and $D$ via an optimal transport technique~\citep{peyre2019computational}. As a result, the algorithm returns the final sample-wise weights ($\textbf{d}_{\text{min}}$) that result in the closest distribution from the original data. 
% Note that the sum of partial weights in each ($\ry$=$y$, $\rz$=$z$)-class is $w_{\ry=y, \rz=z}$.

\vspace{2cm}
\setlength{\textfloatsep}{10pt}
\begin{algorithm}[h]
\DontPrintSemicolon
\setstretch{1.2}
    \SetKwInput{Input}{Input}
    \SetKwInOut{Output}{Output}
    \SetNoFillComment
    
    \Input{train data $D$, original ratio $w_{\ry, \rz}$, new ratio $w'_{\ry, \rz}$}
    
    In each ($\ry$=$y$, $\rz$=$z$)-class, divide the examples into two sets $\{\ry$=$y$, $\rz$=$z$, $\rt$=$0\}$ and $\{\ry$=$y$, $\rz$=$z$, $\rt$=$1\}$ with equal numbers\\
    $\text{partials}$ = $[0, 1/m, 2/m, ..., 1]$\\
    $\text{W}_{\ry=y, \rz=z}$ $\gets$ [], ~ $\forall (y, z)\in \mathcal{\sY} \times \mathcal{\sZ}$\\
    \For{each ($\ry$=$y$, $\rz$=$z$)-class}{
        \For{\normalfont{p} in \normalfont{partials}}{
            $w'_{\ry=y, \rz=z, \rt=0} = w'_{\ry=y, \rz=z} \cdot p$\\
            $w'_{\ry=y, \rz=z, \rt=1} = w'_{\ry=y, \rz=z} \cdot (1-p)$\\
            Append $[w'_{\ry=y, \rz=z, \rt=0}, w'_{\ry=y, \rz=z, \rt=1}]$ to $\text{W}_{\ry=y, \rz=z}$
        }
    }
    
    $\text{candidates} \gets $ $\text{W}_{\ry=1, \rz=1} \times \text{W}_{\ry=1, \rz=0} \times \text{W}_{\ry=0, \rz=1} \times \text{W}_{\ry=0, \rz=0}$\\
    minD $\gets$ an initial large value\\
    \For{\normalfont{partial-weight} in \normalfont{candidates}}{
        $d_j$ $\gets$ $w'_{\ry=y, \rz=z, \rt=t}/(w_{\ry=y, \rz=z} \cdot 0.5), \forall j \in \mathbb{I}_{(y,z,t)}, \forall (y, z, t)\in \mathcal{\sY} \times \mathcal{\sZ} \times \{0, 1\}$\\
        $\textbf{d}_{\text{tmp}}$ = $\{d_i\}_{ i=1,...,n}$\\
        Draw data $\tilde{D}$ from $D$ via weighted sampling w.r.t. $\textbf{d}_{\text{tmp}}$\\
        wassD $\gets$ Calculate the Wasserstein distance between $\tilde{D}$ and $D$ via optimal transport\\
        \If{wassD $<$ minD}{
            $\textbf{d}_{\text{min}}$ = $\textbf{d}_{\text{tmp}}$\\
            minD = wassD
        }
    }
    \Output{$\textbf{d}_{\text{min}}$}
    \caption{MinDistChange}
    \label{alg:min_dist_change}
\end{algorithm}
% \vspace{5cm}

% \newpage
\vspace*{3cm}
\section{Appendix -- Experiments}
\label{appendix:experiments}

\subsection{Simulation Settings}
\label{appendix:simulation_setting}

Continuing from Sec.~\ref{sec:fundamentallimits}, we explain the details of the simulation. 
The goal of the simulation is to confirm our theoretical observations by generating various synthetic classifiers that show the full range of possible model performances. 
To this end, we generate 1000 data samples, where each sample has $\ry$ and $\rz$ features. We first set $\ry$ and $\rz$ to have a specific correlation.
\revision{We then generate various synthetic classifiers to have different predicted labels $\hat{\ry}$ by varying the probability $\Pr(\hat{\ry}=1)$ in each ($\ry$, $\rz$)-class, regardless of the other input features. Note that we do not train actual models (e.g., logistic regression, SVM).}
% We then generate various synthetic classifiers that have different $\hat{\ry}$ by varying the probability $\Pr(\hat{\ry} > 0)$ in each ($\ry$=$y$, $\rz$=$z$)-class.
For each synthetic classifier, we measure the accuracy and fairness performances based on $\ry$, $\rz$, and the classifier's $\hat{\ry}$.
We repeat the above procedures while varying $\ry$ and $\rz$ to have different correlations.
As a result, we get the simulation results in Figure~\ref{fig:simulation} by plotting all the classifier accuracy and fairness performances within each ($\ry, \rz$)-correlation.

\newpage
\subsection{Empirical Observations for Equalized Odds}
\label{appendix:eo_empirical}

Continuing from Sec.~\ref{sec:fundamentallimits}, we perform an experiment to observe that higher ($\ry$, $\rz$)-correlation leads to a worse accuracy-fairness tradeoff w.r.t. equalized odds (EO) in practice.
Figure~\ref{fig:simulation_eo} shows the accuracy-unfairness performances of the logistic regression model trained using FairBatch~\citep{roh2021fairbatch} to improve EO on two synthetic datasets with low and high ($\ry$, $\rz$)-correlations.
As a result, low correlation enables classifiers to attain better accuracy-fairness tradeoffs (i.e., close to the bottom right) w.r.t. EO.
This experiment shows that the ($\ry, \rz$)-correlation indeed affects the accuracy-fairness tradeoff w.r.t. EO in practice.

\begin{figure}[h]
\centering
% \vspace{-0.2cm}
\includegraphics[width=0.5\columnwidth,trim=0cm 0.3cm 0cm 0cm]{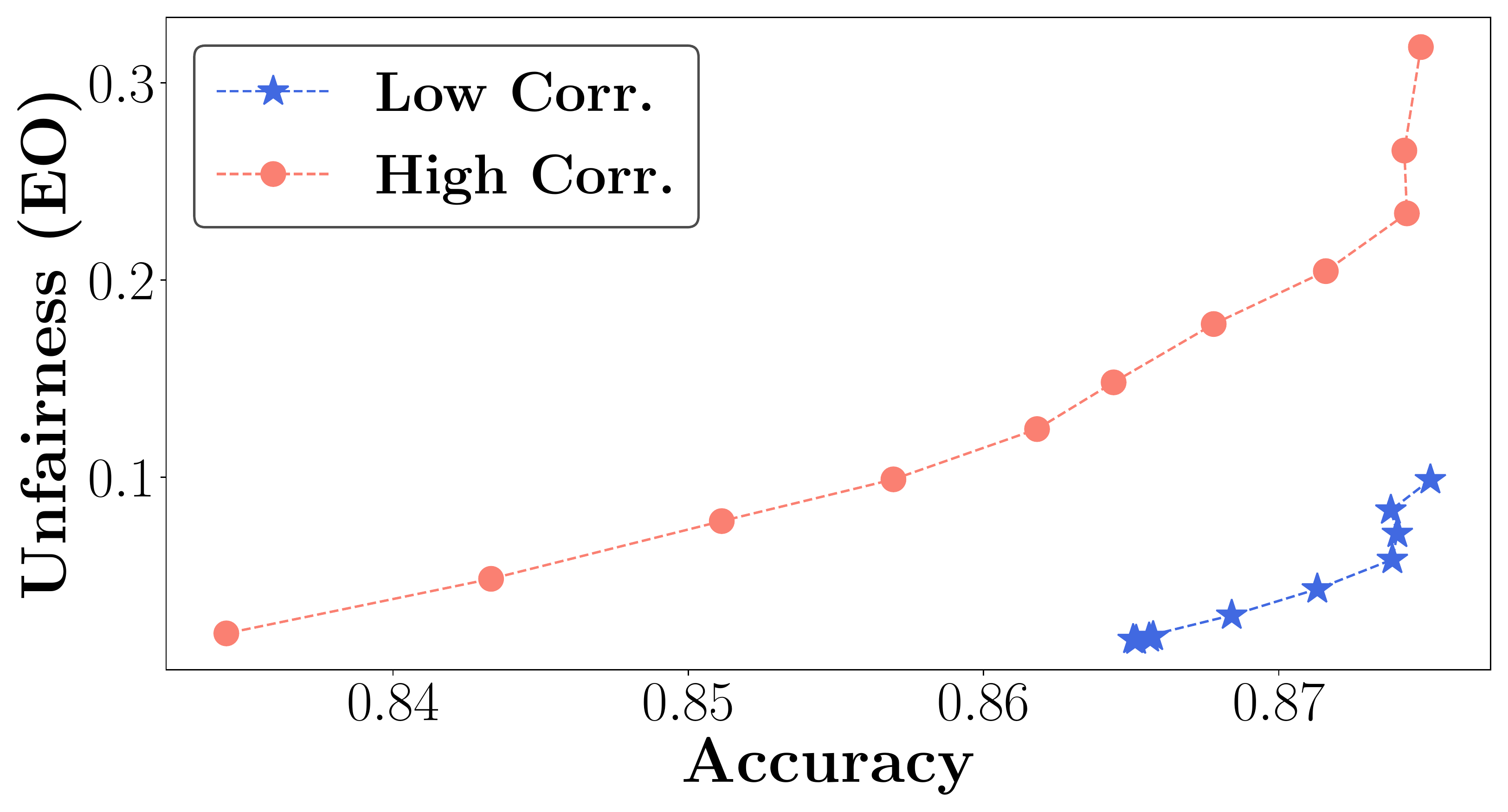}
\caption{Accuracy-unfairness performances of fair training with FairBatch~\citep{roh2021fairbatch} on two synthetic data with different ($\ry, \rz$)-correlations. We measure fairness w.r.t. equalized odds (EO).}
\vspace{0.5cm}
\label{fig:simulation_eo}
\end{figure}

% \subsection{Simulation -- Multiple Metrics}
% \label{appendix:simulation_multiple}

% \begin{figure}[t]
% \centering
% % \vspace{-0.2cm}
% \includegraphics[width=0.8\columnwidth,trim=0cm 0.3cm 0cm 0cm]{}
% % \vspace{-0.2cm}
% \caption{Accuracy-fairness performances of all possible classifiers on two synthetic data with different ($\ry, \rz$)-correlations. Note that we measure fairness w.r.t. both DP and EO. In these graphs, the higher accuracy and fairness values, the better. For each data, we generate synthetic classifiers to cover the full range of possible performances of model training. Interestingly, all classifiers exist below a certain upper bound of the accuracy-fairness tradeoff. Also, the best (upper-bound) accuracy-fairness tradeoff is better when the correlation is low.}
% % \vspace{-0.5cm}
% \label{fig:simulation_multiplemetric}
% \end{figure}

% \newpage
\subsection{Other Experimental Settings}
\label{appendix:ex_setting}

Continuing from Sec.~\ref{sec:experiments}, we provide more details on the experimental settings. The batch sizes of the synthetic, COMPAS, and AdultCensus datasets are 100, 200, and 2,000, respectively. For the synthetic dataset, we use 2,000 samples for the training dataset and 1,000 samples for the test dataset. 
For the real datasets, we split the entire data into 4:1 for the training and test datasets. 
We set the learning rate to 0.0005. 
\textcolor{black}{Our experiments are performed using PyTorch on a Linux server with Intel Xeon Silver 4210R CPUs and NVIDIA Quadro RTX 8000 GPUs.}

\textcolor{black}{We generate the synthetic training dataset with 2,000 samples and consists of two non-sensitive attributes $(\rx_1, \rx_2)$, one sensitive attribute $\rz$, and one label attribute $\ry$. Each sample $(\rx_1, \rx_2, \ry)$ is drawn from the following Gaussian distributions: $(\rx_1,\rx_2)|\ry=0 \sim \mathcal{N}([-2; -2], [10, 1; 1, 3])$ and $(\rx_1,\rx_2)|\ry=1 \sim \mathcal{N}([2; 2], [5, 1; 1, 5])$. For the sensitive attribute $\rz$, we generate a biased distribution: $\Pr(\rz=1)=\Pr((\rx'_1, \rx'_2)|\ry=1)/[\Pr((\rx'_1, \rx'_2)|\ry=0)+\Pr((\rx'_1, \rx'_2)|\ry=1)]$ where $(\rx'_1, \rx'_2)=(\rx_1\cos(\pi/4)-\rx_2\sin(\pi/4), \rx_1\sin(\pi/4)+\rx_2\cos(\pi/4))$.}

When we construct the synthetic test dataset by modifying the $\rz$ values while fixing the $\rx$ and $\ry$ distributions in the original test data, we change the $k$ value in $(\rx'_1, \rx'_2)=(\rx_1\cos(\pi/k)-\rx_2\sin(\pi/k), \rx_1\sin(\pi/k)+\rx_2\cos(\pi/k))$.

\yuji{For all in-processing algorithms, we start from a candidate set and use cross-validation on the (pre-processed) training data to choose the hyperparameters that result in the best fairness while having an accuracy that best aligns with other results.}

To support multiple fairness metrics, we naturally extend each in-processing approach by combining the fairness constraints for different metrics via a tuning knob that adjusts the importance between each metric. Here, the fairness constraints are implemented differently in each algorithm. Fairness Constraints (FC)~\citep{DBLP:conf/aistats/ZafarVGG17} adds each unfairness penalty term to the loss function. Thus, we extend FC by adding multiple penalty terms and adjust the importance between each penalty term by a tuning knob. Adversarial Debiasing (AD)~\citep{DBLP:conf/aies/ZhangLM18} utilizes a discriminator of each fairness metric for the adversarial training. Thus, we extend AD by adding multiple fairness discriminators and adjust the importance between each discriminator by adding a tuning knob in the classifier's loss function. FairBatch (FB)~\citep{roh2021fairbatch} solves a bilevel optimization that has an objective for minimizing a fairness disparity according to each fairness metric. Thus, we extend FB by minimizing the maximum of fairness disparities (e.g., $\max(DP disp., EO disp.)$) and adjust the importance between each disparity using a tuning knob. %by multiplying a tuning knob to one of the disparity.

\subsection{Accuracy and Fairness -- AdultCensus}
\label{appendix:adultcensus}

Continuing from Sec.~\ref{sec:accfair}, we show the results on the AdultCensus dataset. Table~\ref{tbl:adult} shows the accuracy and fairness performances of the algorithms on the AdultCensus test dataset w.r.t. a single metric (DP) and multiple metrics (DP \& EO). Other setting are identical to Table~\ref{tbl:synthetic}.
We observe consistent results where our framework improves accuracy and fairness of the in-processing-only baselines and also shows better fairness than the two-step baselines using RW.

\begin{table}[h]
  \caption{Performances on the AdultCensus test dataset. Other experimental settings are identical to those in Table~\ref{tbl:synthetic}.}
  \label{tbl:adult}
  \centering
\scalebox{0.9}{
  \begin{tabular}{l@{\hspace{7pt}}c@{\hspace{7pt}}c@{\hspace{12pt}}c@{\hspace{7pt}}c}
    \toprule
      & \multicolumn{2}{c}{Single (DP)} & \multicolumn{2}{c}{Multiple (DP \& EO)} \\
    \cmidrule(r){1-5}
      Method & Acc. & Unfair. & Acc. & Unfair. \\
    \midrule
    LR & .824 $\pm$ .001 & .074 $\pm$ .002 & .824 $\pm$ .001 & .074 $\pm$ .002 \\
    \cmidrule(l){1-5}
    FC & .805 $\pm$ .013 & .021 $\pm$ .004 & .807 $\pm$ .013 & .078 $\pm$ .017 \\
    RW+FC & .810 $\pm$ .004 & .020 $\pm$ .006 & .810 $\pm$ .008 & .048 $\pm$ .013 \\
    \textbf{Ours}+FC & .820 $\pm$ .003 & \textbf{.005 $\pm$ .002} & .805 $\pm$ .014 & \textbf{.033 $\pm$ .013} \\
    % \textbf{Ours}+FC & .820 $\pm$ .003 & .005 $\pm$ .002 & .809 $\pm$ .011 & .061 $\pm$ .025 \\
    \cmidrule(l){1-5}
    % AD & .777 $\pm$ .022 & .007 $\pm$ .004 & .785 $\pm$ .008 & .035 $\pm$ .022 \\
    AD & .777 $\pm$ .022 & \textbf{.007 $\pm$ .004} & .806 $\pm$ .011 & .052 $\pm$ .004 \\
    RW+AD & .808 $\pm$ .010 & .025 $\pm$ .007 & .800 $\pm$ .015 & .035 $\pm$ .003 \\
    \textbf{Ours}+AD & .803 $\pm$ .009 & \textbf{.007 $\pm$ .003} & .805 $\pm$ .009 & \textbf{.033 $\pm$ .003} \\
    \cmidrule(l){1-5}
    FB & .825 $\pm$ .002 & .017 $\pm$ .001 & .826 $\pm$ .001 & .049 $\pm$ .001 \\
    RW+FB & .818 $\pm$ .008 & .020 $\pm$ .005 & .817 $\pm$ .008 & .071 $\pm$ .010 \\
    \textbf{Ours}+FB & .824 $\pm$ .001 & \textbf{.008 $\pm$ .003} & .826 $\pm$ .003 & \textbf{.037 $\pm$ .012} \\
    \bottomrule
  \end{tabular}
  }
%   \vspace{0.3cm}
\end{table}

% \vspace{0.5cm}
% \subsection{\textcolor{black}{Accuracy and Fairness -- Tradeoff Curves}}
% \label{appendix:tradeoff}

% \textcolor{black}{Continuing from Sec.~\ref{sec:accfair}, we provide the accuracy-fairness disparity (DP) trade-off curves of FairBatch (FB) and Ours+FB on the the synthetic data in Figure~\ref{fig:tradeoff_synthetic}.
% Our framework shows a better accuracy-fairness tradeoff compared to the in-processing approach (FB). FB shows an ``inversed'' curve, as it is trained with wrong biases in mind.}

% \begin{figure}[h]
% \centering
% \includegraphics[width=0.5\columnwidth,trim=0cm 0.7cm 0cm 0cm]{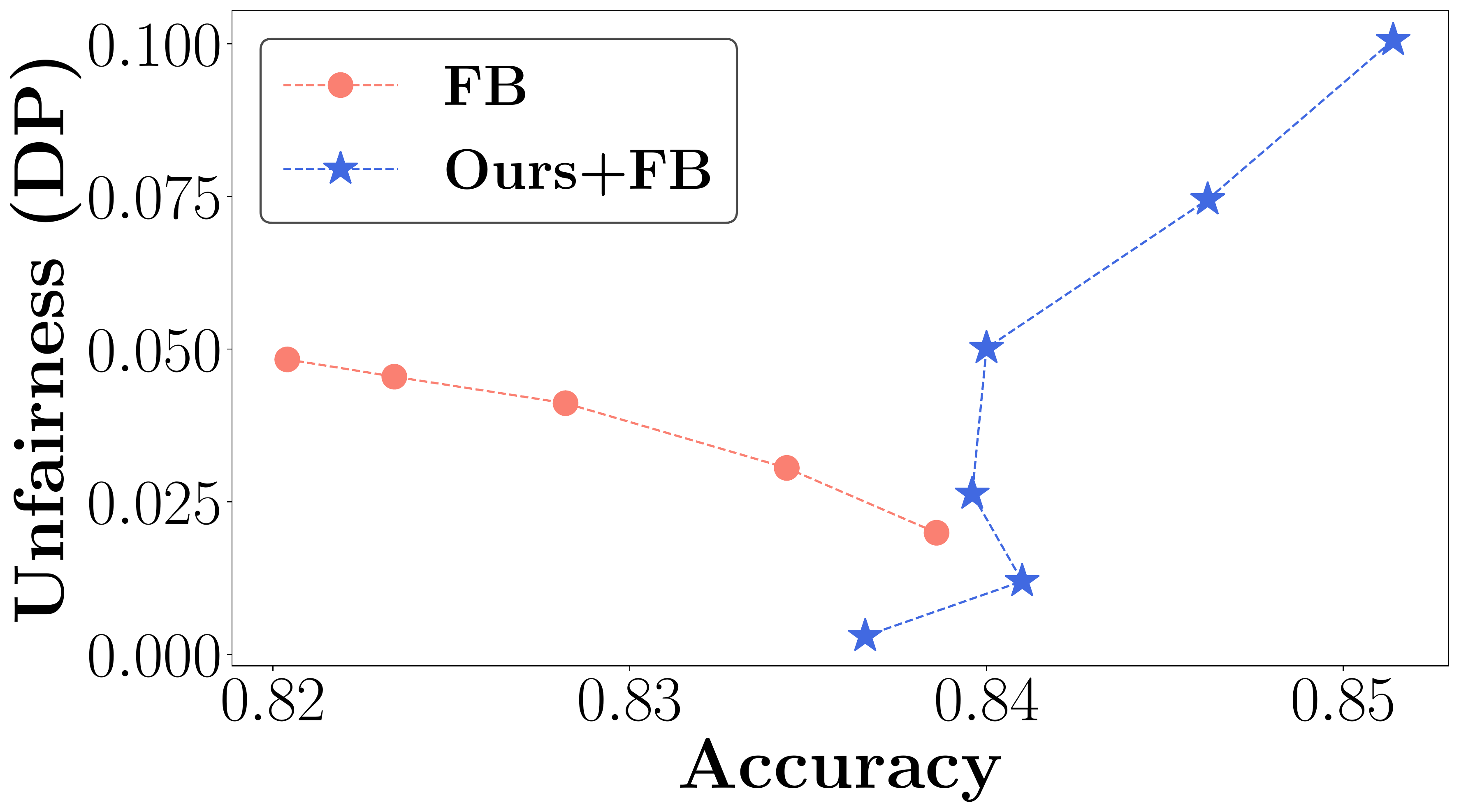}
% \caption{\textcolor{black}{Accuracy-fairness disparity trade-off curves of FB and Ours+FB on the synthetic data.}}
% \vspace{0.2cm}
% \label{fig:tradeoff_synthetic}
% \end{figure}

\vspace{0.5cm}
\subsection{\yuji{How the Pre-processed Data Aligns with the True Test Data}}
\label{appendix:align_well}

\yuji{Continuing from Sec.~\ref{sec:accfair}, we also show that the pre-processed data distribution ($D_\text{pre}$) by our algorithm is more aligned with the true test distribution ($D_\text{test}$) compared to the original training distribution ($D_\text{train}$), in terms of ($y$, $z$)-correlation and Wasserstein distance. 
We calculate the second-order Wasserstein distance via an optimal transport technique~\citep{peyre2019computational} on the synthetic data. We experiment on three degrees on shifts and make two observations in Table~\ref{tbl:alignment}: 1) the correlations of $D_\text{pre}$ and $D_\text{test}$ are indeed more similar relative to $D_\text{train}$ and 2) the Wasserstein distance between $D_\text{pre}$ and $D_\text{test}$ is lower than between $D_\text{train}$ and $D_\text{test}$. Both observations confirm that the reweighed data aligns well with the test data.}

\yuji{As our method improves the alignment between $D_\text{pre}$ and $D_\text{test}$, the in-processing algorithms trained on $D_\text{pre}$ (i.e., ours + in-processing algorithms) show high accuracy and fairness performances on $D_\text{test}$, as shown in Sec.~\ref{sec:experiments}.}

\begin{table}[h]
  \caption{\yuji{Alignment between data distributions. We use the synthetic data.}}
  \label{tbl:alignment}
  \centering
\scalebox{0.9}{
  \begin{tabular}{l@{\hspace{10pt}}c@{\hspace{20pt}}c@{\hspace{7pt}}c}
    \toprule
      & \multicolumn{3}{c}{Level of correlation shift in $c_\text{test}$} \\
    \midrule
    & Severe & Normal & No shift \\
    \midrule
     $c_\text{train}$ (correlation in $D_\text{train}$) & 0.3591 & 0.3591 & 0.3591 \\
     $c_\text{pre}$ (correlation in $D_\text{pre}$) & 0.0359 & 0.1796 & 0.3590 \\
     $c_\text{test}$  (correlation in $D_\text{test}$) & 0.0360 & 0.1800 & 0.3591 \\
    \cmidrule(l){1-4}
    Wass. dist. between $D_\text{train}$ and $D_\text{test}$ & 0.7966 & 0.6807 & 0.4399 \\
    Wass. dist. between $D_\text{pre}$ and $D_\text{test}$ & 0.6257 & 0.6022 & 0.4399 \\
    \bottomrule
  \end{tabular}
  }
%   \vspace{0.5cm}
\end{table}

\subsection{Accuracy and Fairness -- Other Test Data Construction: \textcolor{black}{Synthetic Data}}
\label{appendix:test_construction}

Continuing from Sec.~\ref{sec:accfair}, we compare the algorithm performances when using a different method to construct the test dataset \textcolor{black}{of the synthetic-data experiment}. Table~\ref{tbl:synthetic_corr2} shows the accuracy and fairness performances on the synthetic dataset when constructing the test dataset by modifying $\rz$ directly. 

We again generate the synthetic dataset using a method similar to \citet{DBLP:conf/aistats/ZafarVGG17}. The synthetic dataset consists of two non-sensitive attributes $(\rx_1, \rx_2)$, one sensitive attribute $\rz$, and one label attribute $\ry$. Each sample $(\rx_1, \rx_2, \ry)$ is drawn from the following Gaussian distributions: $(\rx_1,\rx_2)|\ry=0 \sim \mathcal{N}([-2; -2], [10, 1; 1, 3])$ and $(\rx_1,\rx_2)|\ry=1 \sim \mathcal{N}([2; 2], [5, 1; 1, 5])$. For the sensitive attribute $\rz$, we generate a biased distribution: $\Pr(\rz=1)=\Pr((\rx'_1, \rx'_2)|\ry=1)/[\Pr((\rx'_1, \rx'_2)|\ry=0)+\Pr((\rx'_1, \rx'_2)|\ry=1)]$ where $(\rx'_1, \rx'_2)=(\rx_1\cos(\pi/k)-\rx_2\sin(\pi/k), \rx_1\sin(\pi/k)+\rx_2\cos(\pi/k))$.
For the training dataset, we set $k$ to 4. 
We then change the $k$ value to generate the test dataset with 50\% of the training data correlation.

As a result, we observe consistent results where our framework improves the accuracy and fairness performances of the in-processing-only baselines and generally shows better fairness than the two-step baselines using RW.

\begin{table}[t]
  \caption{Performances on the synthetic test dataset. The test dataset is constructed via modifying the $\rz$ values of the original distribution. Other settings are identical to those in Table~\ref{tbl:synthetic}.}
  \label{tbl:synthetic_corr2}
  \centering
\scalebox{0.9}{
  \begin{tabular}{l@{\hspace{7pt}}c@{\hspace{7pt}}c@{\hspace{12pt}}c@{\hspace{7pt}}c}
    \toprule
      & \multicolumn{2}{c}{Single (DP)} & \multicolumn{2}{c}{Multiple (DP \& EO)} \\
    \cmidrule(r){1-5}
      Method & Acc. & Unfair. & Acc. & Unfair. \\
    \midrule
    LR & .871 $\pm$ .000 & .138 $\pm$ .000 & .871 $\pm$ .000 & .138 $\pm$ .000 \\
    \cmidrule(l){1-5}
    FC & .805 $\pm$ .006 & .040 $\pm$ .005 & .830 $\pm$ .001 & .119 $\pm$ .003 \\
    RW+FC & .847 $\pm$ .005 & .047 $\pm$ .007 & .852 $\pm$ .005 & .053 $\pm$ .004 \\
    \textbf{Ours}+FC & .831 $\pm$ .003 & \textbf{.005 $\pm$ .004} & .854 $\pm$ .004 & \textbf{.052 $\pm$ .002} \\
    \cmidrule(l){1-5}
    AD & .792 $\pm$ .012 & .048 $\pm$ .009 & .815 $\pm$ .008 & .133 $\pm$ .006 \\
    RW+AD & .836 $\pm$ .010 & .042 $\pm$ .008 & .849 $\pm$ .004 & \textbf{.054 $\pm$ .004} \\
    \textbf{Ours}+AD & .820 $\pm$ .006 & \textbf{.005 $\pm$ .003} & .847 $\pm$ .002 & .057 $\pm$ .007 \\
    \cmidrule(l){1-5}
    FB & .804 $\pm$ .001 & .051 $\pm$ .002 & .831 $\pm$ .001 & .128 $\pm$ .002 \\
    RW+FB & .844 $\pm$ .004 & .036 $\pm$ .003 & .853 $\pm$ .005 & .057 $\pm$ .011 \\
    \textbf{Ours}+FB & .824 $\pm$ .002 & \textbf{.015 $\pm$ .002} & .854 $\pm$ .003 & \textbf{.054 $\pm$ .003} \\
    \bottomrule
  \end{tabular}
  }
  \vspace{-0.2cm}
\end{table}

% \newpage

\subsection{\textcolor{black}{Accuracy and Fairness -- Tradeoff Curves}}
\label{appendix:tradeoff}

\textcolor{black}{Continuing from Sec.~\ref{sec:accfair}, we provide the accuracy-fairness disparity (DP) trade-off curves of FairBatch (FB) and Ours+FB on the the synthetic data in Figure~\ref{fig:tradeoff_synthetic}.
Our framework shows a better accuracy-fairness tradeoff compared to the in-processing approach (FB). FB shows an ``inversed'' curve, as it is trained with wrong biases in mind.}

\begin{figure}[h]
\centering
\includegraphics[width=0.48\columnwidth,trim=0cm 0.7cm 0cm 0cm]{}
\caption{Accuracy-fairness disparity trade-off curves of FB and Ours+FB on the synthetic data.}
% \vspace{0.2cm}
\label{fig:tradeoff_synthetic}
\end{figure}

% \vspace{-0.1cm}
\subsection{Using the Optional Step}
\label{appendix:optional_step}

% Continuing from Sec.~\ref{sec:min_dist_change}, we show the results of when using the optional step using $\texttt{MinDistChange}$ in Algorithm~\ref{alg:overall}.
\revision{Continuing from Sec.~\ref{sec:accfair},} we show the results of when using the optional step using $\texttt{MinDistChange}$ in Algorithm~\ref{alg:overall}.
\textcolor{black}{This step finds possibly-different data sample weights within each ($\ry, \rz$)-class to minimize the overall distribution change between the training and pre-processed data.
Figure~\ref{fig:min_dist_change} shows the Wasserstein distances between the original training data and pre-processed data when our algorithm uses either the basic or optional step, and the correlation constant $c$ ranges from 10\% to 70\% of the training data's correlation. 
As a result, the optional step generally reduces the Wasserstein distance between the training and pre-processed data distributions, especially for smaller $c$ values.}
% Also, when using the new data from the optional step as an input of the in-processing approaches, the final classifier's accuracy and fairness are not sacrificed much compared to using data from the basic step -- see Sec.~\ref{appendix:optional_step}.
In addition, Table~\ref{tbl:synthetic_optional_step} shows the performances on the synthetic test dataset when our algorithm uses either the basic step (Ours) or the optional step (Ours+Optional).
As a result, when using the new data from the optional step as an input of the in-processing approaches, the final classifier's accuracy and fairness are not sacrificed much compared to using data from the basic step, while minimally changing the data distribution.

\begin{figure}[h]
\centering
% \vspace{-0.2cm}
\includegraphics[width=0.5\columnwidth,trim=0cm 0.3cm 0cm 0cm]{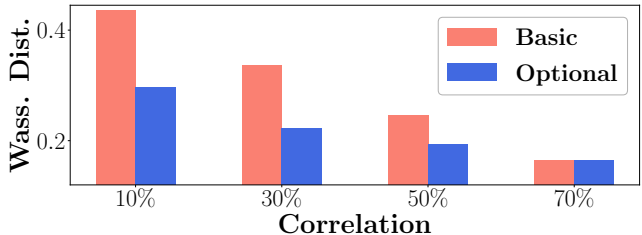}
\vspace{-0.4cm}
\caption{Wasserstein distances between the original training data and pre-processed data when our algorithm uses either the basic step or the optional step, while varying the target correlation to have 10\% to 70\% correlation of the training data.}
\label{fig:min_dist_change}
\end{figure}

\begin{table}[h]
\vspace{-0.5cm}
  \caption{Performances on the synthetic test dataset. Other settings are identical to those in Table~\ref{tbl:synthetic}.}
  \label{tbl:synthetic_optional_step}
  \centering
\scalebox{0.9}{
  \begin{tabular}{l@{\hspace{7pt}}c@{\hspace{7pt}}c@{\hspace{12pt}}}
    \toprule
      & \multicolumn{2}{c}{Single (DP)} \\
    \cmidrule(r){1-3}
      Method & Acc. & Unfair. \\
    \midrule
    \textbf{Ours}+FC & .849 $\pm$ .002 & .034 $\pm$ .004 \\
    \textbf{Ours+Optional}+FC & .830 $\pm$ .003 & .027 $\pm$ .005  \\
    \cmidrule(l){1-3}
    \textbf{Ours}+AD & .814 $\pm$ .011 & .017 $\pm$ .006  \\
    \textbf{Ours+Optional}+AD & .808 $\pm$ .007 & .014 $\pm$ .006  \\
    % \textbf{Ours+Optional}+AD & .811 $\pm$ .009 & .017 $\pm$ .004  \\
    \cmidrule(l){1-3}
    \textbf{Ours}+FB & .836 $\pm$ .001 & .003 $\pm$ .001  \\
    \textbf{Ours+Optional}+FB & .842 $\pm$ .002 & .010 $\pm$ .004  \\
    \bottomrule
  \end{tabular}
  }
\end{table}

% \subsection{Setting the Correlation Range to $c_\text{test}\pm10\%$}
% \label{appendix:c_range}

% \textcolor{black}{Continuing from Sec.~\ref{sec:varying_corr}, we evaluate our approach when $[\alpha, \beta] = [c_\text{test}-10\%, c_\text{test}+10\%]$. Figure~\ref{fig:correlation_range} shows the accuracy and fairness performances of FB and Ours+FB, while varying the correlation constant $c$ of the test data. Here, we set the \textit{specified} correlation range in our algorithm to $[\alpha, \beta] = [c_\text{test}-10\%, c_\text{test}+10\%]$. For example, when the $c_\text{test}$ is 50\%, we set $[\alpha, \beta]$ to $[40\%, 60\%]$ in our algorithm.
% As a result, we observe similar trends as in Figure~\ref{fig:varying_corr}, where our framework improves the accuracy and fairness performances of the in-processing-only baselines.}

% \begin{figure}[t]
% % \vspace{0.5cm}
% \centering
% \includegraphics[width=0.8\columnwidth,trim=0cm 0.3cm 0cm 0cm]{}
% \caption{\textcolor{black}{Performances of FB and Ours+FB on the synthetic data while varying the correlation of the test data $c_\text{test}$ to have 10\% to 70\% correlation of the training data. We set the \textit{specified} correlation range in our algorithm to $[\alpha, \beta] = [c_\text{test}-10\%, c_\text{test}+10\%]$.}}
% \label{fig:correlation_range}
% \vspace{0.5cm}
% \end{figure}

\vspace{-0.1cm}
\subsection{Varying the Correlation of the Test Data: More Biases}
\label{appendix:varying_corr_larger}
Continuing from Sec.~\ref{sec:varying_corr}, we vary the correlation of the test data to have more biases than the training data. Figure~\ref{fig:varying_corr_all} shows the accuracy and fairness performances of FB and Ours+FB, while varying the correlation constant $c$ of the test data up to 150\%. When the correlation increases more than 100\% of that of the training data, the in-processing-only baseline (FB) cannot improve fairness because the training data does not capture the bias level in the test data. Hence, the in-processing-only baseline cannot be used in applications that require high fairness. On the other hand, our pre-processing enables the in-processing approach to achieve the high fairness that it may need, with some accuracy degradation.
%which requires a slight accuracy degradation.
%In this case, the in-processing-only baseline cannot be used in applications that require high fairness.

\begin{figure}[h]
% \vspace{0.2cm}
\centering
\includegraphics[width=0.78\columnwidth,trim=0cm 0.3cm 0cm 0cm]{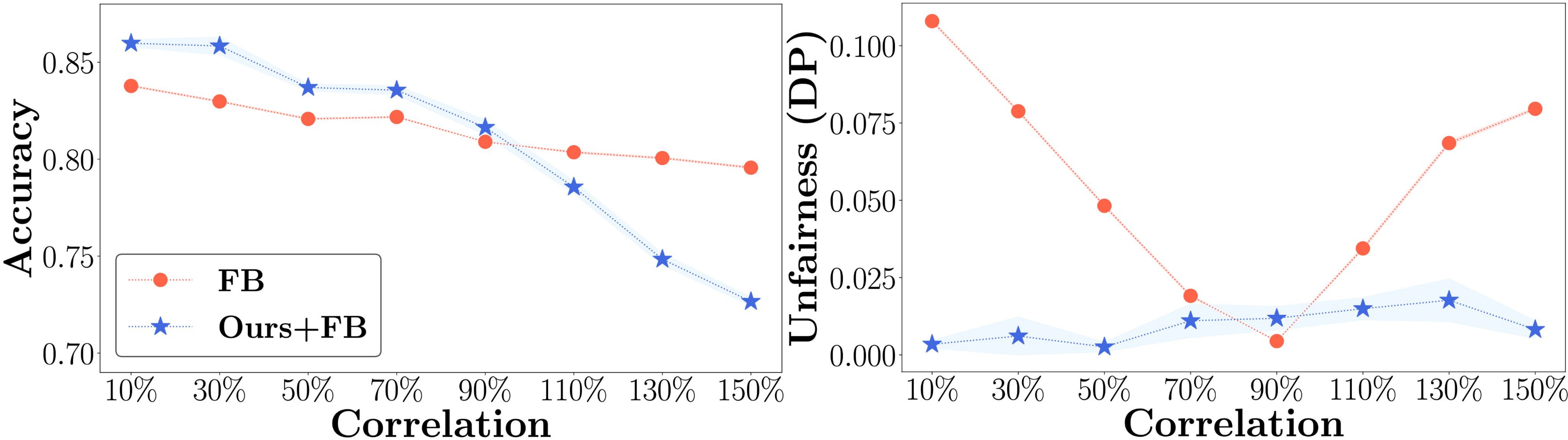}
\vspace{-0.2cm}
\caption{Performances of FB and Ours+FB on the synthetic data while varying the correlation of the test data to have 10\% to 150\% correlation of the training data.}
\label{fig:varying_corr_all}
% \vspace{0.5cm}
\end{figure}

\subsection{Accuracy and Fairness -- Other Test Data Construction: Real Data}
\label{appendix:newadult}

Continuing from Sec.~\ref{sec:realworld}, we compare the algorithm performances when using a different method to construct the test dataset of the real-data experiment. Table~\ref{tbl:new_adult} shows the accuracy and fairness performances when using the two income datasets collected in the 1990s~\citep{DBLP:conf/kdd/Kohavi96} (i.e., AdultCensus dataset used in Sec.~\ref{appendix:adultcensus}) and 2010s~\citep{ding2021retiring} (i.e., ACSIncome dataset) for training and testing, respectively. Here, the 1990s and 2010s datasets have the different ($\ry$, $\rz$)-correlation values.
% 0.214 and 0.150, respectively. 
\revision{In addition, the input feature $\rx$ distribution also slightly shifted, as shown in Figure~\ref{fig:pca_graph}.}
% \revision{We consider the scenario of accessing only 1\% of the 2010s data to measure the shifted correlation constant $c$. Note that accessing a small amount of the new data is a common setting in detecting drifts~\citep{lu2018learning}. With the 1\% data, we first estimate $c$ of the new data by calculating the difference between $\Pr(\ry|\rz)$ values, as defined in Sec.~\ref{sec:optimization}. Here, the estimated $c$ is 0.159, where the true $c$ is 0.147. We then run our pre-processing with this estimated $c$.}
% Note that the 1990s dataset is the same dataset used in Sec.~\ref{appendix:adultcensus}.

\revisionnew{In this real-world scenario, we first estimate the shifted correlation constant $c$ and range $[\alpha, \beta]$ by accessing some of the 2010s (deployment) data. Here, we show how the number of samples affects the shift range $[\alpha, \beta]$ using Theorem~\ref{thm:correlation}. Figure~\ref{fig:correlation_estimation} shows the estimated correlation values when varying the number of samples used in the estimation. The blue dots indicate the estimated correlation constant $c$ values that are inferred by the maximum likelihood estimation (MLE). The bars show the error ranges of each estimation with 90\% probability, which can be calculated using Theorem~\ref{thm:correlation} -- see details in Sec.~\ref{appendix:estimate_corr}. 
% Here, the true $c$ of the 2010s data is 0.147. 
As discussed in Theorem~\ref{thm:correlation}, a larger number of samples yields a smaller range. In addition, although the range is large when the number of samples decreases (e.g., $m{=}100$), the point estimate value for $c$ is close to the true value.
}

\revisionnew{We then run our pre-processing by setting $[\alpha, \beta]$ with the error range of the estimation. We choose the values when the number of samples is 1,000 ($\simeq$ 0.07\%), as it has a reasonable error range with a suitable number of samples.}
As a result, we observe consistent results in Table~\ref{tbl:new_adult} where our framework improves the accuracy and fairness performances of the in-processing-only baselines.

% We now perform a real-world experiment to show 1) the performance of measuring the correlation shifts when the amount of new data is very small and 2) how our framework works even when the $\rx$ distribution changes. To this end, we use the two real-world income datasets collected in the 1990s (i.e., AdultCensus) and 2010s (i.e., ACSIncome) for training and testing, respectively. These two datasets have not only different ($\ry$, $\rz$)-correlations but also the shifted input feature $\rx$ distributions, as analyzed in Sec.~\ref{appendix:newadult}. We consider the scenario of accessing only 1\% of the 2010s data to measure the shifted correlation constant $c$. With the 1\% data, we first estimate $c$ of the new data by calculating the difference between $\Pr(\ry|\rz)$ values, as defined in Sec.~\ref{sec:optimization}.
% Here, the estimated $c$ is 0.159, where the true $c$ is 0.147. Then, we run our pre-processing with this estimated $c$, and our framework successfully improves the performances of the in-processing baselines (Table~\ref{tbl:new_adult}). This real-world scenario shows \textbf{two takeaways}: 1) one can estimate the shift via a very small amount (say 1\%) of new data, and 2) our method performs well even when the $\rx$ distribution shifts.

\begin{figure}[h]
\centering
% \vspace{-0.5cm}
\includegraphics[width=0.5\columnwidth,trim=0cm 0.7cm 0cm 0cm]{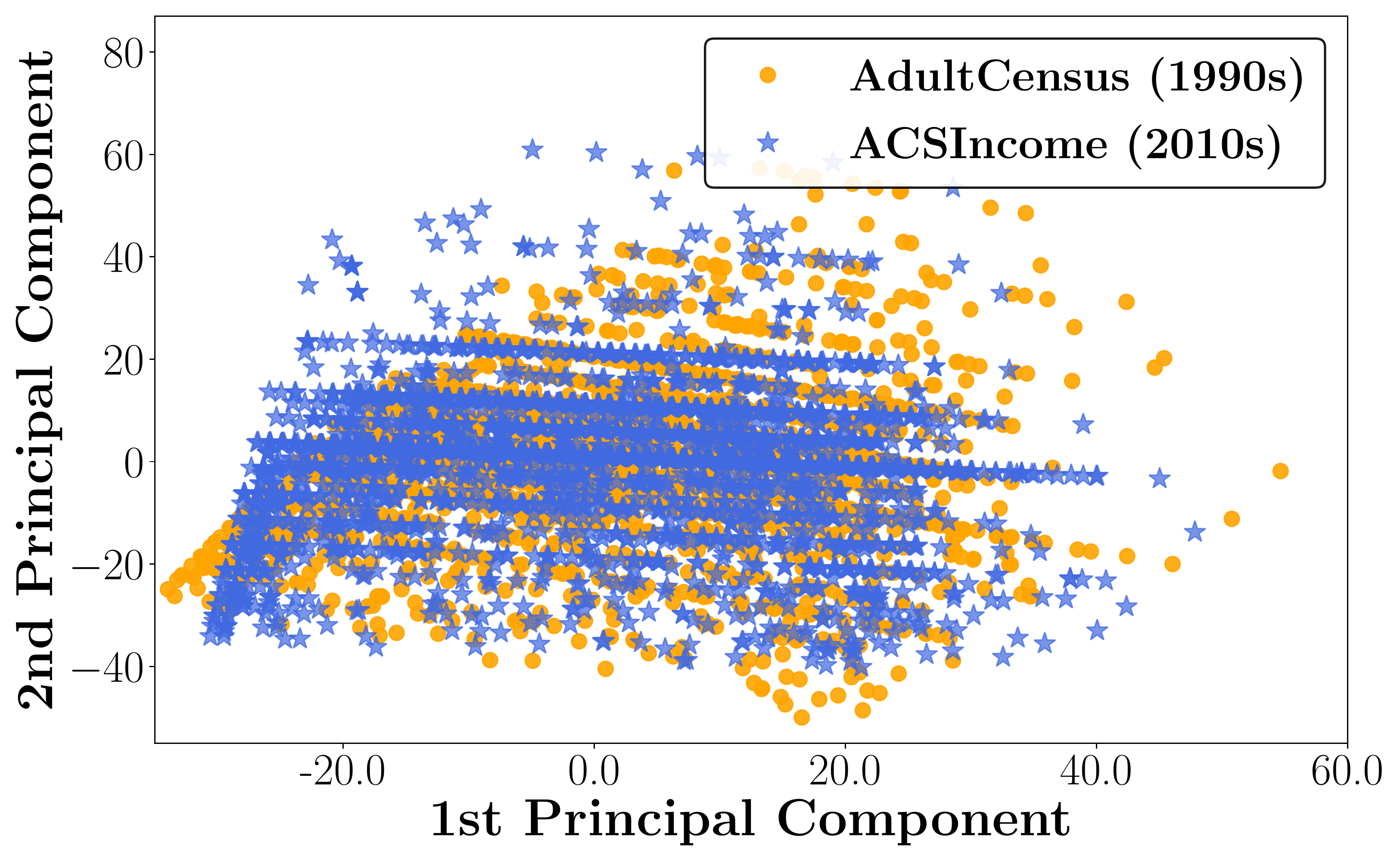}
\caption{\revision{Principal component analysis (PCA) results on the input feature $\rx$ of the two US income datasets: AdultCensus~\citep{DBLP:conf/kdd/Kohavi96} and ACSIncome~\citep{ding2021retiring}. Here, the distributions of the principal components on input feature $\rx$ are different in the two datasets.}}
\vspace{0.2cm}
\label{fig:pca_graph}
\end{figure}

\begin{figure}[h]
\centering
% \vspace{-0.5cm}
\includegraphics[width=0.5\columnwidth,trim=0cm 0.7cm 0cm 0cm]{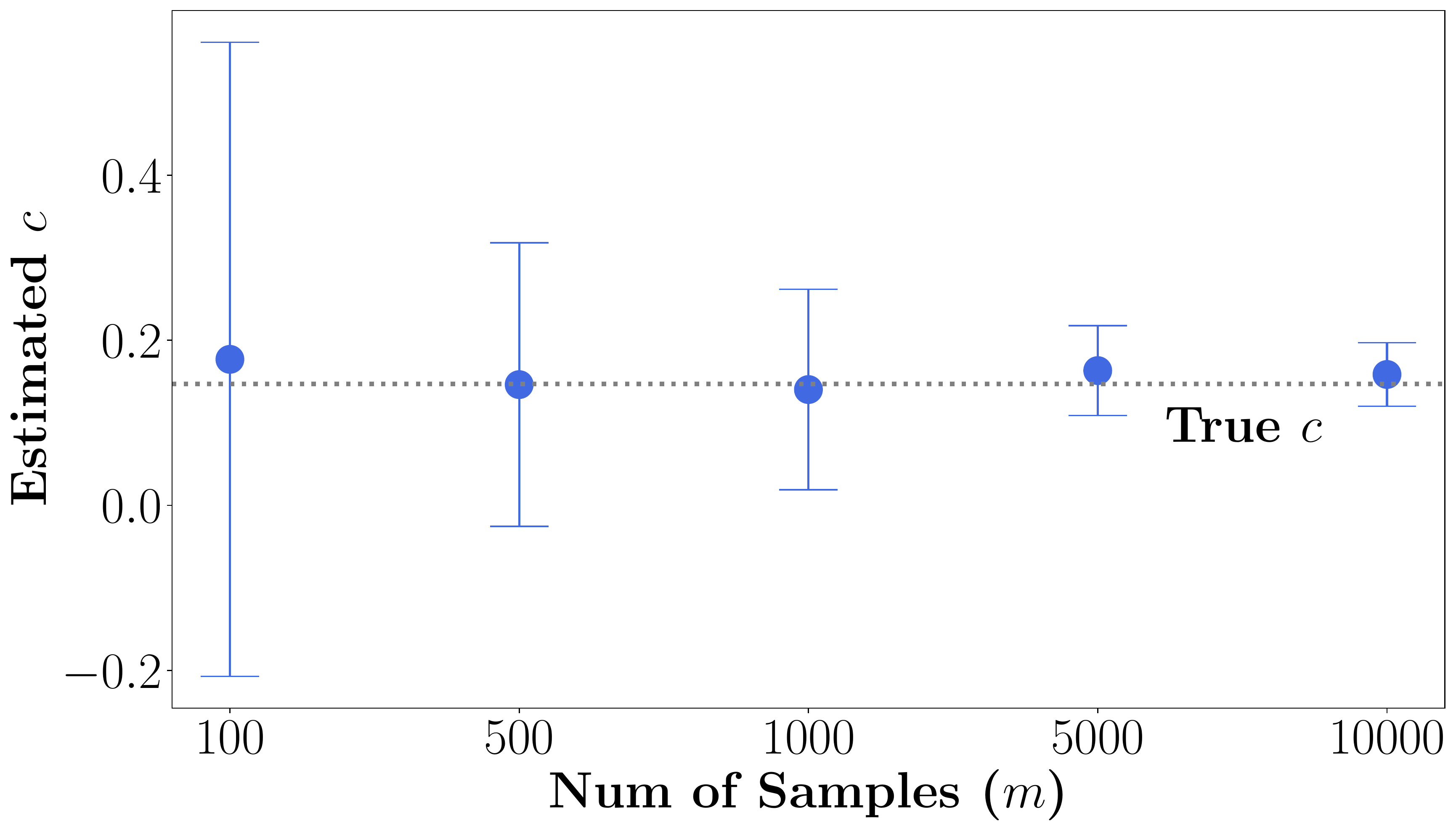}
\caption{\revisionnew{Estimating the correlation of the ACSIncome (deployment) data when varying the number of samples used in the estimation. The blue dots indicate the estimated correlation constant $c$ values that are inferred by the maximum likelihood estimation (MLE). The bars indicate the error ranges of each estimation with 90\% probability, which can be calculated using Theorem~\ref{thm:correlation}. The error ranges can be used as $[\alpha, \beta]$ in our pre-processing step. Here, the true $c$ of ACSIncome is 0.147.}}
\vspace{0.2cm}
\label{fig:correlation_estimation}
\end{figure}

\begin{table}[h]
  \caption{Model performances on the ACSIncome test data ~\citep{ding2021retiring} collected in the 2010s. The models are trained on the AdultCensus data~\citep{DBLP:conf/kdd/Kohavi96} collected in the 1990s. Both datasets have labels that indicate each person's annual income. \revision{In our pre-processing, we use the estimated correlation range $[\alpha, \beta]$ that is inferred by using 1,000 ($\simeq$ 0.07\%) samples of the ACSIncome data. 
%   Here, the estimated $c$ of ACSIncome (i.e., test data) is 0.159, where the true $c$ of ACSIncome is 0.147. Note that the $c$ of AdultCensus (i.e., training data) is 0.196.
  }}
  \label{tbl:new_adult}
  \centering
%   \vspace{-0.1cm}
\scalebox{0.87}{
  \begin{tabular}{l@{\hspace{7pt}}c@{\hspace{7pt}}c@{\hspace{12pt}}}
    \toprule
      & \multicolumn{2}{c}{Single (DP)} \\
    \cmidrule(r){1-3}
      Method & Acc. & Unfair. \\
    \midrule
    FC & .646 $\pm$ .013 & .104 $\pm$ .015 \\
    \textbf{Ours}+FC & .665 $\pm$ .010 & .079 $\pm$ .011  \\
    % \textbf{Ours}+FC & .652 $\pm$ .015 & .090 $\pm$ .017  \\
    \cmidrule(l){1-3}
    AD & .637 $\pm$ .031 & .137 $\pm$ .037  \\
    \textbf{Ours}+AD & .638 $\pm$ .023 & .051 $\pm$ .048  \\
    % \textbf{Ours}+AD & .644 $\pm$ .024 & .060 $\pm$ .039  \\
    \cmidrule(l){1-3}
    FB & .643 $\pm$ .007 & .128 $\pm$ .021  \\
    \textbf{Ours}+FB & .694 $\pm$ .002 & .068 $\pm$ .006  \\
    % \textbf{Ours}+FB & .672 $\pm$ .005 & .092 $\pm$ .007  \\
    % \textbf{Ours}+FB & .670 $\pm$ .004 & .098 $\pm$ .002  \\
    \cmidrule(l){1-3}
    {\em FB on test dist. (upper bound)} & .704 $\pm$ .007 & .044 $\pm$ .035 \\
    \bottomrule
  \end{tabular}
  }
\end{table}

\vspace{-0.1cm}
\subsection{Setting the Correlation Range to $c_\text{test}\pm x\%$}
\label{appendix:c_range}

Continuing from Sec.~\ref{sec:misspecifying_corr}, we evaluate our approach when $[\alpha, \beta] = [c_\text{test}-x\%, c_\text{test}+x\%]$. Figure~\ref{fig:correlation_range} shows the accuracy and fairness performances of FB and Ours+FB, while varying the correlation constant $c$ of the test data. Here, we set the \textit{specified} correlation range in our algorithm to \textcolor{black}{$[\alpha, \beta] = [c_\text{test}-x\%, c_\text{test}+x\%]$, where $x \in \{10, 50, 100\}$}. 
% For example, when the $c_\text{test}$ is 50\%, we set $[\alpha, \beta]$ to $[40\%, 60\%]$ in our algorithm.
\textcolor{black}{When $x = 10\%$,} we observe similar trends as in Figure~\ref{fig:varying_corr}, where our framework improves the accuracy and fairness performances of the in-processing-only baselines.
\textcolor{black}{When $x = 100\%$ (i.e., the worst-case setting of $[\alpha, \beta]$), the accuracy and fairness performances of our framework converge to the in-processing-only baselines. 
When $x = 50\%$, the performances of our framework are in between those of when $x = 10\%$ and $x = 100\%$. 
There are two takeaways: 1) our framework successfully boosts the in-processing-only baseline performances when the $[\alpha, \beta]$ range is reasonable, and 2) even if we do not have any information about the correlation shift, our framework performs at least as well as the in-processing-only baselines.}

\begin{figure}[h]
\vspace{0.4cm}
\centering
\includegraphics[width=0.8\columnwidth,trim=0cm 0.3cm 0cm 0cm]{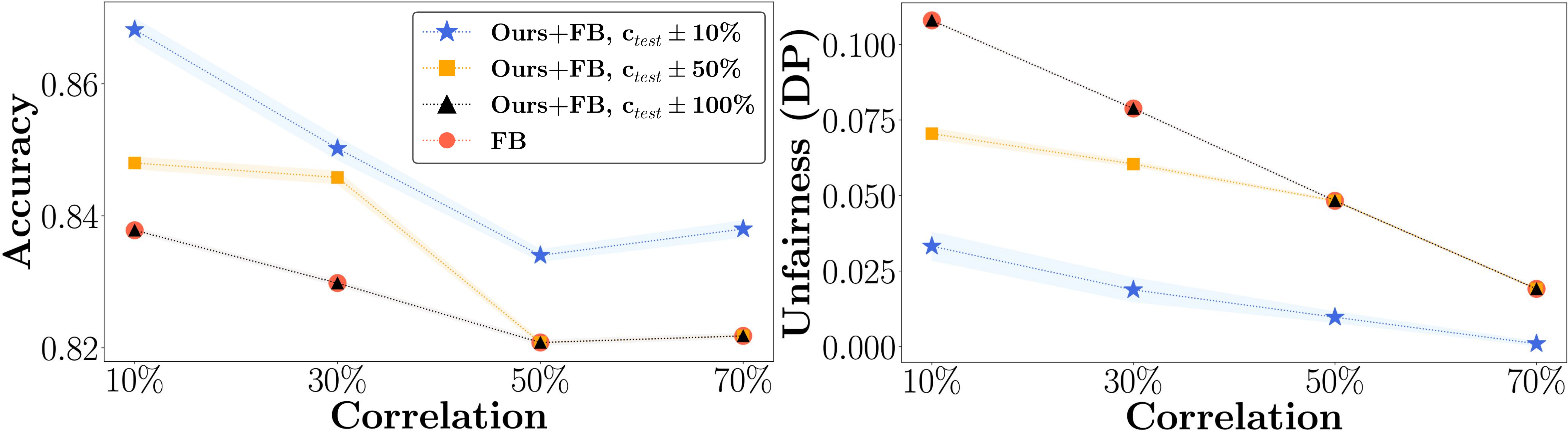}
\caption{Performances of FB and Ours+FB on the synthetic data while varying the correlation of the test data $c_\text{test}$ to have 10\% to 70\% correlation of the training data. We set the \textit{specified} correlation range in our algorithm to $[\alpha, \beta] = [c_\text{test}-x\%, c_\text{test}+x\%]$, where $x \in \{10, 50, 100\}$.}
\label{fig:correlation_range}
% \vspace{0.5cm}
\end{figure}

\subsection{\revision{Robust Against Misspecified Correlations: Real Data}}
\label{appendix:unknown_corr_compas}

\revision{Continuing from Sec.~\ref{sec:misspecifying_corr}, we evaluate our approach when misspecifying the correlation in our algorithm on the real-world COMPAS dataset~\citep{Compas}. Same with Sec.~\ref{sec:misspecifying_corr}, we set $\alpha = \beta = c_\text{specified}$, where $c_\text{specified} \neq c_\text{test}$. Figure~\ref{fig:wrong_corr_compas} shows the accuracy and fairness performances of FB and Ours+FB when the true correlation of the test data is 60\% of the training data's correlation. While achieving similar or higher accuracies compared to the in-processing-only baseline for the entire range of considered correlations, our framework improves the fairness of the in-processing-only baseline (FB) when the specified correlation is higher than 30\%.} 
\revision{This result shows that our approach is still beneficial when the estimation error is reasonable (say $\pm$10\%).}
\revisionnew{We also note that when the specified correlation is far from the true value, our framework may not improve the performances of the baseline.}

\begin{figure}[h]
\centering
\includegraphics[width=0.8\columnwidth,trim=0cm 0.9cm 0cm 0cm]{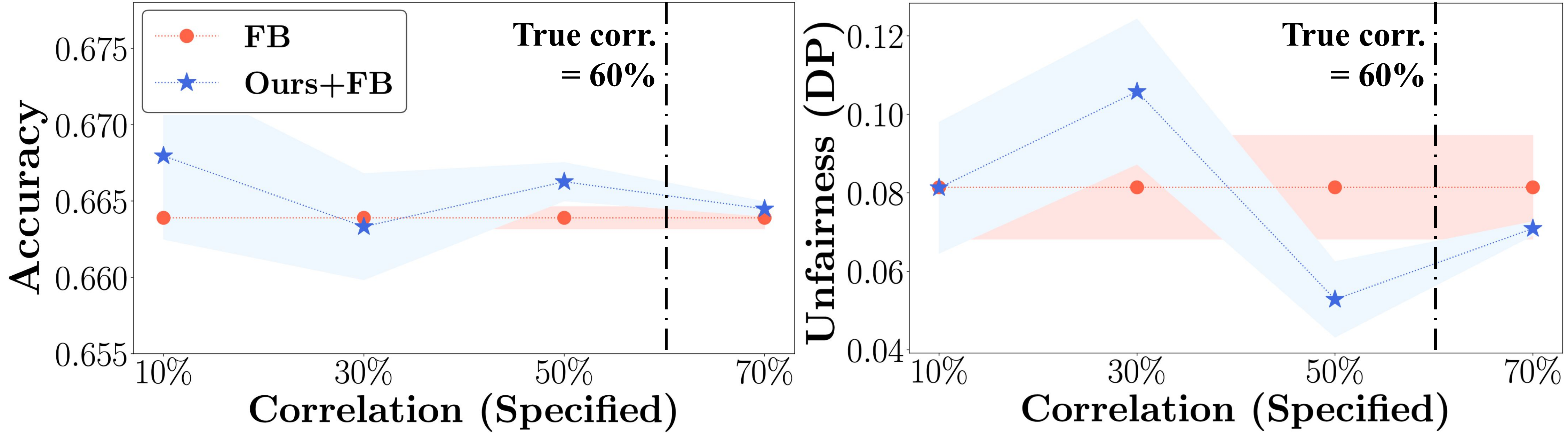}
\caption{\small \revision{Performances of FB and Ours+FB on the COMPAS data while varying the \textit{specified} correlation in our algorithm to have 10\% to 70\% where the true correlation of the test data is 60\%.}}
\label{fig:wrong_corr_compas}
\end{figure}

\subsection{\revision{Empirical Comparison with \citet{giguere2022fairness}}}
\label{appendix:shifty}

\revision{Continuing from Sec.~\ref{sec:relatedwork}, we add a new baseline called Shifty~\citep{giguere2022fairness}, which focuses on the distribution shift most relevant to ours. Shifty first trains candidate models on the training data and selects only the models showing high fairness in the shifted deployment data. 
To give a favorable condition to Shifty, we assume that Shifty knows the exact test distribution. Table~\ref{tbl:shifty} shows the accuracy and fairness performances of the in-processing-only baseline FairBatch, Shifty, and our framework w.r.t.\@ a single metric (DP) and multiple metrics (DP \& EO) in the synthetic and COMPAS datasets used in Tables~\ref{tbl:synthetic} and~\ref{tbl:compas}. 
As a result, both ours and Shifty improve the fairness of the in-processing-only baseline, but ours shows better fairness than Shifty while achieving similar or higher accuracy.
% As a result, both ours and Shifty achieve higher fairness compared to the in-processing-only baseline, but ours shows better accuracy than Shifty.
The reason is that Shifty is selecting the final model among the candidates that were already trained on the original training data, whereas ours trains a new model on the improved (pre-processed) data.}
%We suspect that Shifty is selecting the final model among the candidates already learned in the original training data, whereas ours allows training a new model on the improved (pre-processed) data.}

% \begin{table}[h]
%   \caption{\revision{Performances on the synthetic test dataset. We compare in-processing-only baseline FairBatch~\citep{roh2021fairbatch}, our framework, and Shifty~\citep{giguere2022fairness}. Other settings are identical to those in Table~\ref{tbl:synthetic_compas}.}}
%   \label{tbl:shifty}
%   \centering
% \scalebox{0.9}{
%   \begin{tabular}{l@{\hspace{12pt}}c@{\hspace{12pt}}c}
%     \toprule
%       Method & Acc. & \breakcell{Unfair.\\(DP)} \\
%     \cmidrule(r){1-1}\cmidrule(r){2-3}
%     FB & 0.xxx & 0.xxx \\
%     \textbf{Ours} + FB & 0.xxx & 0.xxx \\
%     Shifty & 0.xxx & 0.xxx \\
%     \bottomrule
%   \end{tabular}
%   }
% \end{table}

\begin{table}[h]
\vspace{-0.1cm}
  \caption{\revision{Performances on the synthetic and COMPAS test datasets. We compare in-processing-only baseline FairBatch~\citep{roh2021fairbatch}, Shifty~\citep{giguere2022fairness}, and our framework. Other settings are identical to those in Table~\ref{tbl:synthetic}.}}
  \label{tbl:shifty}
  \centering
\scalebox{0.85}{
  \begin{tabular}{l@{\hspace{12pt}}c@{\hspace{7pt}}c@{\hspace{12pt}}c@{\hspace{7pt}}c@{\hspace{15pt}}c@{\hspace{7pt}}c@{\hspace{12pt}}c@{\hspace{7pt}}c}
    \toprule
    & \multicolumn{4}{c}{Synthetic} & \multicolumn{4}{c}{COMPAS}\\
    \cmidrule(r){1-9}
      & \multicolumn{2}{c}{Single (DP)} & \multicolumn{2}{c}{Multiple (DP \& EO)} & \multicolumn{2}{c}{Single (DP)} & \multicolumn{2}{c}{Multiple (DP \& EO)}\\
    \cmidrule(r){1-9}
      Method & Acc. & Unfair. & Acc. & Unfair. & Acc. & Unfair. & Acc. & Unfair. \\
    \midrule
    FB~\citep{roh2021fairbatch} & .821 $\pm$ .000 & .048 $\pm$ .000 & .849 $\pm$ .001 & .091 $\pm$ .005 & .647 $\pm$ .001 & .038 $\pm$ .013 & .650 $\pm$ .002 & .187 $\pm$ .019 \\
    FB + Shifty & \textbf{.838 $\pm$ .001} & .024 $\pm$ .008 & .844 $\pm$ .003 & .063 $\pm$ .008 & .647 $\pm$ .001 & .029 $\pm$ .001 & .649 $\pm$ .002 & .162 $\pm$ .025\\
    \textbf{Ours}+FB & .836 $\pm$ .001 & \textbf{.003 $\pm$ .001} & \textbf{.852 $\pm$ .004} & \textbf{.058 $\pm$ .001} & \textbf{.648 $\pm$ .004} & \textbf{.027 $\pm$ .001} & \textbf{.657 $\pm$ .004} & \textbf{.130 $\pm$ .014}\\
    \bottomrule
  \end{tabular}
  }
\end{table}

\subsection{\yuji{Supporting Noisy Group Attributes}}
\label{appendix:noisy}

\yuji{Continuing from Sec.~\ref{sec:relatedwork}, we evaluate the performance of our method on increasingly noisy data and observe that our method can potentially be extended to the noisy group scenario.}

\yuji{We perform a new experiment, where 10--50\% of the group information in the training data is randomly flipped. Table~\ref{tbl:noisy_group} shows the accuracy and fairness performances of the in-processing-only baseline FairBatch (FB) and our framework (Ours+FB) w.r.t.\@ demographic parity (DP). Here, FB cannot achieve high fairness performance (i.e., low DP disp.), as the group distribution in the training data is different from that in the test data. In comparison, our framework enables FB to achieve relatively high fairness at the expense of some accuracy degradation.}

\yuji{We note that these results are preliminary and only show that our method has some potential to support noisy group attributes, and we believe there is plenty of room for improvement. In particular, we believe our method can be further extended with other robust training methods like \citep{pmlr-v97-shen19e}.}

\begin{table}[h]
\vspace{-0.2cm}
  \caption{\yuji{Performances on the synthetic test dataset when the group information in the training data is randomly flipped. We train the algorithms w.r.t.\@ demographic parity (DP).}}
  \label{tbl:noisy_group}
  \centering
\scalebox{0.9}{
  \begin{tabular}{l@{\hspace{12pt}}c@{\hspace{12pt}}c@{\hspace{12pt}}c@{\hspace{12pt}}c@{\hspace{12pt}}c@{\hspace{12pt}}c@{\hspace{12pt}}}
    \toprule
      & \multicolumn{2}{c}{10\% flipping} & \multicolumn{2}{c}{30\% flipping} & \multicolumn{2}{c}{50\% flipping} \\
    \cmidrule(r){2-3}\cmidrule(r){4-5}\cmidrule(r){6-7}
      Method & Acc. & \breakcell{Unfair.\\(DP)} & Acc. & \breakcell{Unfair.\\(DP)} & Acc. & \breakcell{Unfair.\\(DP)} \\
    \cmidrule(r){1-1}\cmidrule(r){2-3}\cmidrule(r){4-5}\cmidrule(r){6-7}
    FB & 0.818 & 0.087 & 0.864 & 0.165 & 0.883 & 0.250 \\
    \textbf{Ours} + FB & 0.808 & 0.069 & 0.823 & 0.101 & 0.838 & 0.142 \\
    \bottomrule
  \end{tabular}
  }
\end{table}

\subsection{\revision{Supporting Poisoning Attack Scenario}}
\label{appendix:poisoning}

\revision{Continuing from Sec.~\ref{sec:relatedwork}, we evaluate the performance of our method on poisoned data and observe that our method can potentially be extended to the poisoning attack scenario.}

\revision{We perform a new experiment using one of the poisoning attack methods for fair training~\citep{pmlr-v119-roh20a}, where 10\% of the training labels of a specific group are flipped so as to maximize the accuracy degradation. Table~\ref{tbl:poisoning} shows the accuracy and fairness performances of the in-processing-only baseline FairBatch (FB) and our framework w.r.t.\@ demographic parity (DP) on the poisoned data; we also report FB's performance on the clean data, which can be considered as the upper-bound performance. Here, when achieving similar fairness, our framework improves the accuracy of FB on the poisoned data. We suspect that, while our pre-processing reduces the correlation shift between training and deployment data, the effect of the poisoning attack is also mitigated.}

\revision{We note that these results are preliminary and only show that our method has some potential to support the poisoning attack scenario, and we believe there is plenty of room for improvement.}

\begin{table}[h]
\vspace{-0.2cm}
  \caption{\revision{Performances on the synthetic test dataset when 10\% of the training labels are poisoned~\citep{pmlr-v119-roh20a}. We train the algorithms w.r.t.\@ demographic parity (DP).}}
  \label{tbl:poisoning}
  \centering
\scalebox{0.9}{
  \begin{tabular}{l@{\hspace{12pt}}c@{\hspace{12pt}}c}
    \toprule
      Method & Acc. & \breakcell{Unfair. (DP)} \\
    \cmidrule(l){1-3}
    FB & 0.699 & 0.026 \\
    \textbf{Ours} + FB & 0.721 & 0.024 \\
    \cmidrule(l){1-3}
    {\em FB on the clean data (upper bound)} & 0.773 & 0.023 \\
    \bottomrule
  \end{tabular}
  }
\end{table}

% \vspace{1cm}
% \newpage
\section{Appendix -- More Related Work}
\label{appendix:related_work}

\vspace{0.2cm}
\subsection{\revision{Traditional Model Fairness}}
\label{appendix:relatedwork_traditional_fairness}

As model fairness becomes essential for Trustworthy AI, various fairness definitions have been proposed to reflect legal and social requirements~\citep{narayanan2018translation}. 
Among the definitions, we focus on group fairness, which aims to not discriminate specific groups. There are three prominent group fairness metrics: demographic parity~\citep{DBLP:conf/kdd/FeldmanFMSV15}, equalized odds~\citep{DBLP:conf/nips/HardtPNS16}, and predictive parity~\citep{berk2021fairness}.
To support the fairness metrics, various fairness techniques have been proposed, which can be categorized into three prominent approaches: (1) pre-processing approaches~\citep{DBLP:journals/kais/KamiranC11, DBLP:conf/icml/ZemelWSPD13, DBLP:conf/kdd/FeldmanFMSV15, DBLP:conf/nips/CalmonWVRV17, choi2020fair, pmlr-v108-jiang20a}, 
which debias, reweight, or generate training data, (2) in-processing approaches~\citep{DBLP:conf/aistats/ZafarVGG17, DBLP:conf/www/ZafarVGG17, DBLP:conf/icml/AgarwalBD0W18, DBLP:conf/aies/ZhangLM18, DBLP:conf/alt/CotterJS19, pmlr-v119-roh20a, roh2021fairbatch},
which modify model training itself for fairness, and (3) post-processing approaches~\citep{DBLP:conf/icdm/KamiranKZ12, DBLP:conf/nips/HardtPNS16, DBLP:conf/nips/PleissRWKW17, NIPS2019_9437},
which manipulate only the model outputs without changing the training inside.
Among the three categories, in-processing approaches are widely used for unfairness mitigation, but most of them assume that the training and deployment data distributions are the same.

Another line of research is to support multiple fairness metrics~\citep{thomas2019preventing, Zhao2020Conditional}. \citet{thomas2019preventing} proposes a fairness testing framework that can support multiple metrics. \citet{Zhao2020Conditional} shows that EO can be achieved while preserving the original DP. In comparison, we analyze when a model can achieve both $\varepsilon$-DP and $\varepsilon$-EO.

Beyond group fairness, there are other noteworthy fairness definitions including individual fairness~\citep{DBLP:conf/innovations/DworkHPRZ12}, which aims to give similar predictions to similar individuals, and causality-based fairness~\citep{10.5555/3294771.3294834, NIPS2017_6995, Zhang2018FairnessID}, which aims to improve fairness by understanding the causal relationship between attributes. Extending our analysis and framework to these definitions is an interesting future work.

\vspace{0.2cm}
\subsection{\revision{Fairness under Data Distribution Shifts}}
\label{appendix:relatedwork_distribution_shifts}

\revision{Continuing from Sec.~\ref{sec:relatedwork}, we further compare our work with the previous studies on data distribution shifts. The following paragraphs contain detailed discussions of the two categories of distribution shifts: general distribution shifts and fairness-specific shifts.}

\revision{Among the general distribution shifts (i.e., covariate, label, and concept shifts in Table~\ref{tbl:related_work}), the concept shift is the most relevant definition to the correlation shift. As shown in Table~\ref{tbl:related_work} of Sec.~\ref{sec:relatedwork}, the correlation and concept shifts focus on $\Pr(\rz|\ry)$ and $\Pr(\ry|\rx)$, respectively. As the concept shifts consider the distribution changes of the label ($\ry$) and input feature ($\rx$), this type of shift can implicitly describe the correlation shifts when the input feature contains the group attribute ($\rz$). However, a unique characteristic of the correlation shifts is to explicitly capture the bias changes between $\ry$ and $\rz$, where $\rz$ is especially relevant to fair training. Thus, the notion of correlation shifts enables us to analyze the behavior of fair training when the bias changes.
}

\revision{In addition, the key difference between the correlation shift and the other fairness-specific shifts (subpopulation and demographic shifts) is that the other shifts are defined under specific assumptions on the data distribution, which are not required in our correlation shift definition. Here are the assumptions in the subpopulation and demographic shifts:
\vspace{-0.1cm}
\begin{itemize}[topsep=0ex,partopsep=0ex,leftmargin=3mm,itemsep=0ex]
    \item The subpopulation shift~\citep{maity2021does} assumes that the loss expectations w.r.t. input feature X and label Y of training and deployment distributions are the same (i.e., $E_\text{train}[(h(\rx), \ry) | \rz=z] = E_\text{test}[(h(\rx), \ry) | \rz=z]$, where $E$ indicates the expectation). Thus, as discussed in the related work section, a major example of subpopulation shifts is when a specific group has fewer positively-labeled examples during training time compared to deployment time, while the distributions of the $\rx$ and $\ry$ attributes remain the same. \vspace{-0.1cm}
    \item Similarly, the demographic shift~\citep{giguere2022fairness} assumes that the joint probabilities of $\rx$ and $\ry$ on the training and deployment distributions are identical (i.e., $\Pr_\text{train}(\rx=x, \ry=y | \rz=z) = \Pr_\text{test}(\rx=x, \ry=y | \rz=z)$). This assumption is similar to the assumption of the subpopulation shift, but the difference is whether the loss values of the model are explicitly considered or not.
\end{itemize}
\vspace{-0.1cm}
The above assumptions are used in the theoretical analyses of the previous works~\citep{maity2021does, giguere2022fairness}. In comparison, our theoretical analyses in Sec.~\ref{sec:correlationshift} of using correlation shifts are not limited by any additional assumptions, and we show that the achievable performances of fair training are determined by the (y, z)-correlation.
}

\vspace{0.3cm}
\subsection{\revision{Connection to Causality-based Fairness}}
\label{appendix:causality}

\revision{Continuing from Sec.~\ref{sec:relatedwork}, we discuss how our work can be connected with causality-based fairness. To this end, we give a concrete example to show the correlation shifts when the sensitive group attribute itself is a confounder, one of the important roles of attributes in causality-based fairness.}

\revision{We consider a car insurance example, where the goal is to predict future accident rates from the past driving record: suppose that input feature $\rx$ is the past driving record, label $\ry$ is the future accident rate, and sensitive attribute $\rz$ is the race.}

\revision{Then, the structural causality graph between them would look like: $\rz$ $\rightarrow$ ($\rx$, $\ry$) and $\rx$ $\rightarrow$ $\ry$, i.e., $\rz$ being a confounder (Figure~\ref{fig:causal_graph}). The arrows from $\rz$ to $\rx$ and $\ry$ could be due to (a) different socialization and driving behavior formation process that partially depends on racial identity and (b) another unobserved mediator such as where they live in, etc. In this example where $\rz$ is a confounder, it is clear that $\rz$ and $\ry$ are correlated.}

\revision{We expect this confounding effect to change over time. This will make the sensitive attribute $\rz$ either a stronger or weaker confounding factor, which impacts the correlation between $\ry$ and $\rz$.
}

\begin{figure}[h]
\centering
\includegraphics[width=0.3\columnwidth,trim=0cm 0.3cm 0cm 0cm]{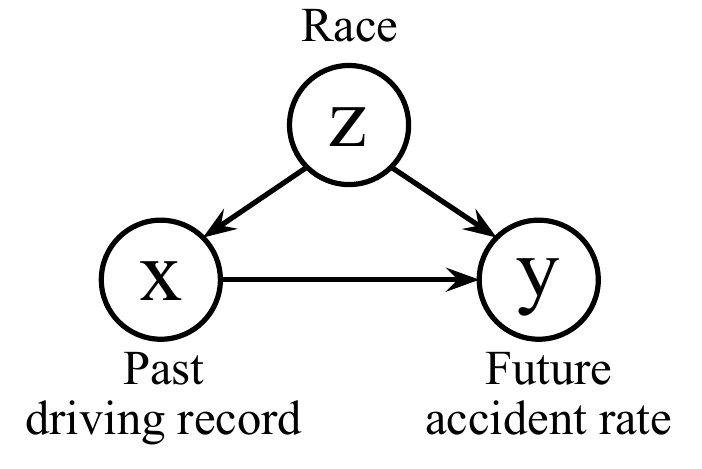}
\caption{\revision{A causality graph for the car insurance example in Sec.~\ref{appendix:causality}.}}
\label{fig:causal_graph}
\end{figure}

\end{document}